\DeclareMathOperator*{\argmax}{arg\,max}
\newcommand{\E}{\operatorname{\mathbb E}}
\newcommand{\innermid}{\;\middle\lvert\;}
\newtheorem{lemma}{Lemma}
\newtheorem{remark}{Remark}
\newtheorem{corollary}{Corollary}
\newtheorem{theorem}{Theorem}
\newtheorem{proposition}{Proposition}
\newtheorem{assumption}{Assumption}
\newenvironment{subproof}[1][\proofname]{%
  \begin{proof}[#1]%
}{%
  \end{proof}%
}
\newtheorem{altassumption}{Assumption}[assumption]
\newenvironment{assumption+}[1]
  {%
   \begin{altassumption}}
  {\end{altassumption}}
\newenvironment{assumption*}
 {\ifnum \value{subassumption}=0
 \stepcounter{assumption}\fi\subassumption}
 {\endsubassumption}
\newcommand{\cmark}{\ding{51}}%
\newcommand{\xmark}{\ding{55}}%
\icmltitlerunning{Major-Minor Mean Field Multi-Agent Reinforcement Learning}
\begin{document}

\twocolumn[
\icmltitle{Major-Minor Mean Field Multi-Agent Reinforcement Learning}
           
% It is OKAY to include author information, even for blind
% submissions: the style file will automatically remove it for you
% unless you've provided the [accepted] option to the icml2024
% package.

% List of affiliations: The first argument should be a (short)
% identifier you will use later to specify author affiliations
% Academic affiliations should list Department, University, City, Region, Country
% Industry affiliations should list Company, City, Region, Country

% You can specify symbols, otherwise they are numbered in order.
% Ideally, you should not use this facility. Affiliations will be numbered
% in order of appearance and this is the preferred way.
% \icmlsetsymbol{equal}{*}

\begin{icmlauthorlist}
\icmlauthor{Kai Cui}{yyy}
\icmlauthor{Christian Fabian}{yyy}
\icmlauthor{Anam Tahir}{yyy}
\icmlauthor{Heinz Koeppl}{yyy}
\end{icmlauthorlist}

\icmlaffiliation{yyy}{Department of Electrical Engineering and Information Technology, Technische Universität Darmstadt, Darmstadt, Germany}

\icmlcorrespondingauthor{Kai Cui}{kai.cui@tu-darmstadt.de}
\icmlcorrespondingauthor{Heinz Koeppl}{heinz.koeppl@tu-darmstadt.de}

% You may provide any keywords that you
% find helpful for describing your paper; these are used to populate
% the "keywords" metadata in the PDF but will not be shown in the document
\icmlkeywords{Multi-Agent Reinforcement Learning, Mean Field Control, Large-Scale Multi-Agent Systems}

\vskip 0.3in
]

% this must go after the closing bracket ] following \twocolumn[ ...

% This command actually creates the footnote in the first column
% listing the affiliations and the copyright notice.
% The command takes one argument, which is text to display at the start of the footnote.
% The \icmlEqualContribution command is standard text for equal contribution.
% Remove it (just {}) if you do not need this facility.

\printAffiliationsAndNotice{}  % leave blank if no need to mention equal contribution
% \printAffiliationsAndNotice{\icmlEqualContribution} % otherwise use the standard text.

\begin{abstract}
Multi-agent reinforcement learning (MARL) remains difficult to scale to many agents. Recent MARL using Mean Field Control (MFC) provides a tractable and rigorous approach to otherwise difficult cooperative MARL. However, the strict MFC assumption of many independent, weakly-interacting agents is too inflexible in practice. We generalize MFC to instead simultaneously model many similar and few complex agents -- as Major-Minor Mean Field Control (M3FC). Theoretically, we give approximation results for finite agent control, and verify the sufficiency of stationary policies for optimality together with a dynamic programming principle. Algorithmically, we propose Major-Minor Mean Field MARL (M3FMARL) for finite agent systems instead of the limiting system. The algorithm is shown to approximate the policy gradient of the underlying M3FC MDP. Finally, we demonstrate its capabilities experimentally in various scenarios. We observe a strong performance in comparison to state-of-the-art policy gradient MARL methods. 
\end{abstract}

\section{Introduction}
Recent successes of reinforcement learning (RL) \citep{vinyals2019grandmaster, schrittwieser2020mastering, ouyang2022training} motivate the search for techniques for the multi-agent case, referred to as multi-agent reinforcement learning (MARL). Due to the high complexity of multi-agent control \citep{bernstein2002complexity, daskalakis2009complexity}, exploiting problem structure is important for scalable MARL. In this work, we consider systems with many agents interacting through aggregated information of all agents -- the mean field (MF).

\paragraph{Mean field control for MARL.}
Dynamical control and behavior in systems with many agents is the subject of studies in mean field games (MFG) \citep{huang2006large, lasry2007mean} and mean field control (MFC) \citep{nourian2012nash, bensoussan2013mean, carmona2019model}. Such aggregated interaction models simplify MARL in the limit of infinite agents, whenever agents interact only through their empirical distribution. The simplification provides a problem complexity that is independent of the exact number of agents. The result is tractability, by avoiding otherwise exponentially large joint state-action spaces \citep{zhang2021multi}. This has led to scalable control based on MFC \citep{gu2019dynamic, carmona2019model}. And indeed, in applications such aggregation is commonly found on some level, e.g., in chemical reaction networks for aggregate molecule mass \citep{anderson2011continuous}, related mass-action epidemics models \citep{kiss2017mathematics}, or traffic where congestion depends on the number of travelling cars \citep{cabannes2021solving}, to name just a few. See also epidemics control \citep{dunyak2021large}, drone swarms \citep{shiri2019massive}, self organization \citep{carmona2022synchronization}, and many more financial \citep{carmona2020applications} or engineering scenarios \citep{djehiche2017mean}.

\begin{table*}[b]
    \centering
    \caption{A comparison of recent related works and a subset of their results on \textit{discrete-time} MFC. \\\textit{prop. chaos}: propagation of chaos; \textit{opt. policy}: existence of optimal (stationary) policies; \textit{common noise}: presence thereof; \textit{non-finite}: non-finite state-actions, e.g. compact; \textit{major agent}: presence thereof; \textit{RL}: RL algorithm ($^+$: learns / is analyzed on finite MARL problems).}
    \vspace{0.3cm}
    \label{tab:diff}
    \renewcommand{\arraystretch}{1.21}
    \begin{tabular}{cccccccc}
        \toprule 
        Ref. & \textit{prop. chaos} & \textit{opt. policy} & \textit{common noise} & \textit{non-finite} & \textit{major agent} & \textit{RL} \\ \midrule
        \citet{carmona2019model} & \xmark & \cmark & \cmark & \cmark & \xmark & \cmark   \\
        \citet{gu2021mean, gu2019dynamic} & \cmark & \cmark & \xmark & \xmark & \xmark & \cmark  \\
        \citet{bauerle2021mean} & \cmark & \cmark & \cmark & \cmark & \xmark & \xmark   \\
        \citet{mondal2022approximation, mondal2023mean} & \cmark & \xmark & \cmark & \xmark & \xmark & \cmark   \\
        \citet{motte2022mean, motte2022quantitative} & \cmark & \cmark & \cmark & \cmark & \xmark & \xmark  \\
        our work & \cmark & \cmark & \cmark & \cmark & \cmark & \cmark$^+$  \\
        \bottomrule
    \end{tabular}
\end{table*}

\paragraph{Limitations of standard MFC.} 
However, the strict assumption of only \textit{minor} agents -- i.e. independent, homogeneous agents that can be summarized by their distribution (MF) -- limits applicability. In practice, systems often consist of more than homogeneous agents, and hence one must extend standard MFC towards \textit{major} agents or environment states that are not aggregated. 
For instance, in modelling car traffic on road networks \citep{cabannes2021solving, wu2023leveraging}, when considering only the distribution of cars (\textit{minor} agents) on the network, one cannot model \textit{major} agents or environment states, such as traffic lights or the road conditions respectively. Another example is given by the logistics scenario in Figure~\ref{fig:example} and in the experiments, where many drones on a moving truck collect many packages.

\begin{figure}[t!]
    \centering
    \includegraphics[width=0.99\linewidth]{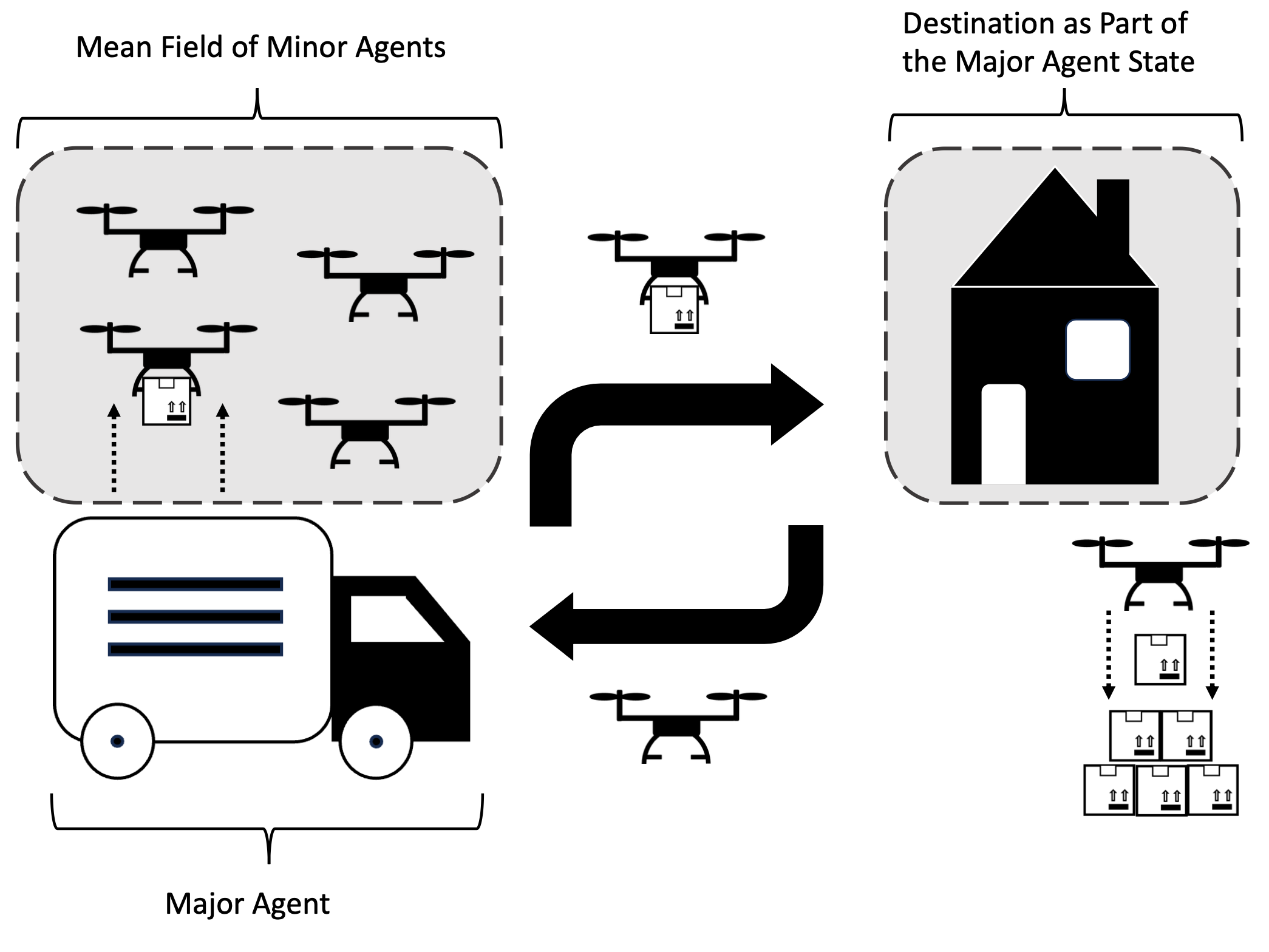}
    \caption{Logistics example: Many drones are modelled as minor agent MF, while truck and package destinations are modelled by a major agent. (See Foraging problem in Section~\ref{sec:problems})}
    \label{fig:example}
\end{figure}

For this purpose, a first step in the continuous-time MFG literature is to consider common noise \citep{carmona2016mean, perrin2020fictitious}, in order to relax the unconditional independence of minor agents. Some more recent works consider such common noise also in discrete-time MFC \citep{carmona2019model, bauerle2021mean, motte2022mean, motte2022quantitative}, or equivalently, global environment states \citep{mondal2023mean}. Essentially, this extension allows MFC to also model random environment effects such as the arrival of new packages in the logistics example (Figure~\ref{fig:example}). \citet{carmona2019model} provide a reformulation of MARL into single-agent RL and consider algorithms for the resulting Markov decision process (MDP). \citet{bauerle2021mean} give approximation theorems and approximate optimality in the finite system by the limiting MFC solution with common noise, and \citet{motte2022mean, motte2022quantitative} quantify the rates of convergence explicitly. See also Table~\ref{tab:diff} for a brief comparison between existing works. In comparison, for the common noise setting, we contribute a new approximation analysis of MFC-based MARL algorithms, where in contrast to prior work, we learn directly with \textit{finite agents}. 

More importantly however, a second contribution is to consider major \textit{agents}. Major agents generalize common noise or environmental states, and take actions that have a non-negligible effect on the system. So far, major agents have only been considered in \textit{continuous}-time, \textit{non}-cooperative MFGs \citep{nourian2013mm, csen2014mean, caines2016mm, sen2016mean}. To the best of our knowledge, no such \textit{discrete}-time, \textit{cooperative} framework has been formulated yet. In this work, we investigate such a framework and associated MARL algorithms.

\begin{figure}[b!]
    \centering
    \includegraphics[width=0.99\linewidth]{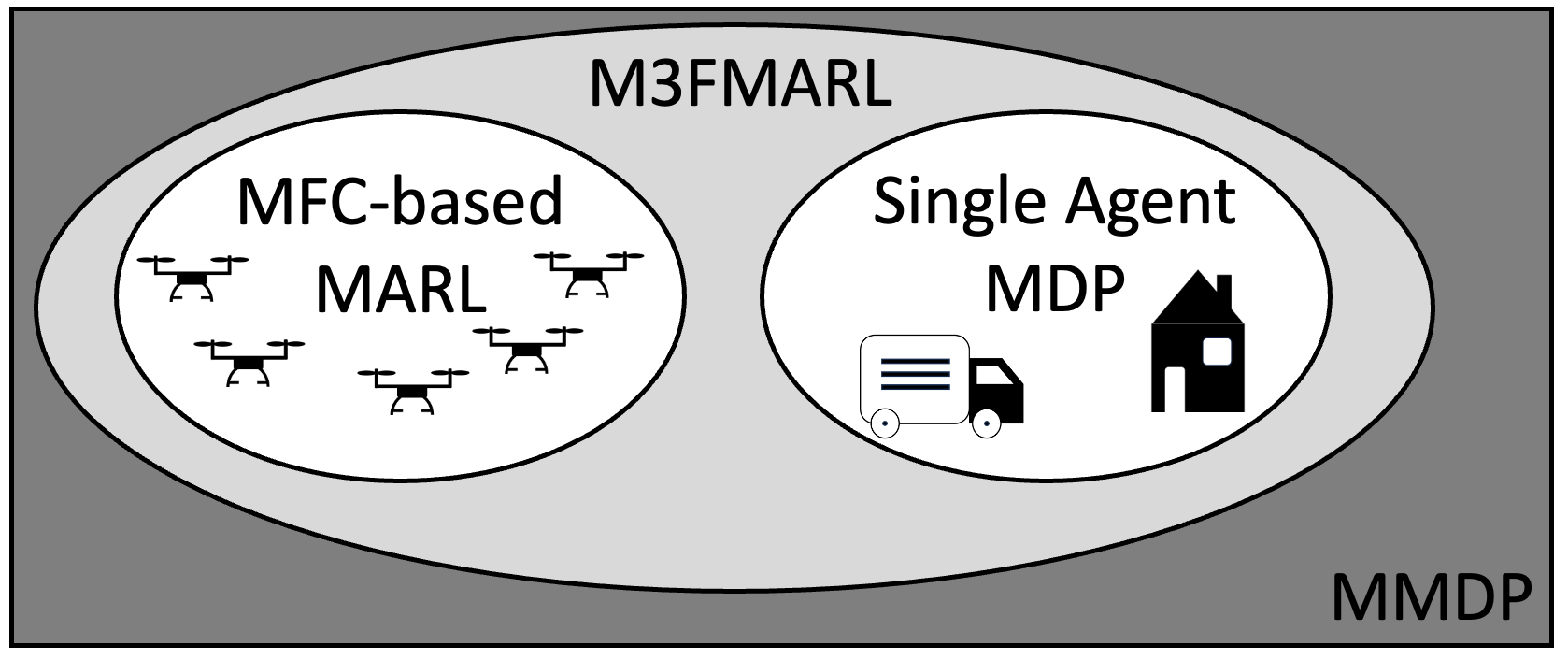}
    \caption{Our M3FC-based MARL generalizes MFC-based MARL and standard single-agent RL in the solution space of general MARL solutions, reducing the otherwise combinatorial nature of MARL \citep{zhang2021multi} to a tractable but still general setting.}
    \label{fig:overview}
\end{figure}

\paragraph{Contribution.}
Existing MFC cannot model general agents and many aggregated agents simultaneously. In essence, we generalize the solution spaces of single-agent RL and MFC-based MARL -- frameworks for cooperative MARL as depicted in Figure~\ref{fig:overview}. This provides both tractability for many aggregated agents and generality for arbitrary general agents. Our contribution is briefly summarized into (i) formulating the first discrete-time MFC model with major agents, together with establishing its theoretical properties; (ii) providing a MFC-based MARL algorithm, which in contrast to prior work learns on the finite problem of interest; and (iii) we perform a significant empirical evaluation, also obtaining positive comparisons of MFC-based MARL against state of the art, whereas prior works on MFC were limited to verifying algorithms on one or two examples.

\section{Major-Minor Mean Field Control} \label{sec:stochastic}
To begin, in this section we extend standard MFC by modelling the presence of a major agent. The generalization to more than one major agent is straightforward. This leads to our discrete-time major-minor MFC (M3FC) model. Overall, we obtain a formulation that allows standard MARL handling of major agents, while tractably handling many minor agents via MFC-based techniques. 

\textit{Notation: By $\E_{X}$ we denote conditional expectations given $X$. The space of probability measures $\mathcal P(\mathcal X)$ on compact metric spaces $\mathcal X$ is equipped with the $1$-Wasserstein distance, unless noted otherwise \citep{villani2009optimal}. Note compactness of $\mathcal P(\mathcal X)$ on compact $\mathcal X$ by Prokhorov's theorem \citep{billingsley2013convergence}. Hence, we sometimes use the uniformly (not Lipschitz) equivalent metric $d_\Sigma(\mu, \mu') \coloneqq \sum_{m=1}^\infty 2^{-m} | \int f_m \, \mathrm d(\mu - \mu') |$, for some sequence of continuous $f_m \colon \mathcal X \to [-1, 1]$ \citep[Theorem~6.6]{parthasarathy2005probability}. } 

\subsection{Finite-Agent System}
Consider $N$ (minor) agents $i \in [N] \coloneqq \{1, \ldots, N\}$ with compact metric state and action spaces $\mathcal X$, $\mathcal U$, equipped with random states and actions $x^{i,N}_t$ and $u^{i,N}_t$ at times $t \in \mathbb N$, where initial states $x^{i,N}_0 \sim \mu_0$ are independently sampled from some initial distribution $\mu_0 \in \mathcal P(\mathcal X)$. In addition to standard MFC, we also consider a single major agent, though the framework can be extended to multiple. Consider major agent state and action spaces, $\mathcal X^0$, $\mathcal U^0$ and state-actions $x^{0,N}_t$, $u^{0,N}_t$, with the major agent formally indexed by $i=0$. Given all actions, the agent states evolve according to kernels $p$, $p^0$ depending on (i) the agent's own state-actions, (ii) the major state-actions, and (iii) the empirical \textbf{MF}, i.e. the $\mathcal P(\mathcal X)$-valued empirical state distribution $\mu^N_t \coloneqq \frac 1 N \sum_{i=1}^N \delta_{x^{i,N}_t}$. This means that minor agents affect other agents only at rate $\frac 1 N$. In practice, we identify minor agents as all agents that matter through their MF $\mu^N_t$. Any remaining agents are major, such that the problem-specific stratification into major and minor agents is always possible.

By symmetry, the system state at any time $t$ is therefore entirely given by $(x^{0,N}_t, \mu^N_t)$. Accordingly, in MFC we share policies between all minor agents. We consider time-variant policies $\pi \in \Pi$, $\pi^0 \in \Pi^0$ from some classes of major and minor policies $\Pi$, $\Pi^0$ that depend on an agent's own state and $(x^{0,N}_t, \mu^N_t)$ at all times $t$. Overall, for all $i \in [N]$ and $t \in \mathbb N$, the finite MFC system follows
\begin{subequations} \label{eq:m3mdp}
\begin{align}
    u^{i,N}_t &\sim \pi_t(u^{i,N}_t \mid x^{i,N}_t, x^{0,N}_t, \mu_t^N), \\
    u^{0,N}_{t} &\sim \pi^0_t(u^{0,N}_{t} \mid x^{0,N}_t, \mu_t^N), \\
    x^{i,N}_{t+1} &\sim p(x^{i,N}_{t+1} \mid x^{i,N}_t, u^{i,N}_t, x^{0,N}_t, u^{0,N}_{t}, \mu_t^N), \\
    x^{0,N}_{t+1} &\sim p^0(x^{0,N}_{t+1} \mid x^{0,N}_t, u^{0,N}_{t}, \mu_t^N) \, .
\end{align}
\end{subequations}

The goal is then to maximize the infinite-horizon discounted objective $J^N(\pi, \pi^0) \coloneqq \mathbb E \left[ \sum_{t=0}^\infty \gamma^t r(x^{0,N}_t, u^{0,N}_{t}, \mu^N_t) \right]$ over minor and major policies $(\pi, \pi^0)$, with discount $\gamma \in (0, 1)$ and reward function $r \colon \mathcal P(\mathcal X) \to \mathbb R$. While an optimal behavior could be learned using standard MARL policy gradient methods, for improved tractability we introduce the following M3FC model in the case of many minor agents.

\begin{remark}
The model is as expressive as in existing MFC \citep{mondal2022approximation, gu2019dynamic}, as it also includes (i) joint state-action MFs $\nu_t \in \mathcal P(\mathcal X \times \mathcal U)$, by splitting time steps in two and defining new states in $\mathcal X \cup \mathcal X \times \mathcal U$, (ii) average rewards over all agents, and (iii) random rewards $r_t^i$ by $r(\mu^N_t) \equiv \frac 1 N \sum_{i=1}^N \E [ r_t^i \mid x^{i,N}_t, \mu^N_t ]$. A finite horizon is handled analogously (without optimal stationary policies).
\end{remark}

\subsection{Mean Field Control Limit}
By the introduction of the MF limit, we obtain a large, more tractable subclass of cooperative multi-agent control problems, which may otherwise suffer from the curse of many agents (combinatorial joint state-action space, \citep{zhang2021multi}). We introduce the MF limit by formally taking $N \to \infty$: The finite-agent control problem is replaced by a higher-dimensional single-agent MDP -- the M3FC MDP. By symmetry, we summarize minor agents into their probability law, the MF $\mu_t \equiv \mathcal L(x^{i,N}_t) \in \mathcal P(\mathcal X)$. It replaces its empirical analogue $\mu^N_t$ by a law of large numbers (LLN). Thus, by definition, the MF $\mu_t$ evolves forward as 
\begin{multline}
    \mu_{t+1} = T(x^0_t, u^0_{t}, \mu_t, \mu_t \otimes \pi_t(\mu_t)) \\
    = \iint p(\cdot \mid x, u, x^0_t, u^0_{t}, \mu_t) \pi_t(\mathrm du \mid x, \mu_t) \mu_t(\mathrm dx),
\end{multline}
with $\pi_t(\mu_t) \coloneqq \pi_t(\cdot \mid \cdot, \mu_t)$, product measures $\mu_t \otimes \pi_t(\mu_t)$ of measure $\mu_t$ and kernel $\pi_t(\mu_t)$ on $\mathcal X \times \mathcal U$, and deterministic dynamics for the MF, $T(x^0, u^0, \mu, h) \coloneqq \iint p(\cdot \mid x, u, x^0, u^0, \mu) h(\mathrm dx, \mathrm du)$. 

Therefore, the state of the limiting system consists only of the MF $\mu_t$ and major state $x^{0}_t$. As a result, we obtain the limiting \textit{M3FC MDP} 
\begin{subequations} \label{eq:m3fc}
\begin{align}
    h_t &\sim \hat \pi_t(h_t \mid x^0_t, \mu_t), \\
    u^0_{t} &\sim \pi^0_t(u^0_{t} \mid x^0_t, \mu_t), \\
    \mu_{t+1} &= T(x^0_t, u^0_{t}, \mu_t, h_t), \\
    x^0_{t+1} &\sim p^0(x^0_{t+1} \mid x^0_t, u^0_{t}, \mu_t)
\end{align}
\end{subequations}
with objective $J(\hat \pi, \pi^0) = \E \left[ \sum_{t=0}^{\infty} \gamma^t r(x^0_t, u^0_t, \mu_t) \right]$ and transition dynamics for the MF $T(x^0, u^0, \mu, h) \coloneqq \iint p(\cdot \mid x, u, x^0, u^0, \mu) h(\mathrm dx, \mathrm du)$. Here, we identify $\mu_t \otimes \pi_t(\mu_t) \equiv h_t \in \mathcal H(\mu_t)$ in the compact set $\mathcal H(\mu) \subseteq \mathcal P(\mathcal X \times \mathcal U)$ of \textit{desired joint state-action distributions with first marginal $\mu$} as part of the action of the M3FC MDP. 

In other words, the action of the M3FC MDP is $(h_t, u^0_t)$ where $h_t$ replaces all the minor agent actions by a LLN. Accordingly, minor agent policies are replaced by MFC policies $\hat \pi$ mapping from current $\mu_t$ to desired state-action distribution $h_t$. The limiting M3FC model abstracts away all the minor agents in the finite system, and considers only the MF and the major agents, as visualized in Figure~\ref{fig:pgm}. The reason for writing joint $h_t$ is mostly technical, as for deterministic $\hat \pi$, we write $\pi_t = \Phi(\hat \pi_t)$ to reobtain agent policies $\mu_t$-a.e. uniquely by disintegration \citep{kallenberg2017random} of $h_t = \hat \pi_t(\mu_t)$ into $\mu_t \otimes \pi'_t$ with decision rule $\pi'_t \in \mathcal P(\mathcal U)^{\mathcal X}$ and using $\pi_t(\mu_t) \equiv \pi'_t$. Inversely, any $\pi \in \Pi$ is represented in the MFC MDP by deterministic $\hat \pi_t = \Phi^{-1}(\pi)_t = \mu_t \otimes \pi_t$.

\begin{remark}
Strictly speaking, in finite-agent control one jointly select actions $(u^{0,N}_t, u^{1,N}_t, \ldots, u^{N,N}_t)$ given joint states $(x^{0,N}_t, x^{1,N}_t, \ldots, x^{N,N}_t)$. But intuitively, (i) joint states reduce to $(x^{0,N}_t, \mu^N_t)$, while (ii) joint actions are replaced by the LLN and sampling actions. Optimality of MFC solutions over larger classes of heterogeneous or joint policies is plausible, but to the best of our knowledge, general result are still limited. See also Appendix~\ref{app:moreopt}.
\end{remark}

For the unfamiliar reader, in Appendix~\ref{app:mfc} we recap basic deterministic MFC without major agents or common noise. There, we recap Lipschitz approximation theorems and dynamic programming principles in compact spaces.

\begin{figure}
    \centering
    \includegraphics[width=0.95\linewidth]{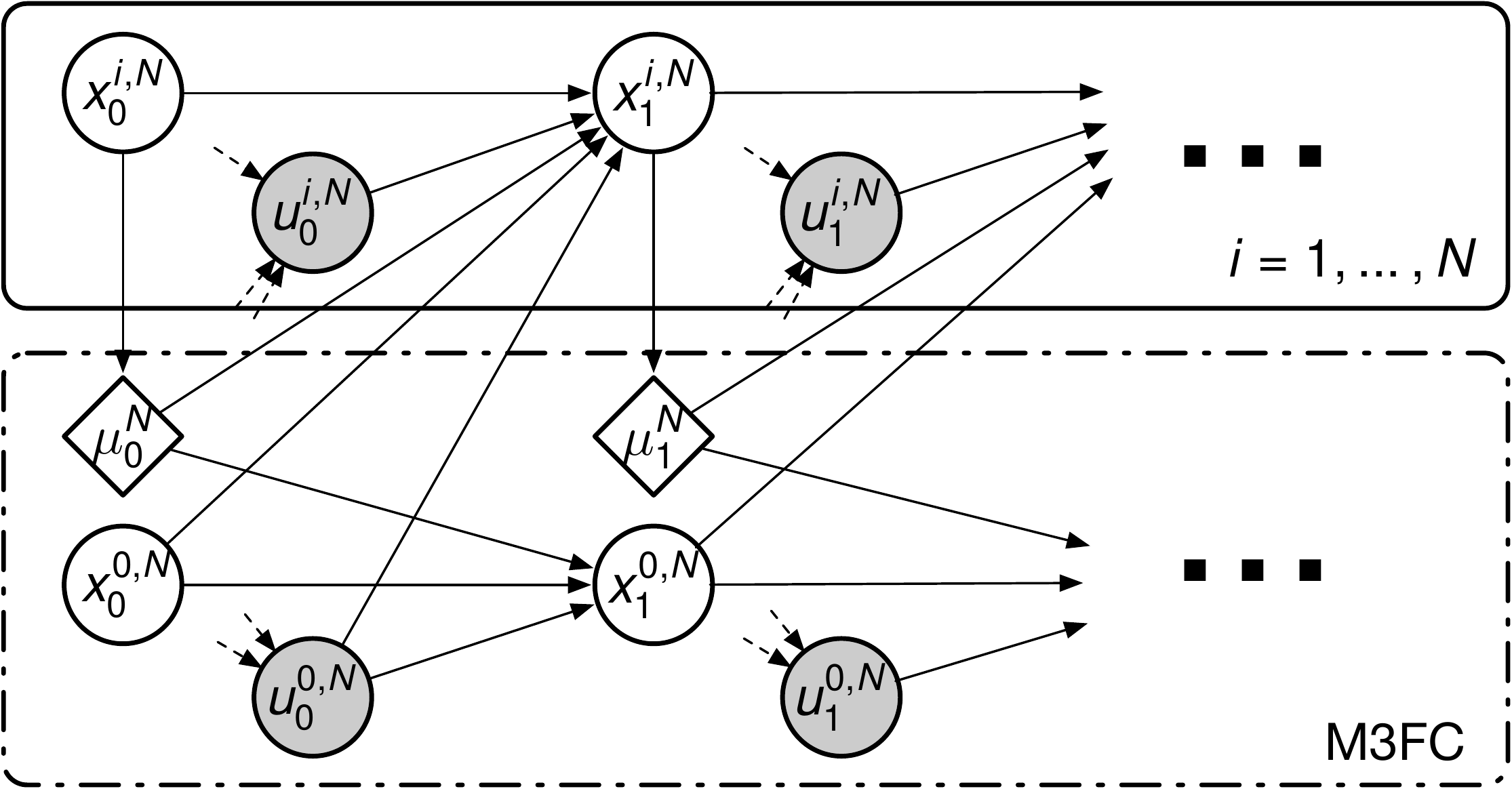}
    \caption{The dynamics \eqref{eq:m3mdp} as a probabilistic graphical model, with actions in grey (inputs omitted for readability). Diamonds denote deterministic functions. M3FC abstracts minor agents $i \in [N]$ by a LLN, considering only their MF as variables in the dotted box.}
    \label{fig:pgm}
\end{figure}

\paragraph{Common noise and global states.} \label{sec:majstate}
In the classical sense \citep{perrin2020fictitious, motte2022mean}, common noise is given by random noise $\epsilon^0_t \sim p_\epsilon(\epsilon^0_t)$ sampled from a fixed distribution $p_\epsilon$, and affects all minor agents at once, $x^{i,N}_{t+1} \sim p(x^{i,N}_{t+1} \mid x^{i,N}_t, u^{i,N}_t, \epsilon^0_t, \mu_t^N)$. This allows to model systems with stochastic MFs and inter-agent correlation, and has added difficulty to the theoretical analysis \citep{carmona2016mean}. Of similar interest are also ``major'' global states $x^{0,N}_t$, which need not be sampled from fixed distributions but evolve dynamically (for MFC with finite global states, see e.g. \citet{mondal2023mean}). 

Both common noise and global states are contained in the M3FC model by using a trivial major agent without actions. We also note that, in general, common noise is equivalent to global states, as global states can be integrated into the minor state conditioned on the common noise. However, for computational purposes the separation of global states and minor agent states can be helpful, as the simplex $\mathcal P(\mathcal X)$ over minor states can be kept smaller for methods based on discretization of the simplex.

\subsection{Dynamic Programming}
As a first step, it is well known that stationary (time-independent) policies suffice for optimality in infinite-horizon discounted MDPs. In the following, this property is also verified for the M3FC MDP. For the following technical results, we assume standard Lipschitz conditions \citep{gu2021mean, mondal2022approximation, pasztor2021efficient}.

\begin{assumption} \label{ass:m3pcont}
The transition kernels $p$, $p^0$ and rewards $r$ are Lipschitz with constants $L_p$, $L_{p^0}$, $L_r$.
\end{assumption}

Assumption~\ref{ass:m3pcont} is true, e.g., in finite spaces if transition matrix entries of $P$ are Lipschitz in the $|\mathcal X|$-dimensional MF vector. The sufficiency of stationary policies is obtained by the dynamic programming principle, which can also be used to compute exact optimal policies in the M3FC MDP. We use the value function $V^*$ as the fixed point of the Bellman equation, $V^*(x^0, \mu) = \max_{(h, u^0) \in \mathcal H(\mu) \times \mathcal U^0} r(x^0, u^0, \mu) + \gamma \mathbb E_{y^0 \sim p^0(y^0 \mid x^0, u^0, \mu)} V^*(y^0, T(x^0, u^0, \mu, h))$.

\begin{theorem} \label{thm:m3dpp}
Under Assumption~\ref{ass:m3pcont}, there exist optimal stationary, deterministic policies $\hat \pi$, $\pi^0$ for the M3FC MDP \eqref{eq:m3fc} by choosing $(\hat \pi(x^0, \mu), \pi^0(x^0, \mu))$ from the maximizers of $\argmax_{(h, u^0) \in \mathcal H(\mu) \times \mathcal U^0} r(x^0, u^0, \mu) + \gamma \mathbb E_{y^0 \sim p^0(y^0 \mid x^0, u^0, \mu)} V^*(y^0, T(x^0, u^0, \mu, h))$.
\end{theorem}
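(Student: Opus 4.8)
The plan is to recognize the M3FC MDP \eqref{eq:m3fc} as a standard infinite-horizon discounted MDP with a \emph{compact} state space and state-dependent compact action sets, and then invoke the classical existence theory for discounted MDPs on compact spaces (in the style of Hernández-Lerma and Lasserre, or Bertsekas and Shreve). The state space is $\mathcal S \coloneqq \mathcal X^0 \times \mathcal P(\mathcal X)$, which is compact metric: $\mathcal X^0$ is compact by assumption, and $\mathcal P(\mathcal X)$ is compact under the $1$-Wasserstein topology by Prokhorov's theorem, as recalled in the preamble. At state $(x^0, \mu)$ the admissible actions form $\mathcal A(\mu) \coloneqq \mathcal H(\mu) \times \mathcal U^0$, with $\mathcal H(\mu) \subseteq \mathcal P(\mathcal X \times \mathcal U)$ compact and $\mathcal U^0$ compact. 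The reward $r$ is continuous on the compact space $\mathcal X^0 \times \mathcal U^0 \times \mathcal P(\mathcal X)$, hence bounded, so the discounted objective is well defined.

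First I would verify the two continuity ingredients on which everything rests. For \emph{weak (Feller) continuity of the transition}, the next state is $(x^0_{t+1}, \mu_{t+1})$ with $x^0_{t+1} \sim p^0(\cdot \mid x^0, u^0, \mu)$ and $\mu_{t+1} = T(x^0, u^0, \mu, h)$. Using Assumption~\ref{ass:m3pcont}, both $p^0$ and the integral map $T(x^0, u^0, \mu, h) = \iint p(\cdot \mid x, u, x^0, u^0, \mu)\, h(\mathrm dx, \mathrm du)$ are jointly continuous in all arguments, where continuity in $\mu$ and $h$ is measured in the Wasserstein metric; the uniformly equivalent metric $d_\Sigma$ from the preamble is convenient here, as it reduces weak continuity to convergence of integrals against the fixed test functions $f_m$. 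Thus the one-step kernel on $\mathcal S$ is weakly continuous. For \emph{continuity of the action correspondence}, I would show that $\mu \mapsto \mathcal H(\mu)$ is compact-valued and continuous (both upper and lower hemicontinuous) in the Hausdorff sense, so that, with the constant factor $\mathcal U^0$, the correspondence $(x^0, \mu) \mapsto \mathcal A(\mu)$ is continuous with compact values.

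With these in place the argument is routine. The Bellman operator $(\mathcal T V)(x^0, \mu) = \max_{(h, u^0) \in \mathcal A(\mu)} r(x^0, u^0, \mu) + \gamma \E_{y^0 \sim p^0(\cdot \mid x^0, u^0, \mu)} V(y^0, T(x^0, u^0, \mu, h))$ maps $C(\mathcal S)$ into itself: for fixed $V \in C(\mathcal S)$ the maximand is continuous in $(x^0, \mu, h, u^0)$ by weak continuity, and by Berge's maximum theorem together with continuity of $\mathcal A(\cdot)$, $\mathcal T V$ is again continuous. Moreover $\mathcal T$ is a $\gamma$-contraction in the supremum norm, so by Banach's fixed point theorem it has a unique fixed point $V^* \in C(\mathcal S)$, the optimal value function. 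Berge's theorem also yields that the $\argmax$ correspondence is nonempty, compact-valued and upper hemicontinuous, so by the Kuratowski--Ryll-Nardzewski measurable selection theorem there is a Borel-measurable selector $(x^0, \mu) \mapsto (\hat\pi(x^0, \mu), \pi^0(x^0, \mu))$ attaining the maximum. A standard verification argument then shows this stationary deterministic policy achieves $V^*$ and is therefore optimal, which is exactly the claimed form.

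The main obstacle I anticipate is establishing the continuity of the set-valued map $\mu \mapsto \mathcal H(\mu)$ in the Hausdorff metric. Upper hemicontinuity follows from compactness and closedness of the first-marginal constraint, but lower hemicontinuity --- needed so that Berge's theorem yields a genuinely \emph{continuous} value function, and hence a clean contraction on $C(\mathcal S)$ --- requires producing, for any $\mu_n \to \mu$ and any target $h \in \mathcal H(\mu)$, a sequence $h_n \in \mathcal H(\mu_n)$ with $h_n \to h$. This can be arranged by disintegrating $h = \mu \otimes \pi'$ into a decision rule $\pi' \in \mathcal P(\mathcal U)^{\mathcal X}$ and re-coupling it with $\mu_n$ as $h_n = \mu_n \otimes \pi'$, then invoking continuity of the coupling map $\mu \mapsto \mu \otimes \pi'$ in the Wasserstein topology. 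Once this coupling continuity is secured, the remaining steps are entirely standard.
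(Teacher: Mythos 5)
Your overall architecture---viewing the M3FC MDP as a discounted MDP on the compact state space $\mathcal X^0 \times \mathcal P(\mathcal X)$ with weakly continuous kernel, then running Banach's fixed point theorem, Berge's maximum theorem, and a measurable selection argument---is a legitimate alternative to the paper's route (the paper instead verifies the inf-compactness and weak-continuity hypotheses of Hern\'andez-Lerma and Lasserre, Assumption~4.2.1, and invokes their Theorem~4.2.3; that route never requires lower hemicontinuity of $\mu \mapsto \mathcal H(\mu)$, only its compactness as a closed subset of $\mathcal P(\mathcal X \times \mathcal U)$). However, your proof of the one step you yourself identify as the crux---lower hemicontinuity of $\mu \mapsto \mathcal H(\mu)$---fails as stated. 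A disintegration kernel $\pi'$ of $h$ with respect to $\mu$ is only \emph{measurable} and only $\mu$-a.e.\ unique, and for a fixed merely measurable kernel the map $\nu \mapsto \nu \otimes \pi'$ is \emph{not} weakly continuous. Concretely, take $\mathcal X = \mathcal U = [0,1]$, $\pi'(\cdot \mid x) = \delta_0$ for $x \leq 1/2$ and $\pi'(\cdot \mid x) = \delta_1$ for $x > 1/2$, $\mu = \delta_{1/2}$, $h = \mu \otimes \pi' = \delta_{(1/2,0)} \in \mathcal H(\mu)$, and $\mu_n = \delta_{1/2 + 1/n} \to \mu$. Then $h_n = \mu_n \otimes \pi' = \delta_{(1/2+1/n,\,1)} \to \delta_{(1/2,1)} \neq h$. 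Worse, since disintegration fixes $\pi'$ only on a $\mu$-full set, $\mu_n \otimes \pi'$ is not even well defined as a function of $h$: different versions of $\pi'$ yield different $h_n$ whenever $\mu_n$ charges $\mu$-null sets. This is precisely why the paper imposes Lipschitz policies (Assumption~2) wherever it needs continuity of maps like $\mu \mapsto \mu \otimes \pi_t(\mu)$ (e.g., in the proof of Theorem~5); a disintegration kernel enjoys no such regularity.

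The gap is fixable, because the claim itself is true: $\mathcal H$ is in fact $1$-Lipschitz in the Hausdorff--$W_1$ sense. Instead of re-evaluating $\pi'$ at the new points, transport the first marginal: let $\gamma_n$ be an optimal $W_1$-coupling of $\mu_n$ and $\mu$, and define $h_n(\mathrm dx', \mathrm du) \coloneqq \int_{\mathcal X} \gamma_n(\mathrm dx', \mathrm dx) \, \pi'(\mathrm du \mid x)$. Then $h_n$ has first marginal $\mu_n$, so $h_n \in \mathcal H(\mu_n)$, and the law of $\bigl((x', u), (x, u)\bigr)$ with $(x', x) \sim \gamma_n$ and $u \sim \pi'(\cdot \mid x)$ couples $h_n$ and $h$ at cost $\E[d(x', x)] = W_1(\mu_n, \mu)$ (under the sum metric on $\mathcal X \times \mathcal U$), whence $W_1(h_n, h) \leq W_1(\mu_n, \mu) \to 0$; crucially, $\pi'$ is only ever evaluated at points carrying $\mu$-mass, so the a.e.-ambiguity is harmless. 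With this repaired, Berge's theorem applies, $\mathcal T$ is a contraction on $C(\mathcal X^0 \times \mathcal P(\mathcal X))$, and the rest of your argument (measurable selection, verification) goes through, yielding a slightly stronger conclusion than the paper's (a \emph{continuous} value function). Alternatively, you could drop lower hemicontinuity entirely and follow the paper: inf-compactness of the cost plus weak continuity of the dynamics suffice for the cited Theorem~4.2.3, at the price of obtaining only a semicontinuous value function.
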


\begin{remark} \label{remark:joint}
We obtain existence of optimal deterministic stationary minor and major policies $\hat \pi$, $\pi^0$ via optimal joint policies $\tilde \pi \equiv \hat \pi \otimes \pi^0$, $(h_t, u^0_t) \sim \tilde \pi((h_t, u^0_t) \mid x^0_t, \mu_t)$.
\end{remark}

The results follow from classical MDP theory \citep{hernandez2012discrete}. Thus, we may solve M3FC problems through the DPP, or approximately by using policy gradients with \textit{stationary} policies for the M3FC MDP, which has naturally continuous actions.

\subsection{Finite Agent Convergence}
Next, in order to show the approximate optimality of M3FC solutions, we first obtain \textbf{propagation of chaos} \citep{sznitman1991topics} -- convergence of empirical MFs to the limiting MF. The result theoretically backs the reduction of multi-agent control to \textit{single-agent} MDPs, as there is no loss of optimality in the finite problem by considering the M3FC problem. We assume standard Lipschitz conditions on policies \citep{gu2021mean, mondal2022approximation, pasztor2021efficient}.

\begin{assumption} \label{ass:m3picont}
The classes of policies $\Pi$, $\Pi^0$ are equi-Lipschitz sets of policies, i.e. there exists $L_\Pi > 0$ such that for all $t$ and $\pi \in \Pi$, $\pi_t \in \mathcal P(\mathcal U)^{\mathcal X \times \mathcal P(\mathcal X)}$ is $L_\Pi$-Lipschitz, and similarly for major policies $\pi^0 \in \Pi^0$.
\end{assumption}

We note that Lipschitz policies are natural, as we usually parametrize policies in a Lipschitz manner; in particular, neural networks allow Lipschitz analysis \citep{pasztor2021efficient, herrera2023local, araujo2023a}. The result is that the limiting system approximates large finite systems.

\begin{theorem} \label{thm:m3muconv}
Fix any family of equi-Lipschitz functions $\mathcal F \subseteq \mathbb R^{\mathcal X^0 \times \mathcal U^0 \times \mathcal P(\mathcal X)}$ with shared Lipschitz constant $L_{\mathcal F}$. Under Assumptions~\ref{ass:m3pcont} and \ref{ass:m3picont}, $(x^{0,N}_t, u^{0,N}_{t}, \mu_t^N)$ converges weakly to $(x^0_t, u^0_{t}, \mu_t)$, uniformly over $f \in \mathcal F$, $(\pi, \pi^0) \in \Pi \times \Pi^0$, $\hat \pi = \Phi^{-1}(\pi)$ at all times $t \in \mathbb N$,
\begin{equation} \label{eq:m3muconv}
    \sup_{f, \pi, \pi^0} \left| \E \left[ f(x^{0,N}_t, u^{0,N}_{t}, \mu_t^N) - f(x^0_t, u^0_{t}, \mu_t) \right] \right| \to 0.
\end{equation}
Further, the convergence rate is $\mathcal O(1 / \sqrt N)$ if $|\mathcal X| < \infty$. 
\end{theorem}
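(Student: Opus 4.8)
The plan is to prove Theorem~\ref{thm:m3muconv} by induction on $t$ via a synchronous coupling of the finite system \eqref{eq:m3mdp} and the limiting M3FC MDP \eqref{eq:m3fc} on a common probability space, with the same major agent and the minor states initialized i.i.d.\ from $\mu_0$. At each step I control the two scalar error quantities
\begin{equation*}
  e^0_t \coloneqq \E\left[ d_{\mathcal X^0}(x^{0,N}_t, x^0_t) \right], \qquad e^\mu_t \coloneqq \E\left[ W_1(\mu^N_t, \mu_t) \right],
\end{equation*}
where $W_1$ is the $1$-Wasserstein distance. The base case is a law of large numbers: since the $x^{i,N}_0$ are i.i.d.\ $\sim\mu_0$ and the limiting MF is initialized at $\mu_0$, the empirical measure $\mu^N_0$ concentrates on $\mu_0$, so $e^\mu_0 \to 0$ uniformly, while $e^0_0 = 0$. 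The target of the inductive step is a recursion $e^0_{t+1} + e^\mu_{t+1} \le C(e^0_t + e^\mu_t) + \eta(N)$ with a \emph{policy-independent} constant $C$ (built from $L_p, L_{p^0}, L_\Pi$) and a policy-independent fluctuation term $\eta(N) \to 0$; unrolling this finite recursion then gives $e^0_t + e^\mu_t \to 0$ uniformly over $(\pi, \pi^0) \in \Pi \times \Pi^0$.

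For the inductive step I would first propagate the major agent. By Assumption~\ref{ass:m3picont} the major policy $\pi^0_t$ is $L_\Pi$-Lipschitz in $(x^0, \mu)$, so the conditional action laws are $W_1$-close and admit a coupling with $\E[d_{\mathcal U^0}(u^{0,N}_t, u^0_t)] \le L_\Pi(e^0_t + e^\mu_t)$; feeding this through the $L_{p^0}$-Lipschitz kernel $p^0$ and coupling the major transitions yields $e^0_{t+1} \le L_{p^0}(1 + L_\Pi)(e^0_t + e^\mu_t)$. The minor/mean-field part is the crux, and I would route it through the conditional mean measure. Conditioned on the time-$t$ data and on $u^{0,N}_t$, the next minor states $x^{i,N}_{t+1}$ are \emph{independent} with laws $\bar p_i \coloneqq \int p(\cdot \mid x^{i,N}_t, u, x^{0,N}_t, u^{0,N}_t, \mu^N_t)\, \pi_t(\mathrm du \mid x^{i,N}_t, x^{0,N}_t, \mu^N_t)$, whose average $\bar\mu^N_{t+1} \coloneqq \frac1N\sum_i \bar p_i$ equals $T(x^{0,N}_t, u^{0,N}_t, \mu^N_t, \hat h^N_t)$ with $\hat h^N_t = \mu^N_t \otimes \pi_t(\cdot\mid\cdot, x^{0,N}_t, \mu^N_t)$. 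I then split $W_1(\mu^N_{t+1}, \mu_{t+1}) \le W_1(\mu^N_{t+1}, \bar\mu^N_{t+1}) + W_1(\bar\mu^N_{t+1}, \mu_{t+1})$. The \emph{stability} term is handled pathwise by Lipschitz continuity of $T$ (inherited from $L_p$) together with the $L_\Pi$-Lipschitzness of $\pi_t$ controlling $W_1(\hat h^N_t, h_t)$, giving $\E[W_1(\bar\mu^N_{t+1}, \mu_{t+1})] \le C'(e^0_t + e^\mu_t)$.

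The \emph{fluctuation} term is the main obstacle, and it is where the two metrics must be interfaced. Because $\mu^N_{t+1}$ is an empirical measure of $N$ \emph{conditionally independent} samples with mean $\bar\mu^N_{t+1}$, and each $f_m$ in the definition of $d_\Sigma$ is bounded by $1$ (so the average $\frac1N\sum_i f_m(x^{i,N}_{t+1})$ has conditional variance at most $1/N$), a variance estimate gives $\E[\,d_\Sigma(\mu^N_{t+1}, \bar\mu^N_{t+1})\,] \le 1/\sqrt N$ \emph{uniformly} over all policies and conditional laws. Since $\mathcal X$ is compact, $d_\Sigma$ and $W_1$ metrize the same (weak) topology on the compact set $\mathcal P(\mathcal X)$, so the identity map is uniformly continuous with some concave modulus $\rho$, $\rho(0^+) = 0$; by monotonicity and Jensen this converts the bound into $\E[W_1(\mu^N_{t+1}, \bar\mu^N_{t+1})] \le \rho(1/\sqrt N) \eqqcolon \eta(N) \to 0$, again uniform in everything. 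This closes the recursion.

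Finally, for any $L_{\mathcal F}$-Lipschitz $f \in \mathcal F$, the left-hand side of \eqref{eq:m3muconv} is at most $L_{\mathcal F}\big(e^0_t + \E[d_{\mathcal U^0}(u^{0,N}_t, u^0_t)] + e^\mu_t\big)$, which tends to $0$ with a bound depending only on the shared constants $L_p, L_{p^0}, L_\Pi, L_{\mathcal F}$ --- hence uniformly over $f$, $\pi$, and $\pi^0$, as claimed. For the rate, when $|\mathcal X| < \infty$ the fluctuation term is bounded directly in $W_1 \asymp \ell_1$: the empirical frequency vector deviates from its conditional mean by $\mathcal O(1/\sqrt N)$ in $\ell_1$ (a multinomial variance/Hoeffding estimate), so $\eta(N) = \mathcal O(1/\sqrt N)$ and the recursion propagates this rate. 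The delicate points throughout are the uniformity over the (infinite-dimensional) policy classes and the clean passage between $d_\Sigma$ and $W_1$ without sacrificing that uniformity.
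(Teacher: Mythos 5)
Your proposal is correct in substance, but it takes a genuinely different route from the paper's proof. The paper never couples the two processes: its induction hypothesis is the weak-convergence statement itself, taken uniformly over an \emph{arbitrary} equi-Lipschitz test class, and the inductive step pushes the class backwards through one step of the dynamics, replacing $f$ by $g(x^0,u^0,\mu) \coloneqq \iint f(x',u',T^*)\,\pi^0_t(\mathrm du'\mid x',T^*)\,p^0(\mathrm dx'\mid x^0,u^0,\mu)$ with $T^* \coloneqq T(x^0,u^0,\mu,\mu\otimes\pi_t(x^0,\mu))$; the work then goes into showing that this transformed class $\mathcal G$ is again equi-Lipschitz (Lemmas~\ref{lem:app:Tlip} and \ref{lem:app:glip}), so the induction hypothesis applies to it. You instead build a synchronous coupling and run a pathwise recursion on the scalar errors $e^0_t$, $e^\mu_t$. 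Both proofs share the same two core estimates — the conditional weak-LLN variance bound in $d_\Sigma$ against the conditional mean measure $\hat\mu^N_{t+1} = T(x^{0,N}_t,u^{0,N}_t,\mu^N_t,\mu^N_t\otimes\pi_t(x^{0,N}_t,\mu^N_t))$ (your $\bar\mu^N_{t+1}$), and the concave-modulus/Jensen device for interfacing $d_\Sigma$ with $W_1$ — but deploy them differently: the paper composes the modulus with $\omega_{\mathcal F}$ to stay at the level of test functions, while you compose it with the modulus of the identity map $(\mathcal P(\mathcal X),d_\Sigma)\to(\mathcal P(\mathcal X),W_1)$ to return to $W_1$ itself. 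What each buys: your coupling yields a strictly stronger conclusion (an $L_1$/pathwise bound under the constructed coupling, from which \eqref{eq:m3muconv} follows because the coupling preserves both marginal laws), and it keeps the recursion at the level of two scalars with policy-independent constants; the paper's test-function route avoids couplings altogether, which is precisely why the authors chose it — they remark in the proof of Theorem~\ref{thm:muconv} that the absolute-value-inside-the-expectation statement fails for stochastic MFC on the canonical (uncoupled) spaces, and your construction of a common probability space is exactly what restores its meaning. The rate argument for finite $\mathcal X$ is the same in both proofs.

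Two points you should make explicit to close the argument. First, your step-by-step optimal couplings of $\pi^0_t(\cdot\mid x^{0,N}_t,\mu^N_t)$ with $\pi^0_t(\cdot\mid x^0_t,\mu_t)$, and of $p^0(\cdot\mid x^{0,N}_t,u^{0,N}_t,\mu^N_t)$ with $p^0(\cdot\mid x^0_t,u^0_t,\mu_t)$, must be chosen \emph{measurably} in the conditioning variables for the coupled process to be well defined; this requires a measurable-selection argument for ($\varepsilon$-)optimal transport plans. Second, the constant $C'$ you assert for the stability term is the Lipschitz constant of the composite map $(x^0,u^0,\mu)\mapsto T(x^0,u^0,\mu,\mu\otimes\pi_t(x^0,\mu))$, uniformly over $\pi\in\Pi$; this is nontrivial (it needs the policy-uniform Lipschitz estimate for $(x^0,\mu)\mapsto\mu\otimes\pi_t(x^0,\mu)$) and is exactly the content of the paper's Lemma~\ref{lem:app:Tlip}, so it should be proved rather than cited as ``inherited.'' Neither point is a gap in the idea, only in the write-up.
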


The above motivates M3FC by the following \textbf{near optimality} result of M3FC MDP solutions in the finite system, as it suffices to optimize over stationary M3FC policies. 

\begin{corollary} \label{coro:m3epsopt}
Under Assumptions~\ref{ass:m3pcont} and \ref{ass:m3picont}, optimal deterministic M3FC MDP policies $(\hat \pi^*, \pi^{0*}) \in \argmax_{(\hat \pi, \pi^0)} J(\hat \pi, \pi^0)$ with $\Phi(\hat \pi^*) \in \Pi$ yield $\varepsilon$-optimal $(\Phi(\hat \pi^*), \pi^{0*})$ with $\varepsilon \to 0$ as $N \to \infty$ in the finite system,
$J^N(\Phi(\hat \pi^*), \pi^{0*}) \geq \sup_{(\pi, \pi^0) \in \Pi \times \Pi^0} J^N(\pi, \pi^0) - \varepsilon$.
\end{corollary}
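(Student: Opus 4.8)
The plan is to combine the uniform propagation-of-chaos estimate of Theorem~\ref{thm:m3muconv} with a standard sandwich argument. The whole corollary reduces to establishing the \emph{uniform} convergence of the objective functionals,
\begin{equation*}
\Delta_N \coloneqq \sup_{(\pi, \pi^0) \in \Pi \times \Pi^0} \left| J^N(\pi, \pi^0) - J(\Phi^{-1}(\pi), \pi^0) \right| \to 0,
\end{equation*}
after which the near-optimality inequality follows by a short chain of inequalities that exploits the optimality of $(\hat\pi^*, \pi^{0*})$ in the M3FC MDP.

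First I would prove $\Delta_N \to 0$. Since $r$ is continuous on the compact space $\mathcal X^0 \times \mathcal U^0 \times \mathcal P(\mathcal X)$, it is bounded, say by $M \coloneqq \|r\|_\infty$, and by Assumption~\ref{ass:m3pcont} it is $L_r$-Lipschitz. Applying Theorem~\ref{thm:m3muconv} to the singleton (hence trivially equi-Lipschitz) family $\mathcal F = \{r\}$ gives, for each fixed $t$, $\sup_{\pi, \pi^0} |\E[r(x^{0,N}_t, u^{0,N}_t, \mu_t^N) - r(x^0_t, u^0_t, \mu_t)]| \to 0$. Writing the objective difference as $\sum_{t=0}^\infty \gamma^t \E[r(\cdots^N) - r(\cdots)]$ and splitting at a horizon $T$, the tail is bounded uniformly in the policies by $\sum_{t > T} \gamma^t \cdot 2M = \tfrac{2M\gamma^{T+1}}{1-\gamma}$, which is made arbitrarily small by taking $T$ large; the finitely many remaining head terms each vanish uniformly over policies by the per-step bound. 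Hence $\Delta_N \to 0$. (When $|\mathcal X| < \infty$, choosing $T \asymp \log N$ converts the per-step $\mathcal O(1/\sqrt N)$ rate of Theorem~\ref{thm:m3muconv} into an explicit rate for $\varepsilon$, provided the time-dependence of the per-step constants is controlled against the discount $\gamma^t$.)

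With $\Delta_N \to 0$ in hand, I would run the sandwich. Fix any $(\pi, \pi^0) \in \Pi \times \Pi^0$. Using $\Phi^{-1}(\Phi(\hat\pi^*)) = \hat\pi^*$ and the hypothesis $\Phi(\hat\pi^*) \in \Pi$ (with $\pi^{0*} \in \Pi^0$), so that the candidate pair lies in the class over which $\Delta_N$ is taken, together with the optimality $J(\hat\pi^*, \pi^{0*}) \ge J(\Phi^{-1}(\pi), \pi^0)$ of the M3FC MDP (valid since $\Phi^{-1}(\pi)$ is a feasible M3FC policy), I obtain
\begin{align*}
J^N(\Phi(\hat\pi^*), \pi^{0*})
&\ge J(\hat\pi^*, \pi^{0*}) - \Delta_N \\
&\ge J(\Phi^{-1}(\pi), \pi^0) - \Delta_N \\
&\ge J^N(\pi, \pi^0) - 2\Delta_N.
\end{align*}
Taking the supremum over $(\pi, \pi^0)$ yields the claim with $\varepsilon = 2\Delta_N \to 0$.

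The main obstacle is the first step: upgrading the per-time-step weak convergence of Theorem~\ref{thm:m3muconv} to convergence of the full discounted objective \emph{uniformly} over the infinite policy class and simultaneously over the infinite horizon. The two sources of uniformity decouple cleanly -- discounting together with boundedness of $r$ controls the time tail independently of the policy, while Theorem~\ref{thm:m3muconv} supplies the policy-uniform control for each of the finitely many head terms -- so no machinery beyond careful truncation is needed. The remaining subtlety is bookkeeping: ensuring the optimal M3FC policy, once mapped back by $\Phi$, is an admissible equi-Lipschitz agent policy so that the propagation-of-chaos bound applies to it, which is precisely the role of the hypothesis $\Phi(\hat\pi^*) \in \Pi$.
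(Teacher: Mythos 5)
Your proof is correct and takes essentially the same route as the paper's: the paper also first establishes $\sup_{(\pi,\pi^0)\in\Pi\times\Pi^0} \left| J^N(\pi,\pi^0) - J(\Phi^{-1}(\pi),\pi^0) \right| \to 0$ by truncating the discounted sum at a horizon $T$ (tail controlled by boundedness of $r$ and the discount, head terms by Theorem~\ref{thm:m3muconv}), and then concludes with the same three-term sandwich exploiting optimality of $(\hat\pi^*,\pi^{0*})$ in the M3FC MDP and the admissibility hypothesis $\Phi(\hat\pi^*)\in\Pi$. No gaps to report.
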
 

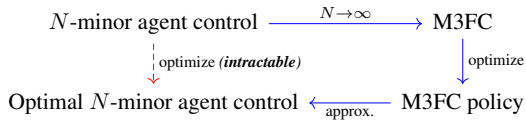
\begin{figure}[b]
    \centering
    \resizebox{0.9\linewidth}{!}{
        $$
        \begin{tikzcd}[column sep=large,ampersand replacement=\&]
        \text{$N$-minor agent control} 
        \arrow[dashed, draw=red]{d}[anchor=west]{\text{optimize \textit{(\textbf{intractable})}}} 
        \arrow[draw=blue]{r}{N \to \infty} 
        \& \text{M3FC} 
        \arrow[draw=blue]{d}[anchor=west]{\text{optimize}} \\
        \text{Optimal $N$-minor agent control} 
        \& \arrow[draw=blue]{l}{\text{approx.}} 
        \text{M3FC policy}
        \end{tikzcd}
        $$
    }
    \caption{Approximation of intractable $N$-agent control by M3FC (blue path), the solution of which is near-optimal for large $N$.}
    \label{fig:detour}
\end{figure}

Therefore, one may solve difficult finite-agent MARL by detouring over the corresponding M3FC MDP as depicted in Figure~\ref{fig:detour}, reducing to an MDP of a complexity independent of the number of agents $N$, which we solve in Section~\ref{sec:algo}.

\section{Major-Minor Mean Field MARL} \label{sec:algo}
As indicated in the prequel and in Figure~\ref{fig:overview}, MARL via M3FC generalizes both single-agent RL and MARL via MFC in the searched policy solution space. Therefore, in M3FC one only optimizes over a tractable, smaller solution space of a single minor and major policy $\Pi, \Pi^0$. At the same time, the framework is highly general and handles arbitrary major agents with many minor agents simultaneously. The reduction of MARL problems to a fixed-complexity single-agent M3FC MDP is the key. In this section, we develop MARL algorithms based on the M3FC framework.

Recalling the motivation of MFC, it is crucial to find tractable sample-based MARL techniques for both complex problems where other methods fail, and for problems where we have no access to the dynamics or reward model. Relating to the former, RL has been applied before to solve MFC given that we know the MFC model equations \citep{carmona2019model, pasztor2021efficient, mondal2022approximation}. However, regarding the latter, we should instead use the MFC formalism to give rise to novel \textit{MARL} algorithms. 

While literature usually focused analysis on the former, in our work we analyze the proposed algorithm not on limiting M3FC MDPs, but on the more interesting finite M3FC system. In particular, if the M3FC MDP is known, one can instantiate finite systems of any size for training. We consider the following perspective: By Theorem~\ref{thm:m3muconv}, the M3FC MDP is approximated well by the finite system. Therefore, we can solve the limiting M3FC MDP by applying our proposed algorithm directly to finite M3FC systems. 

Since we know by Theorem~\ref{thm:m3dpp} that stationary policy suffice, we solve the M3FC MDP \eqref{eq:m3fc} using stationary policies and single-agent RL techniques but on its finite multi-agent instance \eqref{eq:m3mdp}, the combination of which we aptly refer to as Major-Minor Mean Field MARL (M3FMARL). The result is Algorithm~\ref{alg:ppo}, where we directly apply RL to multi-agent systems \eqref{eq:m3mdp} by observing next states $(x^{0,N}_{t+1}, \mu^N_{t+1})$ and rewards $r^N_t \coloneqq r(x^{0,N}_{t}, u^{0,N}_{t}, \mu^N_t)$. The algorithm can be understood as a kind of hierarchical algorithm, as M3FC MDP actions specify behavior for all minor agents at once.

\begin{algorithm}[b!]
    \caption{M3FMARL}
    \label{alg:ppo}
    \begin{algorithmic}[1]
        \FOR {$n=0, 1, \ldots$}
            \FOR {$t = 0, \ldots, B_{\mathrm{len}}-1$}
                \STATE Sample M3FC action from RL policy, i.e. \\$u_t \equiv (u^{0,N}_t, \pi'_t) \sim \tilde \pi^\theta(\cdot \mid x^{0,N}_t, \mu^N_t)$.
                \FOR {$i = 1, \ldots, N$}
                    \STATE Sample $i$-th minor action $u^{i,N}_t \sim \pi'_t(\cdot \mid x^{i,N}_t)$.
                \ENDFOR
                \STATE Execute $\{u^{0,N}_t, u^{1,N}_t, \ldots\}$ for next reward $r^N_t$, state $(x^{0,N}_{t+1}, \mu^N_{t+1})$ and termination $d_{t+1} \in \{0, 1\}$.
            \ENDFOR
            \STATE Perform an update (on policy $\tilde \pi^\theta$) using transitions $B = ((x^{0,N}_t, \mu^N_t), u_t, r^N_t, d_{t+1}, (x^{0,N}_{t+1}, \mu^N_{t+1}))_{t\geq 0}$.
        \ENDFOR
    \end{algorithmic}
\end{algorithm}

\subsection{M3FC-based Policy Gradients} 
The proposed algorithm can be theoretically motivated. As shown in the following, finite-agent policy gradients (PG) estimate the true limiting M3FC MDP PG. First, note that finite state-actions $\mathcal X, \mathcal U$ lead to continuous M3FC MDP actions $\mathcal H(\mu)$, while continuous $\mathcal X, \mathcal U$ even yield infinite-dimensional $\mathcal H(\mu)$. Therefore, we have at least continuous MDPs, complicating value-based learning. 

For this reason, we mainly consider PG methods to solve M3FC-type MARL problems. We parametrize M3FC MDP solutions via RL policies $\tilde \pi^\theta$ with parameters $\theta$, outputting $\xi \in \Xi$ from some compact parameter space $\Xi$ with a Lipschitz map $\Gamma(\xi) = \pi'_t$ to $L_\Pi$-Lipschitz minor agent decision rules $\pi'_t$ (formally, $h_t = \mu_t \otimes \pi'_t$). Assuming the Lipschitzness of the policy network and its gradient in all arguments, on which there has been a great number of recent literature (see e.g. \citet{herrera2023local, araujo2023a} and references therein), we formulate Assumption~\ref{ass:pg}.

\begin{assumption} \label{ass:pg}
    The parameter map $\Gamma$, joint policy $\tilde \pi^\theta$ and log-gradient $\nabla_\theta \log \tilde \pi^\theta$ (or gradient $\nabla_\theta \tilde \pi^\theta$) are $L_\Gamma$, $L_{\tilde \pi}$, $L_{\nabla \tilde \pi}$-Lipschitz and uniformly bounded.
\end{assumption} 

Then, we can apply the PG theorem \citep{sutton1999policy} for the M3FC MDP. The M3FC MDP \eqref{eq:m3fc} essentially substitutes many-agent systems \eqref{eq:m3mdp}, which are natural approximations of the M3FC MDP by Theorem~\ref{thm:m3muconv}. Therefore, we show that M3FMARL (Algorithm~\ref{alg:ppo}) -- single-agent PG on the multi-agent M3FC system -- approximates the true PG of the limiting M3FC MDP, in the case of many minor agents. In other words, M3FMARL solves MARL by approximately solving the single-agent M3FC MDP using policy gradients.

\begin{theorem} \label{thm:pg}
    Under Assumptions~\ref{ass:m3pcont}, \ref{ass:m3picont} and \ref{ass:pg}, the approximate PG of joint policy $\tilde \pi^\theta$ computed on the finite M3FC system \eqref{eq:m3mdp} in Algorithm~\ref{alg:ppo} uniformly tends to the true PG of the M3FC MDP \eqref{eq:m3fc}, as $N \to \infty$.
\end{theorem}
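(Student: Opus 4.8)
The plan is to show that the finite-agent policy gradient converges uniformly to the M3FC MDP policy gradient by combining the policy gradient theorem with the propagation-of-chaos result (Theorem~\ref{thm:m3muconv}). The starting point is the standard policy gradient expression from \citet{sutton1999policy}. For the limiting M3FC MDP \eqref{eq:m3fc}, the PG takes the form
\begin{equation*}
\nabla_\theta J = \E \left[ \sum_{t=0}^\infty \gamma^t \nabla_\theta \log \tilde \pi^\theta(u_t \mid x^0_t, \mu_t) \, Q^\theta(x^0_t, \mu_t, u_t) \right],
\end{equation*}
where $Q^\theta$ is the M3FC MDP action-value function and $u_t = (h_t, u^0_t)$. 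The analogous expression holds for the finite system \eqref{eq:m3mdp} with $(x^{0,N}_t, \mu^N_t, u_t)$ and a finite-agent $Q$-function $Q^{\theta,N}$. The goal is to bound the difference of these two expectations uniformly in $\theta$ (equivalently in the policy class, via Assumption~\ref{ass:pg}) and show it vanishes as $N \to \infty$.

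\textbf{First} I would establish that both PG expressions are well-defined and that the discounted infinite sums can be truncated at a horizon $T_\varepsilon = O(\log(1/\varepsilon)/\log(1/\gamma))$ with error $O(\varepsilon)$, uniformly over policies, using the uniform boundedness of $r$ on compact spaces and of $\nabla_\theta \log \tilde \pi^\theta$ from Assumption~\ref{ass:pg}. This reduces the problem to a finite sum of per-timestep terms. \textbf{Next}, for each fixed $t \le T_\varepsilon$, I would split the difference into two pieces: (i) the gap between $Q^{\theta,N}$ and $Q^\theta$, and (ii) the gap in the expectation of $\gamma^t \nabla_\theta \log \tilde \pi^\theta(\cdot) \, Q^\theta(\cdot)$ evaluated along the finite-agent trajectory versus the limiting trajectory. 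For piece (ii), the key observation is that the integrand $g_t(x^0, \mu, u) := \nabla_\theta \log \tilde\pi^\theta(u \mid x^0, \mu) \, Q^\theta(x^0, \mu, u)$ is Lipschitz in $(x^0, \mu)$ (uniformly in $\theta$), which follows from Assumption~\ref{ass:pg} (Lipschitz, bounded log-gradient) together with Lipschitz continuity of $Q^\theta$ in $(x^0,\mu)$ — the latter itself being a consequence of Assumptions~\ref{ass:m3pcont} and \ref{ass:m3picont} via a standard Bellman-contraction argument on the compact state space. Once the integrand lies in an equi-Lipschitz family $\mathcal F$, Theorem~\ref{thm:m3muconv} applies \emph{verbatim} and forces the weak-convergence gap to zero uniformly, with rate $O(1/\sqrt N)$ when $|\mathcal X| < \infty$.

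\textbf{The main obstacle} is piece (i): controlling the value-function discrepancy $\lvert Q^{\theta,N} - Q^\theta \rvert$ uniformly. The finite-agent $Q$-function is defined on the \emph{empirical} MF $\mu^N_t$ whose dynamics differ from the deterministic limiting flow $T(\cdot)$, so this is not merely a change of test function but a change of the underlying process. I would handle this by writing $Q^{\theta,N}$ and $Q^\theta$ as discounted sums of expected future rewards under their respective dynamics, truncate both at horizon $T_\varepsilon$, and then propagate the chaos bound through each future step using the Lipschitzness of $r$ (Assumption~\ref{ass:m3pcont}) and a Grönwall-type accumulation of the per-step $O(1/\sqrt N)$ errors across the finite horizon. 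Care is needed because the error compounds over time, but since the horizon is fixed at $T_\varepsilon$ and each step contributes an $O(1/\sqrt N)$ deviation that is uniform over the equi-Lipschitz policy class, the total remains $O(\mathrm{poly}(T_\varepsilon)/\sqrt N) \to 0$. \textbf{Finally}, combining the truncation error $O(\varepsilon)$ with the two vanishing pieces and taking $N \to \infty$ followed by $\varepsilon \to 0$ yields the claimed uniform convergence of the approximate PG to the true M3FC MDP PG.
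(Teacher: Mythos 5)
Your decomposition (truncate the discounted sums at a horizon, control the $Q$-function gap, and control the trajectory-law gap via propagation of chaos) is structurally the same as the paper's proof, and your treatment of piece (i) -- truncating both $Q$-functions and propagating per-step chaos errors over a fixed horizon, which is how the paper proves its Proposition on $\widehat Q^\theta \to Q^\theta$ -- is sound. However, there are two genuine gaps in piece (ii). First, your claim that $Q^\theta$ is Lipschitz in $(x^0,\mu)$ ``via a standard Bellman-contraction argument'' does not follow from the stated assumptions. The contraction argument yields a Lipschitz fixed point only when $\gamma L_{\mathrm{dyn}} < 1$, where $L_{\mathrm{dyn}}$ is the Lipschitz constant of the closed-loop mean field dynamics; here that constant is $L_T = (2L_\Pi + 1)(L_p + (L_p+1)L_\Pi + (L_p + L_\Pi + 1))$ (Lemma~\ref{lem:app:Tlip}), which can easily exceed $1/\gamma$, in which case the infinite-horizon $Q^\theta$ is continuous but need not be Lipschitz, and your equi-Lipschitz family $\mathcal F$ need not exist. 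The paper avoids this by a \emph{second} truncation: it expands $Q^\theta$ as a discounted sum, cuts it at a horizon $T'$, and applies propagation of chaos to the resulting per-time-step conditional-expectation functions, which are Lipschitz for each fixed $t'$ (with constants that may grow in $t'$, harmless because $T, T'$ are fixed before $N \to \infty$). You already use exactly this idea in piece (i); you need it in piece (ii) as well.

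Second, Theorem~\ref{thm:m3muconv} cannot be applied ``verbatim'' to your integrand $g_t(x^0, \mu, u) = \nabla_\theta \log \tilde\pi^\theta(u \mid x^0,\mu)\, Q^\theta(x^0,\mu,u)$: the test functions in that theorem live on $\mathcal X^0 \times \mathcal U^0 \times \mathcal P(\mathcal X)$ and do not take the distributional action component ($h_t$, or the parameter $\xi_t$) as an argument, whereas $g_t$ does. You must first marginalize the sampled action out, i.e.\ rewrite $\E\left[ Q^\theta \nabla_\theta \log \tilde\pi^\theta \right]$ as an integral of $Q^\theta \, \nabla_\theta \tilde\pi^\theta$ over actions, producing a function of $(x^0,\mu)$ alone -- this is exactly the paper's construction in \eqref{eq:baseint}, and it is also why Assumption~\ref{ass:pg} offers the plain gradient $\nabla_\theta \tilde\pi^\theta$ as an alternative to the log-gradient (Lipschitzness of the marginalized function comes from $\nabla_\theta \tilde\pi^\theta$, not from $\nabla_\theta \log \tilde\pi^\theta$). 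Relatedly, the propagation-of-chaos statement you invoke must hold for the \emph{parameterized} system with actions $\xi_t \mapsto \Gamma(\xi_t)$, which is not literally Theorem~\ref{thm:m3muconv} but its analogue under Assumption~\ref{ass:pg} (the paper's Proposition~\ref{prop:Nconv}); the proof transfers, but the statement must be adjusted. Both gaps are repairable with the paper's double-truncation-plus-marginalization scheme, but as written your piece (ii) rests on an unjustified Lipschitz claim and a misapplied theorem.
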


Importantly, the underlying MDP complexity is \textit{independent} of the number of minor agents. Therefore, we would expect Algorithm~\ref{alg:ppo} to be able to perform well in M3FC-type problems, possibly compared to straightforward MARL where each agent is handled separately. Intuitively, for many agents, the reward signal for any single agent can become uninformative: A cooperative, ``averaged'' reward remains almost unaffected by a single agent's actions. This well-known credit assignment issue is therefore solved by the hierarchical structure of M3FC, as credit is assigned to M3FC actions, which affect all minor agents at once and hence receive aggregated credit. Another advantage is that MFC profits from any advances in single-agent RL.

\begin{figure*}[b]
    \centering
    \includegraphics[width=0.99\linewidth]{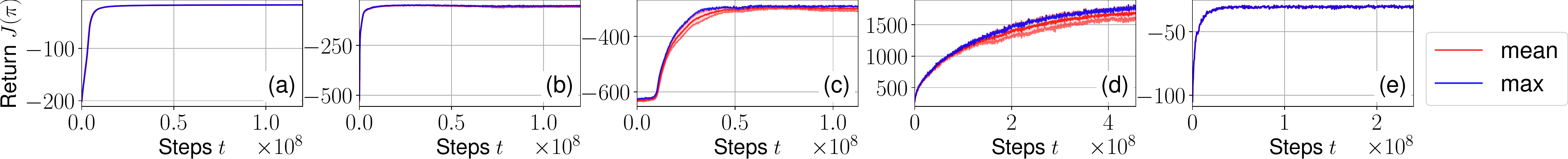}
    \caption{Training curves (mean episode return) of M3FPPO (red), with shaded standard deviation, and maximum (blue) over all three trials (two for Foraging). (a) 2G; (b) Formation; (c) Beach; (d) Foraging; (e) Potential.}
    \label{fig:curvem3fc}
\end{figure*}

\subsection{Implementation Details}
We use the proximal policy optimization (PPO) algorithm \citep{schulman2017proximal} to obtain a M3FC policy $\pi_{\mathrm{RL}}$, instantiating the major minor mean field PPO (M3FPPO) algorithm as an instance of M3FMARL, Algorithm~\ref{alg:ppo}. Other PG algorithms (A2C, leading to M3FA2C) are also compared in our experiments. We parametrize MFs in $\mathcal P(\mathcal X)$ and joint distributions in $\mathcal H(\mu^N_t)$. In practice, for finite $\mathcal X$, $\mathcal U$, the parametrization of $\mathcal P(\mathcal X)$ is immediate by finite-dimensional vectors $\mu^N_t \in \mathcal P(\mathcal X)$. For M3FC actions, consider -- in addition to the major agent action -- the matrix $\xi \in [-1, 1]^{\mathcal X \times \mathcal U}$, which is mapped to probabilities of minor actions in any minor state $\pi'_t(u \mid x) \coloneqq Z^{-1} (\xi_{xu}+1+\epsilon)$, for small $\epsilon = 10^{-10}$ and normalizer $Z$. For continuous $\mathcal X$, $\mathcal U$, we instead partition $\mathcal X$ into $M$ bins and represent $\mu^N_t$ as a histogram, mapping $\xi \in [-1, 1]^{M \times 2}$ to diagonal Gaussian means and standard deviations, $\mu_{\mathcal X_i} \in \mathcal U$, $\sigma_{\mathcal X_i} \in [\epsilon, 0.25 + \epsilon]$, for each of $M$ bins $\mathcal X_i \subseteq \mathcal X$. Major actions $u^{0,N}_t$ are categorical or diagonal Gaussian as usual. For large $\mathcal X, \mathcal U$, one could also consider kernel-based parametrizations \citep{cui2023learning}.

We use two hidden layers of $256$ nodes and $\tanh$ activations for the neural networks of the policies. The neural network policy outputs parameters of a diagonal Gaussian over the major action $u^0$ and matrices $U$ as discussed above. In the discrete Beach scenario below, the neural network instead outputs a categorical distribution using a final softmax layer. We used no GPUs and around 300,000 CPU core hours on Intel Xeon Platinum 9242 CPUs. Optimal transport costs are computed using POT \citep{flamary2021pot}. Our M3FC MDP implementation follows the gym interface \citep{brockman2016openai}, while the implementation of multi-agent RL as in the following fulfills RLlib interfaces \citep{liang2018rllib}. The RL implementations in our work are based on MARLlib 1.0 \citep{hu2022marllib} (MIT license), which uses RLlib 1.8 \citep{liang2018rllib} (Apache-2.0 license) with hyperparameters in Table~\ref{tab:hyperparams}, and otherwise default settings.

\subsection{Comparison to MARL}
The M3FMARL algorithm falls into the paradigm of centralized training with decentralized execution (CTDE) \citep{zhang2021multi}, as we sample a single central M3FC MDP action during training, but enable decentralized execution by sampling $\pi'_t$ separately on each agent instead. For instance, when converged to a deterministic M3FC policy (of which an optimal one is guaranteed to exist by Theorem~\ref{thm:m3dpp}), the M3FC action is always trivially equal for all agents.

Since we also consider continuous minor agent action spaces in our experiments, we compare against PG methods for MARL. In particular, we firstly consider Independent PPO (IPPO), as PPO with independent learning \citep{tan1993multi} and parameter sharing \citep{gupta2017cooperative}, and secondly also Multi-Agent PPO (MAPPO) with centralized critics. The latter has repeatedly shown strong state-of-the-art performance in cooperative MARL \citep{de2020independent, papoudakis2021benchmarking, yu2021surprising}. We also separate major and minor agent policies for improved performance of IPPO / MAPPO. For comparison, we use the same observations for the policy input as in M3FMARL. The policy network architectures match, and the same PPO implementation and hyperparameters are shared with M3FPPO in Table~\ref{tab:hyperparams}. Minor agents are additionally allowed to observe their own states. More details can be found in Appendix~\ref{app:exp}.

\begin{table*}[b]
    \centering
    \caption{Comparison of mean episode returns between best trained policies of standard MARL and M3FMARL methods on a system with $N=20$ agents ($\pm$ $95\%$ confidence interval, for a number of episodes as in Figure~\ref{fig:Ncompare}).}
    \vspace{0.3cm}
    \begin{tabular}{ccccc}
    \toprule
    Problem & IPPO & MAPPO & M3FA2C & M3FPPO  \\
    \hline
    2G  & -43.9 $\pm$ 1.1 & -26.0 $\pm$ 0.5 & -30.6 $\pm$ 0.6 & \textbf{-22.2} $\pm$ 0.56\\
    Formation  &\textbf{-51.1} $\pm$ 2.4 & -101.1 $\pm$ 7.1 & -79.2 $\pm$ 3.1 & -63.9 $\pm$ 4.2 \\
    Beach  &-350.3 $\pm$ 3.4 & -342.9 $\pm$ 4.7 & -424.8 $\pm$ 5.5 & \textbf{-303.5} $\pm$ 3.4\\
    Foraging  &735.3 $\pm$ 46.4 & 803.9 $\pm$ 54.6 & 1398.0 $\pm$ 57.1 & \textbf{1479.4} $\pm$ 36.3 \\
    Potential  &-27.1 $\pm$ 1.4 & \textbf{-26.7} $\pm$ 1.7 & -50.4 $\pm$ 5.5 & -31.3 $\pm$ 1.3 \\
    \bottomrule
    \end{tabular}
    \label{table:compare}
\end{table*}

\begin{figure*}[t]
    \centering
    \includegraphics[width=0.99\linewidth]{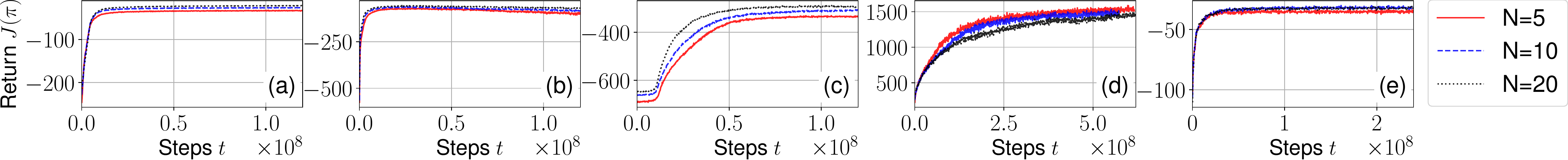}
    \caption{Training curves (mean episode return vs. time steps) of M3FPPO, trained on the finite systems with $N \in \{ 5, 10, 20 \}$. (a) 2G; (b) Formation; (c) Beach; (d) Foraging; (e) Potential.}
    \label{fig:Ntrain}
\end{figure*}

\begin{figure*}[t]
    \centering
    \includegraphics[width=0.99\linewidth]{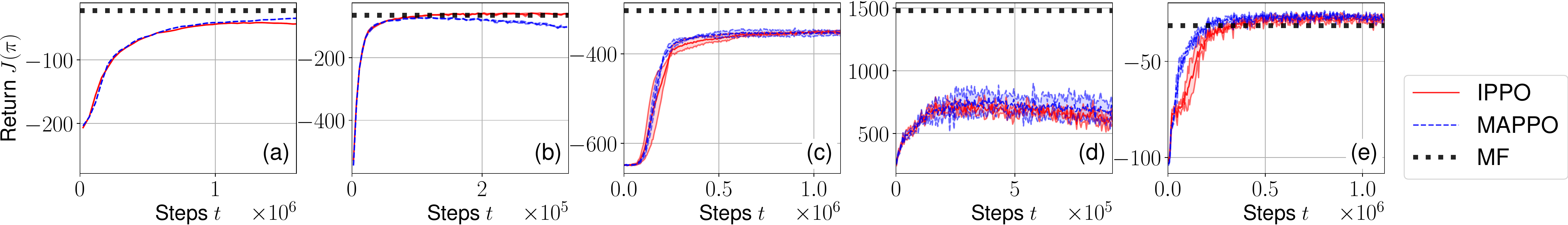}
    \caption{Comparing IPPO / MAPPO vs. results of M3FPPO (MF, ours), as in Figure~\ref{fig:curvem3fc} (no maxima, $N=20$).}
    \label{fig:marl}
\end{figure*}

\setcounter{figure}{8}    
\begin{figure*}[b!]
    \centering
    \includegraphics[width=0.99\linewidth]{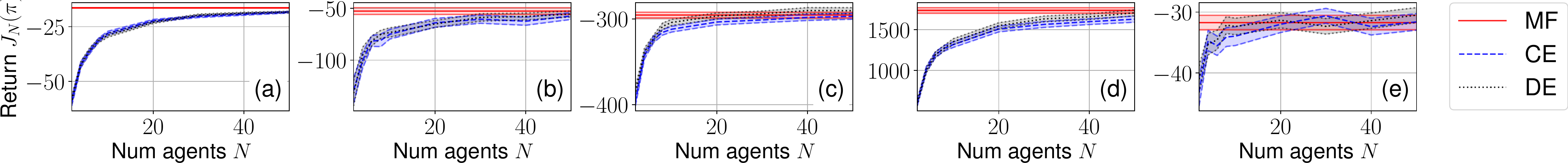}
    \caption{Mean episode return of M3FC policy in finite systems as in Figure~\ref{fig:curvem3fc} over (a-c) $100$, (d) $300$ or (e) $500$ trials (95\% confidence interval shaded). MF: CE, $N=500$; CE / DE: centralized / decentralized execution.}
    \label{fig:Ncompare}
\end{figure*}

\section{Experiments} \label{sec:exp}
In this section, we demonstrate the performance of M3FPPO on illustrative, practical problems. Unless noted otherwise, we use $M=49$ bins ($M=7$ in \textit{Potential}), train for around $24$ hours, and train M3FPPO on the finite-agent system \eqref{eq:m3mdp} with $N=300$ minor agents unless noted otherwise (similar results for less agents in Appendix~\ref{app:exp}). Full descriptions and additional experiments and discussions are in Appendix~\ref{app:exp}.

\setcounter{figure}{7}    
\begin{figure}[b!]
    \centering
    \includegraphics[width=0.99\linewidth]{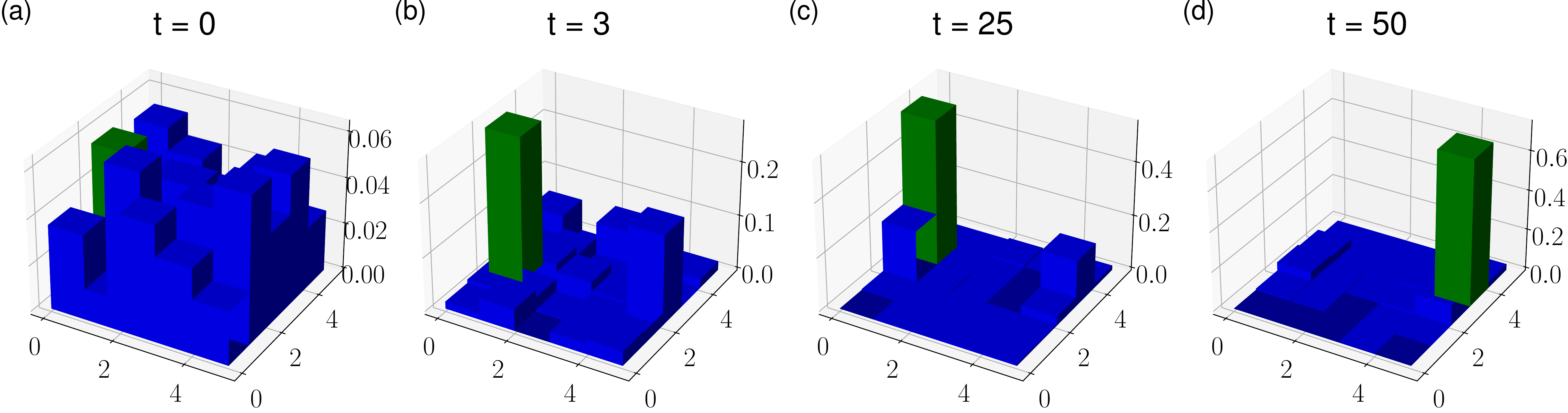}
    \includegraphics[width=0.99\linewidth]{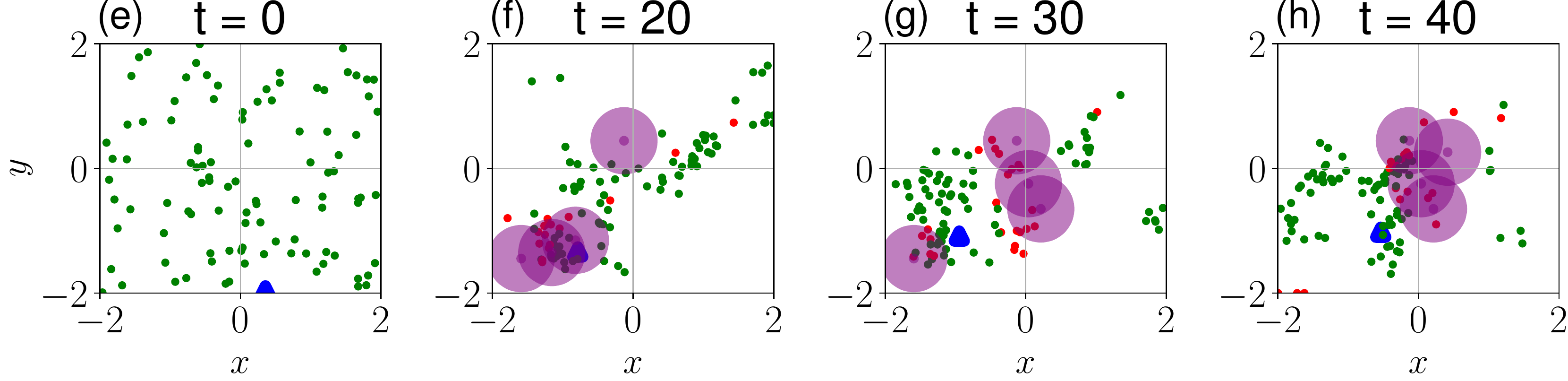}
    \caption{Qualitative visualization of M3FC in Beach (a-d), Foraging (e-h). (a-d): empirical MF, major agent \& target in green; (e-h): blue / green triangle: major agent / target; green / red dots: less- / more-than-half encumbered minor agents; purple: current foraging areas.}
    \label{fig:qualitative2}
\end{figure}
\setcounter{figure}{9}    

\subsection{Problems} \label{sec:problems}
To verify the usefulness of M3FMARL whenever the M3FC model \eqref{eq:m3mdp} is accurate, we consider $5$ benchmark tasks that fulfill the M3FC modelling assumptions. To begin, the simple two Gaussian (\textbf{2G}) problem has no major agent and is equipped with a time-dependent major state: A periodic, time-variant mixture of two Gaussians $\mu^*_t$ -- the major state -- is noisily observed analogously to $\mu^N_t$ via $M=49$ bins. Minor agents should then track the mixture distribution over time, which can find application for example in UAV-based cellular coverage of dynamic users \citep{mozaffari2016efficient}. In the \textbf{Formation} problem, we extend such formation control with major agents. In addition to 2G, one added major agent tracks a moving target. Meanwhile, minor agents instead track a formation around the dynamic major agent, see e.g. \citet{yang2021attacks} for applications. The \textbf{Beach} bar process is a studied classic \citep{arthur1994inductive, perrin2020fictitious}, where minor agents minimize their distances to a bar and additionally avoid crowded areas. Here, the bar moves on a discrete torus. The \textbf{Foraging} problem is archetypal of swarm intelligence \citep{brambilla2013swarm}, and has agents forage randomly generated foraging areas. In particular, we can consider the logistics scenario depicted in Figure~\ref{fig:example}, where a major package truck moves in a restricted space (roads) while minor drones collect packages for urban parcel delivery \citep{marinelli2018route}. Drones fill up at package ``foraging'' areas, and unload near the major agent. Lastly, in the \textbf{Potential} problem, minor agents can generate a potential landscape, the gradient of which pushes the major agent -- e.g., a large object affected by magnetic active matter \citep{jin2021collective} -- to be delivered to a variable target.

\subsection{Evaluation}
In Figure~\ref{fig:curvem3fc}, we see that M3FPPO learning is stable, as M3FPPO reduces hard-to-analyze MARL to single-agent RL, avoiding pathologies of MARL such as non-stationarity of multi-agent learning, or the combinatorial complexity over numbers of agents. In Figure~\ref{fig:Ntrain}, we find similar success in directly training M3FPPO for small $N$ instead of transferring from high $N$. We conclude that M3FPPO remains applicable even with as few as $5$ agents. M3FPPO usually compares well against its A2C variant (M3FA2C) and IPPO / MAPPO, see Table~\ref{table:compare} and Appendix~\ref{app:a2c}. Meanwhile, IPPO / MAPPO under the same hyperparameters as M3FPPO (large batch sizes, see Table~\ref{tab:hyperparams}) can be more unstable and lead to worse results, see Figure~\ref{fig:marl}.

\paragraph{Qualitative behavior.}
In Figure~\ref{fig:qualitative2}, we observe successfully trained behavior in Beach and Foraging: In Beach, M3FPPO learns to accumulate up to $70\%$ of agents on the bar, as more agents on the space lead to a suboptimal reduction in rewards. In Foraging, we find that agents successfully deplete foraging areas shown in the bottom left, moving on afterwards. Further, M3FPPO successfully learns to form mixtures of Gaussians in 2G, a Gaussian around a moving major agent successfully tracking its target in Formation, and similar success in pushing the major agent towards its target in Potential, see Appendix~\ref{app:qual}.

\paragraph{Quantitative support of theory.}
In Figure~\ref{fig:Ncompare}, we \textit{transfer} the trained M3FPPO policy to $N=2, \ldots, 50$, comparing against the performance in the limit ($N=500$). As $N$ grows, the performance converges to the limit, supporting Theorem~\ref{thm:m3muconv} and Corollary~\ref{coro:m3epsopt}. Any sufficiently large system has the same limiting performance as predicted by the theory. We thus have empirical support for scalability, and also transferability between varying numbers of minor agents. 

\paragraph{Comparison to MARL.}
Comparing Figures~\ref{fig:curvem3fc}, \ref{fig:marl} and Table~\ref{table:compare}, we see that (i) by experience sharing, standard MARL can be more sample-efficient, as each step gives $N$ samples instead of just one; and (ii) M3FPPO matches or outperforms IPPO and MAPPO, despite having significantly less control over minor agent actions: All minor agents in a bin (with similar minor agent states) use the same action distributions, which suffices for strong results. 

\paragraph{Decentralized execution.} 
Lastly, decentralized execution by agent-wise randomization -- i.e. sampling M3FC actions per agent instead of a single shared, correlated M3FC action -- has little to no effect, and can even marginally improve performance, see e.g., Beach in Figure~\ref{fig:Ncompare}(c). Figure~\ref{fig:Ncompare} verifies the performance of M3FMARL as a CTDE method. 

\section{Conclusion and Discussion}
We have proposed a generalization of MDPs and MFC, enabling tractable state-of-the-art MARL on general many-agent systems, with both theoretical and empirical support. Beyond the current model and its optimality guarantees, one could work on extended optimality conjectures in Appendix~\ref{app:moreopt}, refined approximations \citep{gast2018refined}, and local interactions \citep{qu2020scalable}. Algorithmically, M3FC MDP actions $\mathcal H(\mu)$ could move beyond binning $\mathcal X$ to gain performance, e.g. via kernels. Lastly, one may try to quantify convergence to the rate $\mathcal O(1/\sqrt N)$ for non-finite $\mathcal X$, as the current proof strategy would need hard-to-verify or unrealistic $d_\Sigma$-Lipschitzness.

\section*{Broader Impact}
This paper presents work whose goal is to advance the field of Machine Learning. There are many potential societal consequences of our work, none which we feel must be specifically highlighted here.

\section*{Acknowledgements}
This work has been co-funded by the LOEWE initiative (Hesse, Germany) within the emergenCITY center, and the Hessian Ministry of Science and the Arts (HMWK) within the projects ``The Third Wave of Artificial Intelligence - 3AI'' and hessian.AI. The authors acknowledge the Lichtenberg high performance computing cluster of the TU Darmstadt for providing computational facilities for the calculations of this research. We thank anonymous reviewers for their helpful comments to improve the manuscript.

\bibliography{main}
\bibliographystyle{plainnat}

%%%%%%%%%%%%%%%%%%%%%%%%%%%%%%%%%%%%%%%%%%%%%%%%%%%%%%%%%%%%%%%%%%%%%%%%%%%%%%%
%%%%%%%%%%%%%%%%%%%%%%%%%%%%%%%%%%%%%%%%%%%%%%%%%%%%%%%%%%%%%%%%%%%%%%%%%%%%%%%
% APPENDIX
%%%%%%%%%%%%%%%%%%%%%%%%%%%%%%%%%%%%%%%%%%%%%%%%%%%%%%%%%%%%%%%%%%%%%%%%%%%%%%%
%%%%%%%%%%%%%%%%%%%%%%%%%%%%%%%%%%%%%%%%%%%%%%%%%%%%%%%%%%%%%%%%%%%%%%%%%%%%%%%
\newpage
\appendix
\onecolumn

\section{Related Work}
In this section, we provide additional context on related works. Since the introduction of MFGs in continuous and discrete time \citep{huang2006large, lasry2007mean, saldi2018markov}, MFGs have been studied in various forms, ranging from partially observed systems \citep{saldi2019partially, sen2019mean} over learning-based solutions \citep{guo2019learning, perrin2020fictitious, cui2021approximately, guo2020general, perolat2021scaling, perrin2021generalization, yardim2022policy} on graphs \citep{caines2019graphon, tchuendom2021critical, cui2022learning, hu2022graphon} to considering correlated equilibria \citep{muller2021learning, campi2022correlated, bonesini2022correlated}. 

While many works focus on non-cooperative settings with self-interested agents, this can run counter to the goal of engineering many-agent behavior, e.g., achieving cooperative behavior in swarms of drones. Instead, we focus on the related setting of cooperative MFC \citep{pham2018bellman, gu2019dynamic, mondal2022approximation}, see also work on differential \citep{carmona2018probabilistic}, static \citep{sanjari2020optimal}, or discrete-time deterministic MFC \citep{gast2011mean}. For the unfamiliar reader, we point towards many extensive surveys on the topic of mean field systems \citep{bensoussan2013mean, carmona2018probabilistic, lauriere2022learning}.

In general comparison, another well-known line of mean field MARL \citep{yang2018mean, ganapathi2020multi, subramanian2020partially, subramanian2022decentralized} focuses on approximating the influence of other agents on any particular agent by their average actions. Relatedly, some MARL algorithms introduce approximations over agent neighborhoods based on exponential decay \citep{qu2020scalable, qu2020scalable2, liu2022scalable}. In contrast, MFC assumes dependence on the entire distribution of agents and not, e.g., pairwise terms for each neighbor, per agent.

\section{Deterministic Mean Field Control} \label{app:mfc}
In the following, we provide proofs that were omitted in the main text. To begin, in this section we recap standard deterministic MFC. Here, our general proof technique is introduced. It generalizes to the M3FC case and allows approximation properties and dynamic programming principles beyond finite spaces and Lipschitz continuity assumptions in compact spaces, for MFC models under simple continuity. In standard MFC, we have the model without major agents,
\begin{align} \label{eq:mmdp}
    u^{i,N}_t &\sim \pi_t(u^{i,N}_t \mid x^{i,N}_t, \mu_t^N), \\
    x^{i,N}_{t+1} &\sim p(x^{i,N}_{t+1} \mid x^{i,N}_t, u^{i,N}_t, \mu_t^N)
\end{align}
while in the limit, we have the MF evolution
\begin{align}
    \mu_{t+1} = T(\mu_t, \mu_t \otimes \pi_t(\mu_t)) \coloneqq \iint p(\cdot \mid x, u, \mu_t) \pi_t(\mathrm du \mid x, \mu_t) \mu_t(\mathrm dx)
\end{align}
and MFC system
\begin{align} \label{eq:mfc}
    h_t &\sim \hat \pi_t(h_t \mid \mu_t), \quad
    \mu_{t+1} = T(\mu_t, h_t)
\end{align}
with objective $J(\hat \pi) = \E \left[ \sum_{t=0}^{\infty} \gamma^t r(\mu_t) \right]$.

\paragraph{Dynamic Programming and Propagation of Chaos}
We may solve the hard finite-agent system \eqref{eq:mmdp} near-optimally by instead solving the MFC MDP, allowing direct application of single-agent RL to the MFC MDP with approximate optimality in large systems. Mild continuity assumptions are required.

\begin{assumption} \label{ass:pcont}
The transition kernel $p$ and reward $r$ are continuous.
\end{assumption}
\begin{assumption} \label{ass:picont}
The considered class of policies $\Pi$ is equi-Lipschitz, i.e. there exists $L_\Pi > 0$ such that for all $t$ and $\pi \in \Pi$, $\pi_t \in \mathcal P(\mathcal U)^{\mathcal X \times \mathcal P(\mathcal X)}$ is $L_\Pi$-Lipschitz.
\end{assumption}

We note that Assumption~\ref{ass:pcont} holds true in studied finite spaces, if each transition matrix entry of $P$ is continuous in the $|\mathcal X|$-dimensional MF vector on the simplex (but not necessarily Lipschitz as in \citep{gu2021mean, mondal2022approximation}, the conditions of which we relax for deterministic MFC).

We show a dynamic programming principle \citep{hernandez2012discrete} to solve for and show existence of a deterministic, stationary optimal policy via the value function $V^*$ as the fixed point of the Bellman equation $V^*(\mu) = \max_{h \in \mathcal H(\mu)} r(\mu) + \gamma V^*(T(\mu, h))$.

\begin{theorem} \label{thm:dpp}
Under Assumptions~\ref{ass:pcont}, there exists an optimal stationary, deterministic policy $\hat \pi$ for \eqref{eq:mfc}, with $\hat \pi(\mu) \in \argmax_{h \in \mathcal H(\mu)} r(\mu) + \gamma V^*(T(\mu, h))$.
\end{theorem}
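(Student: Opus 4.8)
The plan is to recognize the MFC system \eqref{eq:mfc} as a deterministic, infinite-horizon discounted Markov control process on the compact state space $\mathcal P(\mathcal X)$ with state-dependent compact action sets $\mathcal H(\mu)$, and then to invoke the standard dynamic programming theory for such processes on Borel spaces \citep{hernandez2012discrete}. The genuine work lies only in verifying the topological hypotheses — compactness and the right continuity of the transition map and of the constraint correspondence — under the mild Assumption~\ref{ass:pcont}; once these are in place, existence of $V^*$ and of a deterministic stationary optimizer is a black-box consequence of the contraction and measurable-selection theorems.

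First I would fix the topology. By Prokhorov's theorem $\mathcal P(\mathcal X)$ and $\mathcal P(\mathcal X \times \mathcal U)$ are compact metric, so each $\mathcal H(\mu) = \{ h \in \mathcal P(\mathcal X \times \mathcal U) : h(\cdot \times \mathcal U) = \mu \}$ is closed and hence compact, and $r$ is continuous and thus bounded. Next I would establish the two continuity facts feeding Berge's maximum theorem. For the transition $T(\mu, h) = \iint p(\cdot \mid x, u, \mu)\, h(\mathrm dx, \mathrm du)$, testing against a bounded continuous $g$ reduces matters to $G(x, u, \mu) \coloneqq \int g(y)\, p(\mathrm dy \mid x, u, \mu)$, which is continuous by Assumption~\ref{ass:pcont} and, on its compact domain, \emph{uniformly} continuous; splitting $\iint G(x,u,\mu_n)\, h_n - \iint G(x,u,\mu)\, h$ into a marginal-limit term (handled by $h_n \to h$ weakly) and a term controlled by $\sup_{x,u} |G(x,u,\mu_n) - G(x,u,\mu)| \to 0$ yields joint continuity of $T$. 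For the correspondence $\mu \mapsto \mathcal H(\mu)$, upper hemicontinuity is immediate from continuity of the marginal projection, while lower hemicontinuity I would obtain by a gluing/disintegration argument: disintegrate a target $h = \mu \otimes \kappa$ and transport its first marginal from $\mu$ to $\mu_n$ along a near-optimal coupling to produce $h_n \in \mathcal H(\mu_n)$ with $h_n \to h$.

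With these facts I would define the Bellman operator $(\mathcal B V)(\mu) \coloneqq \max_{h \in \mathcal H(\mu)} r(\mu) + \gamma V(T(\mu, h))$ on the Banach space $C(\mathcal P(\mathcal X))$ of continuous functions under the supremum norm. Berge's maximum theorem then shows $\mathcal B$ maps $C(\mathcal P(\mathcal X))$ into itself and that the argmax correspondence is nonempty, compact-valued and upper hemicontinuous; a direct estimate shows $\mathcal B$ is a $\gamma$-contraction, so Banach's fixed-point theorem gives a unique continuous fixed point $V^*$ solving the stated Bellman equation. Applying a measurable selection theorem (Kuratowski–Ryll-Nardzewski) to the argmax correspondence then produces a measurable stationary deterministic selector $\hat \pi(\mu) \in \argmax_{h \in \mathcal H(\mu)} r(\mu) + \gamma V^*(T(\mu, h))$, and the usual verification argument of \citep{hernandez2012discrete} identifies $V^*$ with the optimal value and $\hat \pi$ as optimal.

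I expect the main obstacle to be precisely the joint continuity of $T$ together with the lower hemicontinuity of $\mathcal H(\cdot)$, since these are the steps where only bare continuity of $p$ (rather than Lipschitzness, as in \citep{gu2021mean, mondal2022approximation}) is available and must be upgraded through the compactness-induced uniform continuity. In particular, the coupling construction for lower hemicontinuity and the interchange of the weak limit with the $\mu$-dependence buried inside $p$ are where care is needed; reducing weak convergence to the countable family of bounded continuous integrals defining the uniformly equivalent metric $d_\Sigma$ is the technical device I would use to make these limits rigorous.
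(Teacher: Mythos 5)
Your proposal is correct, and it shares the paper's overall frame: recast \eqref{eq:mfc} as a discounted Markov control process on the compact state space $\mathcal P(\mathcal X)$ with state-dependent actions $\mathcal H(\mu)$, prove compactness of $\mathcal H(\mu)$ by exhibiting it as a closed subset of the compact $\mathcal P(\mathcal X \times \mathcal U)$, and establish joint continuity of $T$ by testing against (uniformly) continuous functions and splitting into a kernel-perturbation term and a weak-convergence term -- this last step is essentially identical to the paper's Lemma~\ref{lem:Tcont}. Where you genuinely diverge is in the dynamic-programming core. The paper (Appendix~\ref{app:thm:dpp}) does not rebuild the machinery: it verifies the hypotheses of \citet{hernandez2012discrete}, Assumption~4.2.1 -- the cost $-r$ is continuous, bounded, and inf-compact on the state-action set (its sublevel sets in $\mathcal H(\mu)$ are either all of $\mathcal H(\mu)$ or empty), and the deterministic kernel $\delta_{T(\mu,h)}$ is weakly continuous -- and then invokes their Theorem~4.2.3, which works entirely within the lower-semicontinuous framework. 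Consequently the paper never needs lower hemicontinuity of $\mu \mapsto \mathcal H(\mu)$: minimizing an l.s.c.\ cost over a compact-valued, closed-graph constraint set already gives attained minima, (semi)continuous value iterates, and a measurable selector. Your route via Berge's maximum theorem, a $\gamma$-contraction on $C(\mathcal P(\mathcal X))$, and Kuratowski--Ryll-Nardzewski selection does require that extra ingredient, and your coupling construction for it is sound: writing $h = \mu \otimes \kappa$ and taking optimal couplings $\gamma_n$ of $(\mu_n, \mu)$, the measures $h_n(\mathrm dx, \mathrm du) = \int \kappa(\mathrm du \mid y)\, \gamma_n(\mathrm dx, \mathrm dy)$ lie in $\mathcal H(\mu_n)$ and converge weakly to $h$, since test functions on the compact $\mathcal X \times \mathcal U$ are uniformly continuous and $\iint d(x,y)\, \gamma_n(\mathrm dx, \mathrm dy) \to 0$. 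What your extra work buys is a self-contained argument and a strictly stronger conclusion -- $V^*$ is continuous, not merely measurable/l.s.c., and the argmax correspondence is upper hemicontinuous; what the paper's route buys is brevity, delegating fixed-point existence, attainment of maxima, and measurable selection to the cited theorem.
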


This DPP can be used for computing solutions or to show optimality of stationary policies and existence of an optimum. Next, we show \textbf{propagation of chaos} \citep{sznitman1991topics}. Here, prior proof techniques \citep{gu2021mean, mondal2022approximation} are extended by our approach from \textit{finite} to general \textit{compact} spaces.

\begin{theorem} \label{thm:muconv}
Fix any family of equicontinuous functions $\mathcal F \subseteq \mathbb R^{\mathcal P(\mathcal X)}$. Under Assumptions~\ref{ass:pcont} and \ref{ass:picont}, the empirical MF converges weakly, uniformly over $f \in \mathcal F$, $\pi \in \Pi$, $\hat \pi = \Phi^{-1}(\pi)$, to the limiting MF at all times $t \in \mathbb N$, $\sup_{\pi \in \Pi} \sup_{f \in \mathcal F} \left| \E \left[ f(\mu^N_{t}) \right] - \E \left[ f(\mu_{t}) \right] \right| \to 0$.
\end{theorem}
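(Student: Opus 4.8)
The plan is to reduce the statement about the test family $\mathcal F$ to convergence in a single weak metric, and then run an induction on $t$. Since $\mathcal X$ is compact, $\mathcal P(\mathcal X)$ is compact (Prokhorov), so the pointwise equicontinuous family $\mathcal F$ is in fact \emph{uniformly} equicontinuous and admits a common modulus of continuity $\omega_{\mathcal F}$ with respect to the weak metric $d_\Sigma$; replacing it by a concave majorant, I may assume $\omega_{\mathcal F}$ is concave, bounded, and $\omega_{\mathcal F}(0)=0$. Because there is no common noise, the limiting MF $\mu_t$ is deterministic given $\pi$, so $\E[f(\mu_t)]=f(\mu_t)$ and hence $\sup_{f}\lvert\E[f(\mu^N_t)]-f(\mu_t)\rvert \le \E[\omega_{\mathcal F}(d_\Sigma(\mu^N_t,\mu_t))] \le \omega_{\mathcal F}(\E[d_\Sigma(\mu^N_t,\mu_t)])$ by Jensen. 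Taking $\sup_\pi$ and using continuity of $\omega_{\mathcal F}$ at $0$, it therefore suffices to prove $a_N^t \coloneqq \sup_{\pi\in\Pi}\E[d_\Sigma(\mu^N_t,\mu_t)] \to 0$ for every fixed $t$.

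I would establish this by induction on $t$. For the base case, $\mu^N_0$ is the empirical measure of $N$ i.i.d.\ draws from $\mu_0$; testing against the functions $f_m$ in the definition of $d_\Sigma$ and using $\lvert f_m\rvert\le 1$, an elementary variance bound gives $\E[d_\Sigma(\mu^N_0,\mu_0)] \le \sum_m 2^{-m} N^{-1/2} \le N^{-1/2}$, uniformly in $\pi$. For the inductive step, I would introduce the one-step prediction $\hat\mu^N_{t+1} \coloneqq T(\mu^N_t, \mu^N_t\otimes\pi_t(\mu^N_t)) = \E[\mu^N_{t+1}\mid\mathcal G_t]$, where $\mathcal G_t$ collects the minor states at time $t$, and split via the triangle inequality $d_\Sigma(\mu^N_{t+1},\mu_{t+1}) \le d_\Sigma(\mu^N_{t+1},\hat\mu^N_{t+1}) + d_\Sigma(\hat\mu^N_{t+1},\mu_{t+1})$.

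The first (fluctuation) term is controlled by a conditional law of large numbers: given $\mathcal G_t$, the next states $x^{i,N}_{t+1}$ are independent with conditional mean laws averaging to exactly $\hat\mu^N_{t+1}$, so the same $f_m$-variance argument yields $\E[d_\Sigma(\mu^N_{t+1},\hat\mu^N_{t+1})\mid\mathcal G_t] \le N^{-1/2}$ uniformly. The second term equals $d_\Sigma(\Psi^\pi_t(\mu^N_t),\Psi^\pi_t(\mu_t))$ for the one-step mean-field map $\Psi^\pi_t(\mu) \coloneqq T(\mu,\mu\otimes\pi_t(\mu))$, and the key lemma is that the family $\{\Psi^\pi_t : \pi\in\Pi,\ t\in\mathbb N\}$ is uniformly equicontinuous in the weak topology, with a common concave modulus $\omega_\Psi$. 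Granting this, $\E[d_\Sigma(\mu^N_{t+1},\mu_{t+1})] \le N^{-1/2} + \omega_\Psi(\E[d_\Sigma(\mu^N_t,\mu_t)])$ by Jensen, and taking $\sup_\pi$ gives $a_N^{t+1}\le N^{-1/2}+\omega_\Psi(a_N^t)$, which closes the induction since $a_N^t\to 0$ and $\omega_\Psi$ is continuous at $0$.

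The main obstacle is establishing this uniform equicontinuity from mere continuity of $p$ (Assumption~\ref{ass:pcont}), rather than Lipschitzness, on compact spaces. I would test $\Psi^\pi_t(\mu)$ against each $f_m$, writing $\int f_m\,\mathrm d\Psi^\pi_t(\mu) = \iint F_m(x,u,\mu)\,\pi_t(\mathrm du\mid x,\mu)\,\mu(\mathrm dx)$ with $F_m(x,u,\mu)\coloneqq\int f_m(y)\,p(\mathrm dy\mid x,u,\mu)$, which is bounded and continuous on the compact product space, hence uniformly continuous. Comparing $\mu$ with $\mu'$ splits the difference into the change in $F_m$'s measure argument, the change in the decision rule $\pi_t(\cdot\mid x,\mu)$, and the change in the outer integrator; the first and third are handled by the uniform continuity of $F_m$ together with the uniform equivalence of $d_\Sigma$ and $W_1$ on the compact set $\mathcal P(\mathcal X)$, while the second uses the $L_\Pi$-Lipschitzness of $\pi_t$ from Assumption~\ref{ass:picont}. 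The delicate point is the action term: $F_m$ is only uniformly continuous (not Lipschitz) in $u$, so pairing it against the $W_1$-small policy increment requires an $\varepsilon/3$ argument, approximating $F_m$ by Lipschitz functions before invoking the Wasserstein bound. Collecting these estimates yields a single concave modulus $\omega_\Psi$ valid for all $\pi\in\Pi$ and all $t$, which is precisely the uniformity the induction consumes.
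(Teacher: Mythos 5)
Your proposal is correct, and it is essentially the route the paper explicitly mentions and then deliberately avoids. At the start of its proof, the paper notes that one ``can also show the slightly stronger $L_1$ convergence statement with the absolute value inside of the expectation'' but refrains, because that statement only holds for deterministic MFC and the authors want the argument to carry over verbatim to the common-noise/major-agent case (Theorem~\ref{thm:m3muconv}), where the limiting MF is random and no pathwise coupling makes $d_\Sigma(\mu^N_t,\mu_t)$ small. Your induction runs \emph{forward} on the scalar quantity $a_N^t = \sup_\pi \E[d_\Sigma(\mu^N_t,\mu_t)]$, via the recursion $a_N^{t+1} \leq C N^{-1/2} + \omega_\Psi(a_N^t)$; the paper instead inducts on the weak-convergence statement itself over \emph{arbitrary} equicontinuous test families, handling the drift term by composing test functions \emph{backward} through the one-step map (defining $\mathcal G = \{ f \circ \tilde T^{\pi_t} \}$ and applying the induction hypothesis to $\mathcal G$). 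The two proofs share all the core ingredients: the one-step prediction $\hat\mu^N_{t+1} = T(\mu^N_t, \mu^N_t \otimes \pi_t(\mu^N_t))$ as pivot, the conditional variance/weak-LLN bound for the fluctuation term, Jensen with a concave modulus, and uniform equicontinuity of the one-step mean-field map over $\pi \in \Pi$ and $t$ (your $\omega_\Psi$ is the paper's $\omega_T$, which they obtain from Lemma~\ref{lem:Tcont} plus the equi-Lipschitzness of $\mu \mapsto \mu \otimes \pi_t(\mu)$, whereas you argue directly on $F_m(x,u,\mu) = \int f_m \, \mathrm dp(\cdot \mid x,u,\mu)$ with an $\varepsilon/3$ Lipschitz approximation — both are valid, though you should note that the $f_m$ in $d_\Sigma$ need not be equicontinuous in $m$, so one either truncates the weighted sum or works per fixed $m$). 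What each approach buys: yours is more elementary, proves the stronger $L_1$ conclusion, and exposes an explicit recursion that would yield a rate if $\omega_\Psi$ were Lipschitz (cf. the paper's $\mathcal O(1/\sqrt N)$ claim for finite $\mathcal X$); the paper's buys robustness — since test functions rather than trajectories are propagated, the same skeleton survives when $\mu_t$ becomes random, which is exactly what the M3FC extension requires.
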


Importantly, propagation of chaos allows one to show approximate optimality of MFC policies in the large finite control problem, which is of practical relevance for solving many-agent problems.

\begin{corollary} \label{coro:epsopt}
Under Assumptions~\ref{ass:pcont} and \ref{ass:picont}, an optimal deterministic MFC policy  $\pi^* \in \argmax_{\hat \pi} J(\hat \pi)$ yields $\varepsilon$-optimal finite-agent policy $\Phi(\pi^*) \in \Pi$, $J^N(\Phi(\pi^*)) \geq \sup_{\pi \in \Pi} J^N(\pi) - \varepsilon$, with $\varepsilon \to 0$ as $N \to \infty$.
\end{corollary}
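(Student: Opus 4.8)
The plan is to reduce the claim to a convergence of objectives obtained from propagation of chaos (Theorem~\ref{thm:muconv}), combined with a sandwich argument that exploits the optimality of $\pi^*$ in the limiting MFC MDP. Throughout I write $\hat\pi = \Phi^{-1}(\pi)$ for the MFC policy corresponding to a finite-agent policy $\pi \in \Pi$.

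First I would show that the finite-agent objective converges to the limiting objective \emph{uniformly over the policy class}. For $\pi \in \Pi$ we have $J^N(\pi) = \E[\sum_t \gamma^t r(\mu^N_t)]$ and $J(\hat\pi) = \E[\sum_t \gamma^t r(\mu_t)]$. Since $r$ is continuous on the compact space $\mathcal P(\mathcal X)$ (Assumption~\ref{ass:pcont}), it is bounded, $\|r\|_\infty < \infty$, and the singleton $\mathcal F = \{r\}$ is trivially equicontinuous. Splitting the discounted sum at a horizon $\mathcal T$,
$$\left| J^N(\pi) - J(\hat\pi) \right| \leq \sum_{t=0}^{\mathcal T-1} \gamma^t \left| \E[r(\mu^N_t)] - \E[r(\mu_t)] \right| + \frac{2 \gamma^{\mathcal T}}{1-\gamma} \|r\|_\infty,$$
where the tail is made arbitrarily small by choosing $\mathcal T$ large (using boundedness of $r$), while each of the finitely many head terms vanishes uniformly over $\pi \in \Pi$ as $N \to \infty$ by Theorem~\ref{thm:muconv} applied to $\mathcal F = \{r\}$. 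Hence $\varepsilon_N \coloneqq \sup_{\pi \in \Pi} |J^N(\pi) - J(\Phi^{-1}(\pi))| \to 0$.

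Then I would conclude by chaining. For any $\pi \in \Pi$, the uniform bound gives $J^N(\pi) \leq J(\Phi^{-1}(\pi)) + \varepsilon_N$, and since $\pi^*$ maximizes $J$ over all MFC policies, $J(\Phi^{-1}(\pi)) \leq J(\pi^*)$. Using $\Phi^{-1}(\Phi(\pi^*)) = \pi^*$ (valid since $\Phi(\pi^*) \in \Pi$ by hypothesis), the same uniform bound applied to $\Phi(\pi^*)$ yields $J(\pi^*) \leq J^N(\Phi(\pi^*)) + \varepsilon_N$. Combining these, $J^N(\pi) \leq J^N(\Phi(\pi^*)) + 2\varepsilon_N$ for every $\pi \in \Pi$, so taking the supremum over $\pi$ establishes the claim with $\varepsilon = 2\varepsilon_N \to 0$.

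The main obstacle is establishing the \emph{uniform}-in-$\pi$ convergence of objectives: it is precisely here that Assumption~\ref{ass:picont} (equi-Lipschitz policies) enters, through Theorem~\ref{thm:muconv}, allowing the interchange of the $N \to \infty$ limit with the supremum over the whole policy class; without uniformity one could not pass from pointwise objective convergence to the optimality transfer. The remaining ingredients — the geometric tail estimate from discounting and the two-sided sandwich — are routine. The argument is structurally identical to the one required for Corollary~\ref{coro:m3epsopt} in the major-minor setting, with $r(\mu_t)$ replaced by $r(x^0_t, u^0_t, \mu_t)$ and Theorem~\ref{thm:muconv} used in its M3FC form.
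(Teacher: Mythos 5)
Your proposal is correct and follows essentially the same route as the paper: the paper's proof first establishes your uniform objective convergence $\sup_{\pi \in \Pi} |J^N(\pi) - J(\Phi^{-1}(\pi))| \to 0$ as Lemma~\ref{lem:Jconv} (via the same tail-truncation of the discounted sum plus Theorem~\ref{thm:muconv} on the head terms), and then concludes with the same three-term sandwich exploiting optimality of $\pi^*$ in the limit. Your chained-inequality formulation with $\varepsilon = 2\varepsilon_N$ is just a rewriting of the paper's decomposition into three infima, each bounded by $-\varepsilon/2$, $0$, and $-\varepsilon/2$ respectively.
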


\section{Continuity of MF dynamics}
First, we find continuity of the MFC dynamics $T$, which is used in the following proofs.

\begin{lemma} \label{lem:Tcont}
Under Assumption~\ref{ass:pcont}, we have $T(\mu_n, \nu_n) \to T(\mu, \nu)$ whenever $(\mu_n, \nu_n) \to (\mu, \nu)$, 
\end{lemma}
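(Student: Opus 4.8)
The plan is to prove weak convergence of $T(\mu_n, \nu_n)$ to $T(\mu, \nu)$ in $\mathcal P(\mathcal X)$ by testing against an arbitrary continuous $g \colon \mathcal X \to \mathbb R$, which is automatically bounded since $\mathcal X$ is compact. Introducing the abbreviation $G(x, u, \mu) \coloneqq \int g(y) \, p(\mathrm dy \mid x, u, \mu)$, the defining formula $T(\mu, \nu) = \iint p(\cdot \mid x, u, \mu) \, \nu(\mathrm dx, \mathrm du)$ gives by Fubini $\int g \, \mathrm d T(\mu, \nu) = \iint G(x, u, \mu) \, \nu(\mathrm dx, \mathrm du)$. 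Hence it suffices to show $\iint G(x, u, \mu_n) \, \nu_n \to \iint G(x, u, \mu) \, \nu$, after which arbitrariness of $g$ yields the claim.

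The two ingredients I would then assemble are continuity of the integrand and uniform control of its perturbation. First, Assumption~\ref{ass:pcont} means $p$ is weakly continuous in $(x, u, \mu)$, and integrating the fixed continuous bounded function $g$ against weakly convergent measures preserves limits, so $(x, u, \mu) \mapsto G(x, u, \mu)$ is continuous on $\mathcal X \times \mathcal U \times \mathcal P(\mathcal X)$. Since $\mathcal X$, $\mathcal U$ are compact and $\mathcal P(\mathcal X)$ is compact by Prokhorov's theorem, this product is compact, so $G$ is bounded and \emph{uniformly} continuous. I would then split via the triangle inequality,
\[
\left| \iint G(x, u, \mu_n) \, \nu_n - \iint G(x, u, \mu) \, \nu \right| \le \sup_{x, u} \left| G(x, u, \mu_n) - G(x, u, \mu) \right| + \left| \iint G(x, u, \mu) \, (\nu_n - \nu) \right|,
\]
where the first term bounds the contribution of $\iint [G(x,u,\mu_n) - G(x,u,\mu)] \, \nu_n$ using that $\nu_n$ has total mass one. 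The first term vanishes because uniform continuity of $G$ on the compact domain turns $\mu_n \to \mu$ into convergence uniform in $(x, u)$; the second vanishes because $G(\cdot, \cdot, \mu)$ is now a \emph{fixed} continuous bounded test function on $\mathcal X \times \mathcal U$ and $\nu_n \to \nu$ weakly.

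The only genuine obstacle is the simultaneous variation of the integrand (through its dependence on $\mu_n$) and the integrating measure $\nu_n$: neither weak convergence of $\nu_n$ alone nor pointwise convergence of $G(\cdot,\cdot,\mu_n)$ alone would suffice. This is exactly what the decomposition above isolates, and what uniform continuity of $G$ on the compact space $\mathcal X \times \mathcal U \times \mathcal P(\mathcal X)$ resolves. I would emphasize that this argument uses only continuity of $p$, not Lipschitzness, consistent with the relaxed hypotheses advertised for the deterministic MFC setting.
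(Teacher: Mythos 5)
Your proof is correct and follows essentially the same route as the paper's: the identical two-term decomposition (change the MF argument $\mu_n \to \mu$ inside the integrand against $\nu_n$, then test the fixed integrand against $\nu_n - \nu$), with the first term controlled by uniform continuity on the compact product $\mathcal X \times \mathcal U \times \mathcal P(\mathcal X)$ and the second by weak convergence of $\nu_n$. The only cosmetic difference is that the paper tests against Lipschitz functions and phrases the first bound via $W_1(p(\cdot \mid x,u,\mu_n), p(\cdot \mid x,u,\mu))$, whereas you test against general continuous functions and absorb that step into the uniform continuity of $G$; these are equivalent on compact spaces.
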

\begin{proof}
To show $T(\mu_n, \nu_n) \to T(\mu, \nu)$, consider any Lipschitz and bounded $f$ with Lipschitz constant $L_f$, then
\begin{align*}
    &\left| \int f \, \mathrm d(T(\mu_n, \nu_n) - T(\mu, \nu)) \right| \\
    &= \left| \iiint f(x') p(\mathrm dx' \mid x, u, \mu_n) \nu_n(\mathrm dx, \mathrm du) - \iiint f(x') p(\mathrm dx' \mid x, u, \mu) \nu(\mathrm dx, \mathrm du) \right| \\
    &\quad \leq \iint \left| \int f(x') p(\mathrm dx' \mid x, u, \mu_n) - \int f(x') p(\mathrm dx' \mid x, u, \mu) \right| \nu_n(\mathrm dx, \mathrm du) \\
    &\qquad + \left| \iiint f(x') p(\mathrm dx' \mid x, u, \mu) (\nu_n(\mathrm dx, \mathrm du) - \nu(\mathrm dx, \mathrm du)) \right| \\
    &\quad \leq \sup_{x \in \mathcal X, u \in \mathcal U} L_f W_1(p(\cdot \mid x, u, \mu_n), p(\cdot \mid x, u, \mu)) \\
    &\qquad + \left| \iiint f(x') p(\mathrm dx' \mid x, u, \mu) (\nu_n(\mathrm dx, \mathrm du) - \nu(\mathrm dx, \mathrm du)) \right| \to 0
\end{align*}
for the first term by $1$-Lipschitzness of $\frac{f}{L_f}$ and Assumption~\ref{ass:pcont} (with compactness implying the uniform continuity), and for the second by $\nu_n \to \nu$ and from continuity by the same argument of $(x,u) \mapsto \iint f(x') p(\mathrm dx' \mid x, u, \mu)$.
\end{proof}

\section{Proof of Theorem~\ref{thm:dpp}} \label{app:thm:dpp}
\begin{proof}
The MFC MDP fulfills \citep{hernandez2012discrete}, Assumption~4.2.1. Here, we use \citep{hernandez2012discrete}, Condition~3.3.4(b1) instead of (b2), see also alternatively \citep{hernandez1992discrete}. 

More specifically, for \citep{hernandez2012discrete}, Assumption~4.2.1(a), the cost function $-r$ is continuous by Assumption~\ref{ass:pcont}, therefore also bounded by compactness of $\mathcal P(\mathcal X)$, and finally also inf-compact on the state-action space of the MFC MDP, since for any $\mu \in \mathcal P(\mathcal X)$ the set $\{ h \in \mathcal H(\mu) \mid -r(\mu) \leq c \}$ is trivially given by $\mathcal H(\mu)$ whenever $-r(\mu) \leq c$, and $\emptyset$ otherwise. Here, we show that $\mathcal H(\mu) \subseteq \mathcal P(\mathcal X \times \mathcal U)$ is a closed subset of the compact space $\mathcal P(\mathcal X \times \mathcal U)$ and therefore also compact. Note first that two measures $\mu, \mu' \in \mathcal P(\mathcal X)$ are equal if and only if for all continuous and bounded $f$ we have $\int f \, \mathrm d\mu = \int f \, \mathrm d\mu'$, see e.g. \citep{billingsley2013convergence}, Theorem 1.3. 

Therefore, as $\mathcal H(\mu)$ is defined by its first marginal $\mu$, $\mathcal H(\mu)$ can be written as an intersection
\begin{align*}
    \mathcal H(\mu) = \bigcap_{f \in C_b(\mathcal X)} \left\{ h \in \mathcal P(\mathcal X \times \mathcal U) \innermid \int f \otimes \mathbf 1 \, \mathrm dh = \int f \, \mathrm d\mu \right\}
\end{align*}
of closed sets: Since $h \mapsto \int f \otimes \mathbf 1 \, \mathrm dh$ is continuous, its preimage of the closed set $\{ \int f \, \mathrm d\mu \}$ is closed. Here, $\otimes$ denotes the tensor product of $f$ with the function $\mathbf 1$ equal one, i.e. $f \otimes \mathbf 1$ is the map $(x, u) \mapsto f(x)$.

Similarly, for \citep{hernandez2012discrete}, Assumption~4.2.1(b), the transition dynamics $T$ are weakly continuous, as for any $(\mu_n, \nu_n) \to (\mu, \nu) \in \mathcal P(\mathcal X) \times \mathcal P(\mathcal X \times \mathcal U)$ we have $T(\mu_n, \nu_n) \to T(\mu, \nu)$ by Lemma~\ref{lem:Tcont} and therefore $\int f \, \mathrm d\delta_{T(\mu_n, \nu_n)} = f(T(\mu_n, \nu_n)) \to f(T(\mu, \nu)) = \int f \, \mathrm d\delta_{T(\mu, \nu)}$ for any continuous and bounded $f \colon \mathcal P(\mathcal X) \to \mathbb R$. 

Furthermore, the MFC MDP fulfills \citep{hernandez2012discrete}, Assumption~4.2.2 by boundedness of $r$ from Assumption~\ref{ass:pcont}. Therefore, the desired statement follows from \citep{hernandez2012discrete}, Theorem~4.2.3.
\end{proof}

\section{Proof of Theorem~\ref{thm:muconv}} \label{app:thm:muconv}
\begin{proof}
Note that we can also show the slightly stronger $L_1$ convergence statement with the absolute value inside of the expectation, $\sup_{\pi \in \Pi} \sup_{f \in \mathcal F} \E \left[ \left| f(\mu^N_{t}) - f(\mu_{t}) \right| \right] \to 0$, but since this statement is only true for deterministic MFC, we avoid it here to later extend our proof directly to M3FC.

The statement $\sup_{\pi \in \Pi} \sup_{f \in \mathcal F} \left| \E \left[ f(\mu^N_{t}) \right] - \E \left[ f(\mu_{t}) \right] \right| \to 0$ is shown inductively over $t \geq 0$. At time $t=0$, it holds by the weak LLN argument, see also the first term below. Assuming the statement at time $t$, then for time $t+1$ we have
\begin{align}
    &\sup_{\pi \in \Pi} \sup_{f \in \mathcal F} \left| \E \left[ f(\mu^N_{t+1}) - f(\mu_{t+1}) \right] \right| \nonumber \\
    &\quad \leq \sup_{\pi \in \Pi} \sup_{f \in \mathcal F} \left| \E \left[ f(\mu^N_{t+1}) - f(T(\mu^N_t, \mu^N_t \otimes \pi_t(\mu^N_t))) \right] \right| \label{eq:first} \\
    &\qquad + \sup_{\pi \in \Pi} \sup_{f \in \mathcal F} \left| \E \left[ f(T(\mu^N_t, \mu^N_t \otimes \pi_t(\mu^N_t))) - f(\mu_{t+1}) \right] \right|. \label{eq:second}
\end{align}

For the first term \eqref{eq:first}, first note that by compactness of $\mathcal P(\mathcal X)$, $\mathcal F$ is uniformly equicontinuous, and hence admits a non-decreasing, concave (as in \citep{devore1993constructive}, Lemma~6.1) modulus of continuity $\omega_{\mathcal F} \colon [0, \infty) \to [0, \infty)$ where $\omega_{\mathcal F}(x) \to 0$ as $x \to 0$ and $|f(\mu) - f(\nu)| \leq \omega_{\mathcal F}(W_1(\mu, \nu))$ for all $f \in \mathcal F$. 

We also have uniform equicontinuity of $\mathcal F$ with respect to the space $(\mathcal P(\mathcal X), d_\Sigma)$ instead of $(\mathcal P(\mathcal X), W_1)$, as the identity map $\mathrm{id} \colon (\mathcal P(\mathcal X), d_\Sigma) \to (\mathcal P(\mathcal X), W_1)$ is uniformly continuous (as both $d_\Sigma$ and $W_1$ metrize the topology of weak convergence, and $\mathcal P(\mathcal X)$ is compact), and therefore there exists a modulus of continuity $\tilde \omega$ for the identity map such that for any $\mu, \nu \in (\mathcal P(\mathcal X), d_\Sigma)$, by the prequel
\begin{align*}
    |f(\mu) - f(\nu)| \leq \omega_{\mathcal F}(W_1(\mathrm{id} \, \mu, \mathrm{id} \, \nu)) \leq \omega_{\mathcal F}(\tilde \omega(d_\Sigma(\mu, \nu)))
\end{align*}
with $\tilde \omega_{\mathcal F} \coloneqq \omega_{\mathcal F} \circ \tilde \omega$, which can be replaced by its least concave majorant (again as in \citep{devore1993constructive}, Lemma~6.1).

Therefore, by Jensen's inequality, for \eqref{eq:first} we obtain
\begin{align*}
    &\left| \E \left[ f(\mu^N_{t+1}) - f(T(\mu^N_t, \mu^N_t \otimes \pi_t(\mu^N_t))) \right] \right| \\
    &\quad \leq \E \left[ \tilde \omega_{\mathcal F}(d_\Sigma(\mu^N_{t+1}, T(\mu^N_t, \mu^N_t \otimes \pi_t(\mu^N_t)))) \right] \\
    &\quad \leq \tilde \omega_{\mathcal F} \left( \E \left[ d_\Sigma(\mu^N_{t+1}, T(\mu^N_t, \mu^N_t \otimes \pi_t(\mu^N_t))) \right] \right)
\end{align*}
irrespective of $\pi$, $f$ via concavity of $\tilde \omega_{\mathcal F}$. Introducing for readability $x^N_t \equiv \{x^{i,N}_t\}_{i\in[N]}$, we then obtain
\begin{align*}
    &\E \left[ d_\Sigma(\mu^N_{t+1}, T(\mu^N_t, \mu^N_t \otimes \pi_t(\mu^N_t))) \right] \\
    &\quad = \sum_{m=1}^\infty 2^{-m} \E \left[ \left| \int f_m \, \mathrm d(\mu^N_{t+1} - T(\mu^N_t, \mu^N_t \otimes \pi_t(\mu^N_t))) \right| \right] \\
    &\quad \leq \sup_{m \geq 1} \E \left[ \E_{x^N_t} \left[ \left| \int f_m \, \mathrm d(\mu^N_{t+1} - T(\mu^N_t, \mu^N_t \otimes \pi_t(\mu^N_t))) \right| \right] \right],
\end{align*}
and by the following weak LLN argument, for the squared term and any $f_m$
\begin{align*}
    &\E_{x^N_t} \left[ \left| \int f_m \, \mathrm d(\mu^N_{t+1} - T(\mu^N_t, \mu^N_t \otimes \pi_t(\mu^N_t))) \right| \right]^2 \\
    &\quad = \E_{x^N_t} \left[ \left| \frac 1 N \sum_{i=1}^N \left( f_m(x^{i,N}_{t+1}) - \E_{x^N_t} \left[ f_m(x^{i,N}_{t+1}) \right] \right) \right| \right]^2 \\
    &\quad \leq \E_{x^N_t} \left[ \left| \frac 1 N \sum_{i=1}^N \left( f_m(x^{i,N}_{t+1}) - \E_{x^N_t} \left[ f_m(x^{i,N}_{t+1}) \right] \right) \right|^2 \right] \\
    &\quad = \frac{1}{N^2} \sum_{i=1}^N \E_{x^N_t} \left[ \left( f_m(x^{i,N}_{t+1}) - \E_{x^N_t} \left[ f_m(x^{i,N}_{t+1}) \right] \right)^2 \right] \leq \frac{4}{N} \to 0
\end{align*}
by bounding $|f_m| \leq 1$, as the cross-terms are zero by conditional independence of $x^{i,N}_{t+1}$ given $x^N_t$. By the prequel, the term \eqref{eq:first} hence converges to zero.

For the second term \eqref{eq:second}, we have
\begin{align*}
    &\sup_{\pi \in \Pi} \sup_{f \in \mathcal F} \left| \E \left[ f(T(\mu^N_t, \mu^N_t \otimes \pi_t(\mu^N_t))) - f(\mu_{t+1}) \right] \right| \\
    &\quad = \sup_{\pi \in \Pi} \sup_{f \in \mathcal F} \left| \E \left[ f(T(\mu^N_t, \mu^N_t \otimes \pi_t(\mu^N_t))) - f(T(\mu_t, \mu_t \otimes \pi_t(\mu_t))) \right] \right| \\
    &\quad \leq \sup_{\pi \in \Pi} \sup_{g \in \mathcal G} \left| \E \left[ g(\mu^N_t) - g(\mu_t) \right] \right| \to 0
\end{align*}
by the induction assumption, where we defined $g = f \circ \tilde T^{\pi_t}$ from the class $\mathcal G$ of equicontinuous functions with modulus of continuity $\omega_{\mathcal G} \coloneqq \omega_{\mathcal F} \circ \omega_{T}$, where $\omega_T$ denotes the uniform modulus of continuity of $\mu_t \mapsto \tilde T^{\pi_t}(\mu_t) \coloneqq T(\mu_t, \mu_t \otimes \pi_t(\mu_t)))$ over all policies $\pi$. Here, this equicontinuity of $\{ \tilde T^{\pi_t} \}_{\pi \in \Pi}$ follows from Lemma~\ref{lem:Tcont} and the equicontinuity of functions $\mu_t \mapsto \mu_t \otimes \pi_t(\mu_t)$ due to uniformly Lipschitz $\Pi$ as we show in the following, completing the proof by induction:

Consider $\mu_n \to \mu \in \mathcal P(\mathcal X)$, then we have
\begin{align*}
    &\sup_{\pi \in \Pi} W_1(\mu_n \otimes \pi_t(\mu_n), \mu \otimes \pi_t(\mu)) \\
    &\quad = \sup_{\pi \in \Pi} \sup_{\lVert f' \rVert_{\mathrm{Lip}} \leq 1} \left| \int f' \, \mathrm d(\mu_n \otimes \pi_t(\mu_n) - \mu \otimes \pi_t(\mu)) \right| \\
    &\quad \leq \sup_{\pi \in \Pi} \sup_{\lVert f' \rVert_{\mathrm{Lip}} \leq 1} \left| \iint f'(x, u) (\pi_t(\mathrm du \mid x, \mu_n) - \pi_t(\mathrm du \mid x, \mu)) \mu_n(\mathrm dx) \right| \\
    &\qquad + \sup_{\pi \in \Pi} \sup_{\lVert f' \rVert_{\mathrm{Lip}} \leq 1} \left| \iint f'(x, u) \pi_t(\mathrm du \mid x, \mu) (\mu_n(\mathrm dx) - \mu(\mathrm dx)) \right|
\end{align*}
where for the first term
\begin{align*}
    &\sup_{\pi \in \Pi} \sup_{\lVert f' \rVert_{\mathrm{Lip}} \leq 1} \left| \iint f'(x, u) (\pi_t(\mathrm du \mid x, \mu_n) - \pi_t(\mathrm du \mid x, \mu)) \mu_n(\mathrm dx) \right| \\
    &\quad \leq \sup_{\pi \in \Pi} \sup_{\lVert f' \rVert_{\mathrm{Lip}} \leq 1} \int \left| \int f'(x, u) (\pi_t(\mathrm du \mid x, \mu_n) - \pi_t(\mathrm du \mid x, \mu)) \right| \mu_n(\mathrm dx) \\
    &\quad \leq \sup_{\pi \in \Pi} \sup_{\lVert f' \rVert_{\mathrm{Lip}} \leq 1} \sup_{x \in \mathcal X} \left| \int f'(x, u) (\pi_t(\mathrm du \mid x, \mu_n) - \pi_t(\mathrm du \mid x, \mu)) \right| \\
    &\quad = \sup_{\pi \in \Pi} \sup_{x \in \mathcal X} W_1(\pi_t(\cdot \mid x, \mu_n), \pi_t(\cdot \mid x, \mu)) \\
    &\quad \leq L_\Pi W_1(\mu_n, \mu) \to 0
\end{align*}
by Assumption~\ref{ass:picont}, and similarly for the second by first noting $1$-Lipschitzness of $x \mapsto \int \frac {f'(x, u)} {L_\Pi + 1}  \pi_t(\mathrm du \mid x, \mu)$, as for $y \neq x$
\begin{align}
    &\left| \int \frac {f'(y, u)} {L_\Pi + 1}  \pi_t(\mathrm du \mid y, \mu) - \int \frac {f'(x, u)} {L_\Pi + 1}  \pi_t(\mathrm du \mid x, \mu) \right| \nonumber\\
    &\quad \leq \left| \int \frac {f'(y, u) - f'(x, u)} {L_\Pi + 1}  \pi_t(\mathrm du \mid y, \mu) \right| + \left| \int \frac {f'(x, u)} {L_\Pi + 1} (\pi_t(\mathrm du \mid y, \mu) - \pi_t(\mathrm du \mid x, \mu)) \right| \nonumber\\
    &\quad \leq \frac{1}{L_\Pi + 1} d(y, x) + \frac{1}{L_\Pi + 1} W_1(\pi_t(\cdot \mid y, \mu), \pi_t(\cdot \mid x, \mu)) \nonumber\\
    &\quad \leq \left( \frac{1}{L_\Pi + 1} + \frac{L_\Pi}{L_\Pi + 1} \right) d(x, y) \label{eq:lipschitz-fpi}
\end{align}
with $\frac{1}{L_\Pi + 1} + \frac{L_\Pi}{L_\Pi + 1} = 1 \leq 1$, and therefore again
\begin{align*}
    &\sup_{\pi \in \Pi} \sup_{\lVert f' \rVert_{\mathrm{Lip}} \leq 1} \left| \iint f'(x, u) \pi_t(\mathrm du \mid x, \mu) (\mu_n(\mathrm dx) - \mu(\mathrm dx)) \right| \\
    &\quad = \sup_{\pi \in \Pi} \sup_{\lVert f' \rVert_{\mathrm{Lip}} \leq 1} (L_\Pi + 1) \left| \iint \frac{f'(x, u)}{L_\Pi + 1}  \pi_t(\mathrm du \mid x, \mu) (\mu_n(\mathrm dx) - \mu(\mathrm dx)) \right| \\
    &\quad \leq (L_\Pi + 1) W_1(\mu_n, \mu) \to 0.
\end{align*}
This completes the proof by induction.
\end{proof}

\section{Proof of Corollary~\ref{coro:epsopt}} \label{app:coro:epsopt}
\begin{proof}
First, we show that from uniform convergence in Theorem~\ref{thm:muconv}, the finite-agent objectives converge uniformly to the MFC limit.

\begin{lemma} \label{lem:Jconv}
Under Assumptions~\ref{ass:pcont} and \ref{ass:picont}, the finite-agent objective converges uniformly to the MFC limit,
\begin{equation}
    \sup_{\pi \in \Pi} \left| J^N(\pi) - J(\Phi^{-1}(\pi)) \right| \to 0.
\end{equation}
\end{lemma}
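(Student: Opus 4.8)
The plan is to reduce the claim to the propagation-of-chaos statement of Theorem~\ref{thm:muconv} applied to the single reward function $r$, after truncating the discounted sum at a finite horizon. First I would record that $r$ is continuous on the compact space $\mathcal P(\mathcal X)$ by Assumption~\ref{ass:pcont}, hence bounded, say $|r| \leq R$ for some finite $R$. Writing the objective difference as
\begin{equation*}
    J^N(\pi) - J(\Phi^{-1}(\pi)) = \sum_{t=0}^\infty \gamma^t \left( \E[r(\mu^N_t)] - \E[r(\mu_t)] \right),
\end{equation*}
I would split the sum at a finite horizon $T$ into a head $\sum_{t=0}^{T-1}$ and a tail $\sum_{t \geq T}$ and estimate each separately.

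For the tail, boundedness of $r$ yields the $\pi$-independent estimate $\sum_{t \geq T} \gamma^t \lvert \E[r(\mu^N_t)] - \E[r(\mu_t)] \rvert \leq \frac{2R \gamma^T}{1-\gamma}$, which can be made smaller than $\varepsilon/2$ by choosing $T$ large, uniformly over $\pi \in \Pi$ and over $N$. This is the step where the discount factor $\gamma \in (0,1)$ is essential, as it makes the tail negligible before any appeal to the limit.

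For the head, I would apply Theorem~\ref{thm:muconv} with the singleton family $\mathcal F = \{r\}$, which is trivially equicontinuous since a continuous function on a compact space is uniformly continuous. The theorem then gives, for each fixed $t$, that $\sup_{\pi \in \Pi} \lvert \E[r(\mu^N_t)] - \E[r(\mu_t)] \rvert \to 0$ as $N \to \infty$. Because the head consists only of the finitely many indices $t = 0, \ldots, T-1$, I can choose $N$ large enough that each of these $T$ terms is below $\frac{\varepsilon}{2T}$ uniformly in $\pi$, so the head contributes at most $\varepsilon/2$.

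Combining the two bounds gives $\sup_{\pi \in \Pi} \lvert J^N(\pi) - J(\Phi^{-1}(\pi)) \rvert < \varepsilon$ for all sufficiently large $N$, which is the claim. The only point needing care is uniformity in $\pi$, and this is not really an obstacle here: it is already supplied termwise by Theorem~\ref{thm:muconv}, while the truncation error is controlled uniformly through the $\pi$-free bound $R$. The finiteness of the head is precisely what converts the per-time-step uniform convergence into uniform convergence of the truncated sum, so no interchange of limit and supremum over the full infinite series is ever required.
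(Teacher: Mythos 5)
Your proof is correct and follows essentially the same route as the paper's: split the discounted sum at a finite horizon $T$, bound the tail uniformly via boundedness of $r$ (continuity on the compact space $\mathcal P(\mathcal X)$) and the geometric factor $\frac{\gamma^T}{1-\gamma}$, then invoke Theorem~\ref{thm:muconv} on the finitely many head terms (with $\mathcal F = \{r\}$) for uniformity in $\pi$. Your explicit remarks on why the singleton family is equicontinuous and why no interchange of limit and supremum over the infinite series is needed are just more detailed versions of steps the paper leaves implicit.
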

\begin{subproof}
    For any $\varepsilon > 0$, choose time $T \in \mathbb N$ such that $\sum_{t=T}^{\infty} \gamma^t \E \left| \left[ r(\mu^N_t) - r(\mu_t) \right] \right| \leq \frac{\gamma^T}{1-\gamma} \max_\mu 2 |r(\mu)| < \frac \varepsilon 2$. By Theorem~\ref{thm:muconv}, $\sum_{t=0}^{T-1} \gamma^t \E \left| \left[ r(\mu^N_t) - r(\mu_t) \right] \right| < \frac \varepsilon 2$ for sufficiently large $N$. The result follows.
\end{subproof}

The approximate optimality of MFC solutions in the finite system follows immediately: By Lemma~\ref{lem:Jconv}, we have
\begin{align*}
    &J^N(\Phi(\pi^*)) - \sup_{\pi \in \Pi} J^N(\pi) = \inf_{\pi \in \Pi} (J^N(\pi^*) - J^N(\pi)) \\
    &\quad \geq \inf_{\pi \in \Pi} (J^N(\Phi(\pi^*)) - J(\pi^*))
    + \inf_{\pi \in \Pi} (J(\pi^*) - J(\Phi^{-1}(\pi))) + \inf_{\pi \in \Pi} (J(\Phi^{-1}(\pi)) - J^N(\pi)) \\
    &\quad \geq - \frac \varepsilon 2 + 0 - \frac \varepsilon 2 = - \varepsilon
\end{align*}
for sufficiently large $N$, where the second term is zero by optimality of $\pi^*$ in the MFC problem.
\end{proof}

\section{Stochastic Mean Field Control with Common Noise and Major States} \label{app:msmfc} \label{app:majstate}
For convenience, we also restate the results for MFC with major states, or common noise. We have the finite MFC system with major states
\begin{subequations} \label{eq:msmmdp}
\begin{align}
    u^{i,N}_t &\sim \pi_t(u^{i,N}_t \mid x^{i,N}_t, x^{0,N}_t, \mu_t^N), \\
    x^{i,N}_{t+1} &\sim p(x^{i,N}_{t+1} \mid x^{i,N}_t, u^{i,N}_t, x^{0,N}_t, \mu_t^N), \quad
    x^{0,N}_{t+1} \sim p^0(x^{0,N}_{t+1} \mid x^{0,N}_t, \mu_t^N)
\end{align}
\end{subequations}
and objective $J^N(\pi) = \E \left[ \sum_{t=0}^{\infty} \gamma^t r(x^{0,N}_t, \mu^N_t) \right]$ analogous to \eqref{eq:mmdp}, with the corresponding limiting MFC MDP with major states analogous to \eqref{eq:mfc},
\begin{align} \label{eq:msmfc}
    h_t \sim \hat \pi_t(h_t \mid x^0_t, \mu_t), \quad
    \mu_{t+1} = T(x^0_t, \mu_t, h_t), \quad
    x^0_{t+1} \sim p^0(x^0_{t+1} \mid x^0_t, \mu_t)
\end{align}
with objective $J(\hat \pi) = \E \left[ \sum_{t=0}^{\infty} \gamma^t r(x^0_t, \mu_t) \right]$, where $T(x^0, \mu, h) \coloneqq \iint p(\cdot \mid x, u, x^0, \mu) h(\mathrm dx, \mathrm du)$.

\begin{assumption} \label{ass:mspcont}
The transition kernels $p$, $p^0$ and rewards $r$ are Lipschitz continuous with constants $L_p$, $L_{p^0}$, $L_r$.
\end{assumption}
\begin{assumption} \label{ass:mspicont}
The class of policies $\Pi$ are equi-Lipschitz, i.e. there exists $L_\Pi > 0$ such that for all $t$ and $\pi \in \Pi$, $\pi_t \in \mathcal P(\mathcal U)^{\mathcal X \times \mathcal P(\mathcal X)}$ is $L_\Pi$-Lipschitz.
\end{assumption}

\begin{theorem} \label{thm:msdpp}
Under Assumption~\ref{ass:mspcont}, there exists an optimal stationary, deterministic policy $\hat \pi$ for the MFC MDP \eqref{eq:msmfc} by choosing $\hat \pi(x^0, \mu)$ from the maximizers of $\argmax_{h \in \mathcal H(\mu)} r(x^0, \mu) + \gamma \mathbb E_{y^0 \sim p^0(y^0 \mid x^0, \mu)} V^*(y^0, T(x^0, \mu, h))$, with $V^*$ the unique fixed point of the Bellman equation $V^*(x^0, \mu) = \max_{h \in \mathcal H(\mu)} r(x^0, \mu) + \gamma \mathbb E_{y^0 \sim p^0(y^0 \mid x^0, \mu)} V^*(y^0, T(x^0, \mu, h))$ (value function).
\end{theorem}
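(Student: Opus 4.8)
The plan is to recast the MFC MDP with major states \eqref{eq:msmfc} as a standard infinite-horizon discounted MDP on the compact state space $\mathcal X^0 \times \mathcal P(\mathcal X)$ with action set $\mathcal H(\mu)$ (depending only on the current $\mu$), and then to invoke the classical dynamic programming results of \citet{hernandez2012discrete} exactly as in the proof of Theorem~\ref{thm:dpp}. The state space is compact as a finite product of compact spaces, and the one-step cost $-r(x^0, \mu)$ is continuous by Assumption~\ref{ass:mspcont}, hence bounded by compactness. Inf-compactness follows from compactness of $\mathcal H(\mu)$, which I would establish verbatim as before, writing $\mathcal H(\mu)$ as an intersection of closed marginal-constraint sets inside the compact space $\mathcal P(\mathcal X \times \mathcal U)$; since $r$ does not depend on the action, the relevant superlevel sets are either all of $\mathcal H(\mu)$ or empty, so inf-compactness is immediate.

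The only genuinely new ingredient is weak continuity of the combined transition kernel. The next state $(y^0, \mu_{t+1})$ has the product form $p^0(\mathrm dy^0 \mid x^0_t, \mu_t) \otimes \delta_{T(x^0_t, \mu_t, h_t)}$: the major component is stochastic and independent of the action $h_t$, while the MF component is the deterministic but state-dependent point mass at $T(x^0_t, \mu_t, h_t)$. First I would extend Lemma~\ref{lem:Tcont} to include the major argument, showing $T(x^0_n, \mu_n, \nu_n) \to T(x^0, \mu, \nu)$ whenever $(x^0_n, \mu_n, \nu_n) \to (x^0, \mu, \nu)$; the argument is identical, using the (Lipschitz, hence uniform) continuity of $p(\cdot \mid x, u, x^0, \mu)$ in all arguments from Assumption~\ref{ass:mspcont}. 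Second, $p^0(\cdot \mid x^0_n, \mu_n) \to p^0(\cdot \mid x^0, \mu)$ weakly, again by Assumption~\ref{ass:mspcont}.

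Given these two facts, for any continuous bounded $f \colon \mathcal X^0 \times \mathcal P(\mathcal X) \to \mathbb R$ I would show
\[
\int f(y^0, T(x^0_n, \mu_n, h_n)) \, p^0(\mathrm dy^0 \mid x^0_n, \mu_n) \to \int f(y^0, T(x^0, \mu, h)) \, p^0(\mathrm dy^0 \mid x^0, \mu)
\]
by splitting into two terms: one bounds $|f(y^0, T_n) - f(y^0, T)|$ uniformly in $y^0$ by the modulus of continuity of $f$ (which exists by uniform continuity on the compact product) evaluated at $W_1(T_n, T) \to 0$, where $T_n \coloneqq T(x^0_n, \mu_n, h_n)$; the other is $|\int f(y^0, T) \, (p^0(\mathrm dy^0 \mid x^0_n, \mu_n) - p^0(\mathrm dy^0 \mid x^0, \mu))| \to 0$ by weak convergence of the major kernels against the fixed continuous bounded map $y^0 \mapsto f(y^0, T)$. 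This is the step I expect to demand the most care, since the test function's second argument $T_n$ itself varies with $n$ while being integrated against a varying measure, so one cannot appeal directly to weak convergence and must interpose the uniform-continuity estimate.

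With weak continuity, inf-compactness and boundedness in hand, Assumptions~4.2.1 and 4.2.2 of \citet{hernandez2012discrete} hold (using Condition~3.3.4(b1) in place of (b2), as in the proof of Theorem~\ref{thm:dpp}), and their Theorem~4.2.3 then supplies the contraction property of the Bellman operator -- so that $\gamma \in (0,1)$ yields the unique fixed point $V^*$ -- together with the existence of a stationary deterministic optimal policy selecting $\hat\pi(x^0, \mu)$ from the stated $\argmax$. Compactness of $\mathcal H(\mu)$ and continuity of the objective in $h$ guarantee the $\argmax$ is attained, while a measurable selection theorem supplies measurability of the maximizer, completing the argument.
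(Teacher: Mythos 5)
Your proposal is correct and takes essentially the same route as the paper: the paper handles Theorem~\ref{thm:msdpp} by specializing its proof of Theorem~\ref{thm:m3dpp} to a trivial major action space, and that proof verifies exactly the conditions you list — inf-compactness via compactness of $\mathcal H(\mu)$ (written as an intersection of closed marginal-constraint sets), weak continuity of the kernel $p^0(\mathrm dy^0 \mid x^0, \mu) \otimes \delta_{T(x^0,\mu,h)}$ via the same two-term split (uniform continuity of $f$ against the moving argument $T_n$, then weak convergence of $p^0$ against the fixed test function), and boundedness — before invoking Theorem~4.2.3 of \citet{hernandez2012discrete} with Condition~3.3.4(b1). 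The step you flag as demanding the most care is precisely the step the paper isolates and handles the same way.
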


\begin{theorem} \label{thm:msmuconv}
Fix any family of equi-Lipschitz functions $\mathcal F \subseteq \mathbb R^{\mathcal X^0 \times \mathcal P(\mathcal X)}$ with shared Lipschitz constant $L_{\mathcal F}$ for all $f \in \mathcal F$. Under Assumption~\ref{ass:mspcont}, the random variable $(x^{0,N}_t, \mu_t^N)$ converges weakly, uniformly over $\mathcal F$, $\Pi$, to $(x^0_t, \mu_t)$ at all times $t \in \mathbb N$,
\begin{equation} \label{eq:msmuconv}
    \sup_{\pi \in \Pi} \sup_{f \in \mathcal F} \left| \E \left[ f(x^{0,N}_t, \mu_t^N) - f(x^0_t, \mu_t) \right] \right| \to 0.
\end{equation}
\end{theorem}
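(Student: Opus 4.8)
The plan is to prove \eqref{eq:msmuconv} by induction on $t$, understood as quantified over \emph{all} equi-Lipschitz families $\mathcal F$, extending the argument of Theorem~\ref{thm:muconv} so as to carry the major state $x^{0,N}_t$ alongside $\mu^N_t$. Writing $x^N_t \equiv \{x^{i,N}_t\}_{i\in[N]}$ and $\pi_t(x^0,\mu) \coloneqq \pi_t(\cdot \mid \cdot, x^0, \mu)$, set $\tilde T^{\pi_t}(x^0, \mu) \coloneqq T(x^0, \mu, \mu \otimes \pi_t(x^0, \mu))$, which is the conditional mean of $\mu^N_{t+1}$ given the time-$t$ configuration, and define
\[
    g^{\pi,f}_t(x^0, \mu) \coloneqq \int f(y^0, \tilde T^{\pi_t}(x^0, \mu)) \, p^0(\mathrm dy^0 \mid x^0, \mu).
\]
Because the minor and major transitions in \eqref{eq:msmmdp} use independent noise given $(x^N_t, x^{0,N}_t)$, one has $\E_{x^N_t, x^{0,N}_t}[f(x^{0,N}_{t+1}, \mu^N_{t+1})] = \int \E_{x^N_t, x^{0,N}_t}[f(y^0, \mu^N_{t+1})] \, p^0(\mathrm dy^0 \mid x^{0,N}_t, \mu^N_t)$, while in the limit $\E[f(x^0_{t+1}, \mu_{t+1})] = \E[g^{\pi,f}_t(x^0_t, \mu_t)]$ exactly, since the limiting MF update is deterministic given $(x^0_t, \mu_t)$ for $\hat\pi=\Phi^{-1}(\pi)$. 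I would therefore split
\[
    \E[f(x^{0,N}_{t+1}, \mu^N_{t+1})] - \E[f(x^0_{t+1}, \mu_{t+1})] = (\mathrm I) + (\mathrm{II}),
\]
with $(\mathrm I) = \E[f(x^{0,N}_{t+1}, \mu^N_{t+1})] - \E[g^{\pi,f}_t(x^{0,N}_t, \mu^N_t)]$ the fluctuation of $\mu^N_{t+1}$ about its conditional mean, and $(\mathrm{II}) = \E[g^{\pi,f}_t(x^{0,N}_t, \mu^N_t)] - \E[g^{\pi,f}_t(x^0_t, \mu_t)]$. The base case $t=0$ follows from matched initialization of the major state together with the weak LLN for $\mu^N_0 \to \mu_0$.

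For $(\mathrm I)$ I would reuse the fluctuation estimate of Theorem~\ref{thm:muconv} essentially verbatim. Conditioning on $(x^N_t, x^{0,N}_t)$ and integrating against $p^0(\cdot \mid x^{0,N}_t, \mu^N_t)$, it suffices to bound, for each fixed $y^0$, the increment $|f(y^0, \mu^N_{t+1}) - f(y^0, \tilde T^{\pi_t}(x^{0,N}_t, \mu^N_t))|$ by the concave $d_\Sigma$-modulus of continuity $\tilde \omega_{\mathcal F}$ of the equi-Lipschitz family $\mathcal F$ in its second argument. Jensen's inequality then reduces matters to $\E[d_\Sigma(\mu^N_{t+1}, \tilde T^{\pi_t}(x^{0,N}_t, \mu^N_t))]$, and since $\int f_m \, \mathrm d\tilde T^{\pi_t}(x^{0,N}_t, \mu^N_t) = \E_{x^N_t, x^{0,N}_t}[\int f_m \, \mathrm d\mu^N_{t+1}]$, the conditional independence of $x^{1,N}_{t+1}, \ldots, x^{N,N}_{t+1}$ yields the same variance bound $\le 4/N$ as before, whence $\E[d_\Sigma(\cdots)] \le 2/\sqrt N \to 0$ uniformly over $f \in \mathcal F$ and $\pi \in \Pi$. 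The extra $x^{0,N}_t$-dependence of $\tilde T^{\pi_t}$ is harmless here, as it is frozen under this conditioning.

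The main work is $(\mathrm{II})$. I would show that $\mathcal G \coloneqq \{g^{\pi,f}_t : \pi \in \Pi, \, f \in \mathcal F\}$ is an equi-Lipschitz family on $\mathcal X^0 \times \mathcal P(\mathcal X)$; the induction hypothesis applied to $\mathcal G$ then gives $(\mathrm{II}) \to 0$, since the independent suprema over $\pi$ and over $g \in \mathcal G$ in that hypothesis dominate the diagonal supremum tying $g^{\pi,f}_t$ to its own $\pi$ (exactly as in Theorem~\ref{thm:muconv}). To bound $|g^{\pi,f}_t(x^0,\mu) - g^{\pi,f}_t(\bar x^0, \bar\mu)|$ I would split it into a term controlled via Assumption~\ref{ass:mspcont} by $L_{\mathcal F} W_1(p^0(\cdot \mid x^0, \mu), p^0(\cdot \mid \bar x^0, \bar\mu)) \le L_{\mathcal F} L_{p^0} \, d((x^0,\mu),(\bar x^0,\bar\mu))$, with $d$ the product metric, plus a term bounded by $L_{\mathcal F} \, d(\tilde T^{\pi_t}(x^0,\mu), \tilde T^{\pi_t}(\bar x^0,\bar\mu))$. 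The principal and genuinely new piece is the equi-Lipschitz continuity of $(x^0, \mu) \mapsto \tilde T^{\pi_t}(x^0, \mu)$ uniformly over $\pi$, which is the extension of Lemma~\ref{lem:Tcont} and of the equicontinuity argument closing Theorem~\ref{thm:muconv} to the present setting: one must additionally track the $x^0$-dependence of the kernel $p$ (Lipschitz by Assumption~\ref{ass:mspcont}) and of the policy $\pi_t(\cdot \mid \cdot, x^0, \mu)$ (equi-Lipschitz by Assumption~\ref{ass:mspicont}), reusing the $1$-Lipschitz reparametrization of $x \mapsto \int f'(x, u) \, \pi_t(\mathrm du \mid x, x^0, \mu)$ from \eqref{eq:lipschitz-fpi}. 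Assembling these estimates yields a Lipschitz constant for $g^{\pi,f}_t$ depending only on $L_{\mathcal F}, L_p, L_{p^0}, L_\Pi$ and not on $f$ or $\pi$, which is exactly what the induction requires and completes the argument.
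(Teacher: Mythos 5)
Your proof is correct and takes essentially the same approach as the paper: the paper establishes Theorem~\ref{thm:msmuconv} by specializing its M3FC argument (Theorem~\ref{thm:m3muconv} together with Lemmas~\ref{lem:app:Tlip} and \ref{lem:app:glip}) to a trivial singleton major action space, and your induction --- LLN fluctuation term plus an equi-Lipschitz class $\mathcal G$ fed back into the induction hypothesis, with uniform Lipschitzness of $(x^0,\mu)\mapsto T(x^0,\mu,\mu\otimes\pi_t(x^0,\mu))$ as the key lemma --- is exactly that specialization. One minor remark: you correctly invoke the policy equi-Lipschitzness (Assumption~\ref{ass:mspicont}), which the theorem statement omits but evidently requires, just as its M3FC counterpart requires Assumption~\ref{ass:m3picont}.
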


\begin{corollary} \label{coro:msepsopt}
Under Assumptions~\ref{ass:mspcont} and \ref{ass:mspicont}, optimal deterministic MFC policies $\pi^* \in \argmax_{\pi} J(\pi)$ result in $\varepsilon$-optimal policies $\Phi(\pi^*)$ in the finite-agent problem with $\varepsilon \to 0$ as $N \to \infty$,
\begin{equation}
    J^N(\Phi(\pi^*)) \geq \sup_{\pi \in \Pi} J^N(\pi) - \varepsilon.
\end{equation}
\end{corollary}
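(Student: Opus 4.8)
The plan is to transcribe the proof of Corollary~\ref{coro:epsopt} essentially verbatim, replacing its propagation-of-chaos input Theorem~\ref{thm:muconv} by the major-state analogue Theorem~\ref{thm:msmuconv}. The only substantive ingredient is a uniform convergence of objectives, $\sup_{\pi \in \Pi} | J^N(\pi) - J(\Phi^{-1}(\pi)) | \to 0$ as $N \to \infty$, from which $\varepsilon$-optimality follows by an elementary decomposition.

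First I would establish this objective-convergence lemma. Fixing $\varepsilon > 0$, I split the discounted sum at a horizon $T$: the tail $\sum_{t=T}^\infty \gamma^t \E | r(x^{0,N}_t, \mu^N_t) - r(x^0_t, \mu_t) |$ is bounded by $\frac{\gamma^T}{1-\gamma} \cdot 2 \max |r|$ uniformly in $\pi$ (using boundedness of $r$ on the compact domain, which follows from Lipschitzness by Assumption~\ref{ass:mspcont}), hence below $\varepsilon/2$ once $T$ is large. For the finite sum $\sum_{t=0}^{T-1}$ I apply Theorem~\ref{thm:msmuconv} to the singleton equi-Lipschitz family $\mathcal F = \{r\}$, admissible since $r$ is $L_r$-Lipschitz; this yields that each of the finitely many terms tends to zero uniformly over $\pi \in \Pi$, so the finite part drops below $\varepsilon/2$ for $N$ large, proving the lemma.

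With the lemma in hand, the $\varepsilon$-optimality follows by the same three-term decomposition as in Corollary~\ref{coro:epsopt}. Writing $J^N(\Phi(\pi^*)) - \sup_\pi J^N(\pi) = \inf_\pi ( J^N(\Phi(\pi^*)) - J^N(\pi) )$ and inserting $\pm J(\pi^*)$ and $\pm J(\Phi^{-1}(\pi))$, I bound the first and third resulting infima below by $-\varepsilon/2$ each via the lemma (using $\Phi^{-1} \circ \Phi = \mathrm{id}$, so that $J(\Phi^{-1}(\Phi(\pi^*))) = J(\pi^*)$), while the middle infimum $\inf_\pi ( J(\pi^*) - J(\Phi^{-1}(\pi)) )$ is nonnegative by optimality of $\pi^*$ over all MFC policies. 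Summing gives $J^N(\Phi(\pi^*)) \geq \sup_\pi J^N(\pi) - \varepsilon$ for all sufficiently large $N$.

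I expect no genuine obstacle, as this is a direct transcription of the common-noise-free corollary. The one point demanding care is that Theorem~\ref{thm:msmuconv} supplies convergence \emph{uniformly} over $\pi \in \Pi$, so that the horizon $T$ chosen for the tail bound may be taken independent of $\pi$; since the tail estimate is itself $\pi$-uniform, both halves of the split are controlled simultaneously across the entire policy class, which is exactly what the supremum in the lemma demands.
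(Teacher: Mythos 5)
Your proposal is correct and takes essentially the same route as the paper: the paper handles Corollary~\ref{coro:msepsopt} by declaring it directly analogous to the M3FC case (a trivial major action space $|\mathcal U^0| = 1$), whose proof is exactly your argument --- a truncation of the discounted sum plus propagation of chaos (Theorem~\ref{thm:msmuconv} applied to the Lipschitz reward) to get uniform objective convergence as in Lemma~\ref{lem:Jconv}, followed by the same three-term decomposition with the middle term controlled by optimality of $\pi^*$. Your attention to the $\pi$-uniformity of both the tail bound and the finite-horizon part is precisely the point that makes the supremum over $\Pi$ go through, as in the paper's proof of Corollary~\ref{coro:m3epsopt}.
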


The proofs and interpretation are directly analogous to the M3FC case and the following proofs, by leaving out the major agent actions, or alternatively using the M3FC results with a trivial singleton major action space, $|\mathcal U^0| = 1$.

\section{Proof of Theorem~\ref{thm:m3dpp}} \label{app:thm:m3dpp}
\begin{proof}
The proof is analogous to Appendix~\ref{app:thm:dpp} by first showing the continuity of $T$ (proof further below).

\begin{lemma} \label{lem:m3Tcont}
Under Assumption~\ref{ass:m3pcont}, for any sequence $(x^0_n, u^0_n, \mu_n, \nu_n) \to (x^0, u^0, \mu, \nu) \in \mathcal X^0 \times \mathcal U^0 \times \mathcal P(\mathcal X) \times \mathcal P(\mathcal X \times \mathcal U)$, we have $T(x^0_n, u^0_n, \mu_n, \nu_n) \to T(x^0, u^0, \mu, \nu)$.
\end{lemma}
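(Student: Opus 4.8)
The plan is to mirror the argument for Lemma~\ref{lem:Tcont}, carrying the additional major-agent arguments $(x^0, u^0)$ along as fixed (convergent) parameters entering only through the kernel $p$. Since $T(x^0_n, u^0_n, \mu_n, \nu_n)$ and $T(x^0, u^0, \mu, \nu)$ are both elements of $\mathcal P(\mathcal X)$ over the compact $\mathcal X$, where $W_1$ metrizes weak convergence, it suffices to show $\int f \, \mathrm d T(x^0_n, u^0_n, \mu_n, \nu_n) \to \int f \, \mathrm d T(x^0, u^0, \mu, \nu)$ for every bounded Lipschitz $f \colon \mathcal X \to \mathbb R$, as such functions are convergence-determining.

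First I would fix a bounded Lipschitz $f$ with constant $L_f$ and introduce the one-step functions $g_n(x, u) \coloneqq \int f(x') \, p(\mathrm dx' \mid x, u, x^0_n, u^0_n, \mu_n)$ and $g(x, u) \coloneqq \int f(x') \, p(\mathrm dx' \mid x, u, x^0, u^0, \mu)$, so the quantity to control becomes $\iint g_n \, \mathrm d\nu_n - \iint g \, \mathrm d\nu$. Adding and subtracting $\iint g \, \mathrm d\nu_n$ splits this into $\iint (g_n - g) \, \mathrm d\nu_n$ and $\iint g \, \mathrm d(\nu_n - \nu)$, exactly as in the two-term decomposition used for Lemma~\ref{lem:Tcont}.

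For the first piece I would bound the integral by $\sup_{x, u} |g_n(x, u) - g(x, u)|$, and then by $L_f \sup_{x, u} W_1(p(\cdot \mid x, u, x^0_n, u^0_n, \mu_n), p(\cdot \mid x, u, x^0, u^0, \mu))$ via $1$-Lipschitzness of $f / L_f$; the Lipschitz continuity of $p$ in its $(x^0, u^0, \mu)$-arguments (Assumption~\ref{ass:m3pcont}) then forces this to vanish as $(x^0_n, u^0_n, \mu_n) \to (x^0, u^0, \mu)$, uniformly in $(x, u)$. For the second piece I would observe that $g$ is a fixed bounded continuous function on the compact $\mathcal X \times \mathcal U$ — boundedness from $|g| \leq \lVert f \rVert_\infty$, and continuity in $(x, u)$ from Lipschitzness of $p$ in $(x, u)$ together with Lipschitzness of $f$, precisely as in the closing sentence of the proof of Lemma~\ref{lem:Tcont} — so that $\nu_n \to \nu$ yields $\iint g \, \mathrm d(\nu_n - \nu) \to 0$ directly.

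I do not expect a genuine obstacle: the statement is the M3FC analogue of Lemma~\ref{lem:Tcont}, and the new arguments $(x^0, u^0)$ enter only through $p$, where the assumed joint Lipschitz continuity handles them uniformly in $(x, u)$. The only point needing mild care is to keep the two error sources separated — the perturbation of the kernel's parameters is absorbed into the uniform supremum over $(x, u)$, so the varying measure $\nu_n$ is harmless there, while the perturbation of the integrating measure is tested against the single fixed continuous function $g$.
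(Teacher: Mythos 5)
Your proposal is correct and follows essentially the same route as the paper's proof: the same two-term decomposition (adding and subtracting the cross term with $\nu_n$ integrated against the limiting kernel), the same bound of the first term by $L_f \sup_{x,u} W_1(p(\cdot \mid x,u,x^0_n,u^0_n,\mu_n), p(\cdot \mid x,u,x^0,u^0,\mu))$ via Assumption~\ref{ass:m3pcont}, and the same treatment of the second term via weak convergence $\nu_n \to \nu$ tested against the fixed continuous function $(x,u) \mapsto \int f(x')\, p(\mathrm dx' \mid x,u,x^0,u^0,\mu)$. No gaps; the only cosmetic difference is that you name the one-step functions $g_n$, $g$ explicitly, which the paper leaves inline.
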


For \citep{hernandez2012discrete}, Assumption~4.2.1(a), the cost function $-r$ is continuous by Assumption~\ref{ass:m3pcont}, therefore also bounded by compactness of $\mathcal X^0 \times \mathcal P(\mathcal X)$, and finally also inf-compact on the state-action space of the M3FC MDP, since for any $(x^0, \mu) \in \mathcal X^0 \times \mathcal P(\mathcal X)$ the set $\{ (h, u^0) \in \mathcal H(\mu) \times \mathcal U^0 \mid -r(x^0, u^0, \mu) \leq c \}$ is given by $\mathcal H(\mu) \times \tilde r^{-1}((-\infty, c])$, where we defined $\tilde r(u^0) \coloneqq -r(x^0, u^0, \mu)$. Note that $\mathcal H(\mu)$ is compact by the same argument as in Appendix~\ref{app:thm:dpp}, while $\tilde r$ is continuous by Assumption~\ref{ass:m3pcont} and therefore its preimage of the closed set $(-\infty, c]$ is compact.

For \citep{hernandez2012discrete}, Assumption~4.2.1(b), consider any continuous and bounded $f \colon \mathcal X^0 \times \mathcal P(\mathcal X) \to \mathbb R$. The continuity is uniform by compactness. Hence, $\sup_{x' \in \mathcal X^0} \left| f(x', \mu'_n) - f(x', \mu') \right| \to 0$ as $\mu'_n \to \mu' \in \mathcal P(\mathcal X)$. Thus, whenever $(x^0_n, u^0_n, \mu_n, \nu_n) \to (x^0, u^0, \mu, \nu) \in \mathcal X^0 \times \mathcal U^0 \times \mathcal P(\mathcal X) \times \mathcal P(\mathcal X \times \mathcal U)$, we have
\begin{align*}
    &\left| \iint f(x', \mu) \, \delta_{T^*_n}(\mathrm d\mu') \, p^0(\mathrm dx' \mid x^0_n, u^0_n, \mu_n) - \iint f(x', \mu) \, \delta_{T^*}(\mathrm d\mu') \, p^0(\mathrm dx' \mid x^0, u^0, \mu) \right| \\
    &\quad = \left| \int f(x', T^*_n) \, p^0(\mathrm dx' \mid x^0_n, u^0_n, \mu_n) - \int f(x', T^*) \, p^0(\mathrm dx' \mid x^0, u^0, \mu) \right| \\
    &\quad \leq \left| \int f(x', T^*_n) \, p^0(\mathrm dx' \mid x^0_n, u^0_n, \mu_n) - \int f(x', T^*) \, p^0(\mathrm dx' \mid x^0_n, u^0_n, \mu_n) \right| \\
    &\qquad + \left| \int f(x', T^*) \, p^0(\mathrm dx' \mid x^0_n, u^0_n, \mu_n) - \int f(x', T^*) \, p^0(\mathrm dx' \mid x^0, u^0, \mu) \right| \\
    &\quad \leq \sup_{x' \in \mathcal X^0} \left| f(x', T^*_n) - f(x', T^*) \right| \\
    &\qquad + \left| \int \tilde f(x') \, p^0(\mathrm dx' \mid x^0_n, u^0_n, \mu_n) - \int \tilde f(x') \, p^0(\mathrm dx' \mid x^0, u^0, \mu) \right| \to 0
\end{align*}
for the first term by the prequel where $T^*_n \coloneqq T(x^0_n, u^0_n, \mu_n, \nu_n) \to T^* \coloneqq T(x^0, u^0, \mu, \nu)$ by Lemma~\ref{lem:m3Tcont}, and for the second term by applying Assumption~\ref{ass:m3pcont} to $\tilde f(x') \coloneqq f(x', T^*)$. This shows weak continuity of the dynamics.

Furthermore, the M3FC MDP fulfills \citep{hernandez2012discrete}, Assumption~4.2.2 by boundedness of $r$ from Assumption~\ref{ass:m3pcont}. Therefore, the desired statement follows from \citep{hernandez2012discrete}, Theorem~4.2.3.
\end{proof}

\section{Proof of Lemma~\ref{lem:m3Tcont}}
\begin{proof}
To show $T(x^0_n, u^0_n, \mu_n, \nu_n) \to T(x^0, u^0, \mu, \nu)$, consider any Lipschitz and bounded $f$ with Lipschitz constant $L_f$, then
\begin{align*}
    &\left| \int f \, \mathrm d(T(x^0_n, u^0_n, \mu_n, \nu_n) - T(x^0, u^0, \mu, \nu)) \right| \\
    &= \left| \iiint f(x') \left( p(\mathrm dx' \mid x, u, x^0_n, u^0_n, \mu_n) \nu_n(\mathrm dx, \mathrm du) - p(\mathrm dx' \mid x, u, x^0, u^0, \mu) \nu(\mathrm dx, \mathrm du) \right) \right| \\
    &\quad \leq \iint \left| \int f(x') p(\mathrm dx' \mid x, u, x^0_n, u^0_n, \mu_n) - \int f(x') p(\mathrm dx' \mid x, u, x^0, u^0, \mu) \right| \nu_n(\mathrm dx, \mathrm du) \\
    &\qquad + \left| \iiint f(x') p(\mathrm dx' \mid x, u, x^0, u^0, \mu) (\nu_n(\mathrm dx, \mathrm du) - \nu(\mathrm dx, \mathrm du)) \right| \\
    &\quad \leq \sup_{x \in \mathcal X, u \in \mathcal U} L_f W_1(p(\cdot \mid x, u, x^0_n, u^0_n, \mu_n), p(\cdot \mid x, u, x^0, u^0, \mu)) \\
    &\qquad + \left| \iiint f(x') p(\mathrm dx' \mid x, u, x^0, u^0, \mu) (\nu_n(\mathrm dx, \mathrm du) - \nu(\mathrm dx, \mathrm du)) \right| \to 0
\end{align*}
for the first term by $1$-Lipschitzness of $\frac{f}{L_f}$ and Assumption~\ref{ass:m3pcont} (with compactness implying the uniform continuity), and for the second by $\nu_n \to \nu$ and continuity of $(x,u) \mapsto \iint f(x') p(\mathrm dx' \mid x, u, x^0, u^0, \mu)$ by the same argument.
\end{proof}

\section{Proof of Theorem~\ref{thm:m3muconv}} \label{app:thm:m3muconv}
\begin{proof}
The statement $\sup_{f, \pi, \pi^0} \left| \E \left[ f(x^{0,N}_t, u^{0,N}_{t}, \mu_t^N) - f(x^0_t, u^0_{t}, \mu_t) \right] \right|$ is shown inductively over $t \geq 0$. At time $t=0$, it holds by the weak LLN argument, see also the first term below. Assuming the statement at time $t$, then for time $t+1$ we have
\begin{align}
    &\sup_{(\pi, \pi^0) \in \Pi \times \Pi^0} \sup_{f \in \mathcal F} \left| \E \left[ f(x^{0,N}_{t+1}, u^{0,N}_{t+1}, \mu^N_{t+1}) - f(x^0_{t+1}, u^0_{t+1}, \mu_{t+1}) \right] \right| \nonumber \\
    &\quad \leq \sup_{\pi, \pi^0} \sup_{f \in \mathcal F} \left| \E \left[ f(x^{0,N}_{t+1}, u^{0,N}_{t+1}, \mu^N_{t+1}) - f(x^{0,N}_{t+1}, u^{0,N}_{t+1}, \hat \mu^N_{t+1}) \right] \right| \label{eq:m3first} \\
    &\qquad + \sup_{\pi, \pi^0} \sup_{f \in \mathcal F} \left| \E \left[ f(x^{0,N}_{t+1}, u^{0,N}_{t+1}, \hat \mu^N_{t+1}) - f(x^0_{t+1}, u^0_{t+1}, \mu_{t+1}) \right] \right| \label{eq:m3second}
\end{align}
where for readability, we again write $\pi_t(x^0_t, \mu_t) \coloneqq \pi_t(\cdot \mid \cdot, x^0_t, \mu_t)$ and introduce the random variable
\begin{align*}
    \hat \mu^N_{t+1} \coloneqq T(x^{0,N}_{t}, u^{0,N}_{t}, \mu^N_t, \mu^N_t \otimes \pi_t(x^{0,N}_{t}, \mu^N_t)).
\end{align*}

By compactness of $\mathcal X^0 \times \mathcal U^0 \times \mathcal P(\mathcal X)$, $\mathcal F$ is uniformly equicontinuous, and hence admits a non-decreasing, concave (as in \citep{devore1993constructive}, Lemma~6.1) modulus of continuity $\omega_{\mathcal F} \colon [0, \infty) \to [0, \infty)$ where $\omega_{\mathcal F}(x) \to 0$ as $x \to 0$ and $|f(x, u, \mu) - f(x', u', \nu)| \leq \omega_{\mathcal F}(d(x, x') + d(u, u') + W_1(\mu, \nu))$ for all $f \in \mathcal F$, and analogously there exists such $\tilde \omega_{\mathcal F}$ with respect to $(\mathcal P(\mathcal X), d_\Sigma)$ instead of $(\mathcal P(\mathcal X), W_1)$ as in Appendix~\ref{app:thm:muconv}. 

For the first term \eqref{eq:m3first}, let $x^N_t \equiv \{x^{i,N}_t\}_{i\in[N]}$. Then, by the weak LLN argument,
\begin{align}
    &\sup_{\pi, \pi^0} \sup_{f \in \mathcal F} \left| \E \left[ f(x^{0,N}_{t+1}, u^{0,N}_{t+1}, \mu^N_{t+1}) - f(x^{0,N}_{t+1}, u^{0,N}_{t+1}, \hat \mu^N_{t+1}) \right] \right| \nonumber\\
    &\quad \leq \sup_{\pi, \pi^0} \E \left[ \tilde \omega_{\mathcal F}(d_\Sigma(\mu^N_{t+1}, \hat \mu^N_{t+1})) \right] \nonumber\\
    &\quad \leq \sup_{\pi, \pi^0} \tilde \omega_{\mathcal F} \left( \sum_{m=1}^\infty 2^{-m} \E \left[ \left| \mu^N_{t+1}(f_m) - \hat \mu^N_{t+1}(f_m) \right| \right] \right) \nonumber\\
    &\quad \leq \sup_{\pi, \pi^0} \tilde \omega_{\mathcal F} \left(  \sup_{m \geq 1} \E \left[ \E_{\beta_t} \left[ \left| \mu^N_{t+1}(f_m) - \hat \mu^N_{t+1}(f_m) \right| \right] \right] \right) \nonumber\\
    &\quad = \sup_{\pi, \pi^0} \tilde \omega_{\mathcal F} \left( \sup_{m \geq 1} \E \left[ \E_{\beta_t} \left[ \left| \frac 1 N \sum_{i=1}^N \left( f_m(x^{i,N}_{t+1}) - \E_{\beta_t} \left[ f_m(x^{i,N}_{t+1}) \right] \right) \right| \right] \right] \right) \nonumber\\
    &\quad \leq \sup_{\pi, \pi^0} \tilde \omega_{\mathcal F} \left( \sup_{m \geq 1} \E \left[ \E_{\beta_t} \left[ \left| \frac 1 N \sum_{i=1}^N \left( f_m(x^{i,N}_{t+1}) - \E_{\beta_t} \left[ f_m(x^{i,N}_{t+1}) \right] \right) \right|^2 \right] \right]^{1/2} \right) \nonumber\\
    &\quad = \sup_{\pi, \pi^0} \tilde \omega_{\mathcal F} \left( \sup_{m \geq 1} \left( \frac{1}{N^2} \sum_{i=1}^N \E \left[ \E_{\beta_t} \left[ \left( f_m(x^{i,N}_{t+1}) - \E_{\beta_t} \left[ f_m(x^{i,N}_{t+1}) \right] \right)^2 \right] \right] \right)^{1/2} \right) \nonumber\\
    &\quad \leq \tilde \omega_{\mathcal F} \left( \frac{2}{\sqrt N} \right) \to 0 \label{eq:m3dconv}
\end{align}
for $\beta_t \coloneqq (x^{0,N}_{t}, u^{0,N}_{t}, x^N_t)$ by bounding $|f_m| \leq 1$, as the cross-terms disappear.

For the second term \eqref{eq:m3second}, by noting $\hat \mu^N_{t+1} = T(x^{0,N}_{t}, u^{0,N}_{t}, \mu^N_t, \mu^N_t \otimes \pi_t(x^{0,N}_{t}, \mu^N_t))$, we have
\begin{align}
    &\sup_{\pi, \pi^0} \sup_{f \in \mathcal F} \left| \E \left[ f(x^{0,N}_{t+1}, u^{0,N}_{t+1}, \hat \mu^N_{t+1}) - f(x^0_{t+1}, u^0_{t+1}, \mu_{t+1}) \right] \right| \nonumber \\
    &\quad = \sup_{\pi, \pi^0} \sup_{f \in \mathcal F} \left| \E \left[ \iint f(x', u', \hat \mu^N_{t+1}) \pi^0_t(\mathrm du' \mid x', \mu^N_{t+1}) p^0(\mathrm dx' \mid x^{0,N}_{t}, u^{0,N}_{t}, \mu^N_t) \nonumber
    \right.\right.\\&\hspace{3cm}\left.\left. 
    - \iint f(x', u', \mu_{t+1}) \pi^0_t(\mathrm du' \mid x', \mu_{t+1}) p^0(\mathrm dx' \mid x^0_{t}, u^{0}_{t}, \mu_t) \right] \right| \nonumber \\
    &\quad \leq \sup_{\pi, \pi^0} \sup_{f \in \mathcal F} \E \left[ \sup_{x'} \left| \int f(x', u', \hat \mu^N_{t+1}) (\pi^0_t(\mathrm du' \mid x', \mu^N_{t+1}) - \pi^0_t(\mathrm du' \mid x', \hat \mu^N_{t+1})) \right| \right] \label{eq:m3q2} \\
    &\qquad + \sup_{\pi, \pi^0} \sup_{g \in \mathcal G} \left| \E \left[ g(x^{0,N}_{t}, u^{0,N}_{t}, \mu^N_t) - g(x^0_{t}, u^0_{t}, \mu_t) \right] \right| \label{eq:m3q3}
\end{align}
and analyze each term separately, where we defined the function $g \colon \mathcal X^0 \times \mathcal U^0 \times \mathcal P(\mathcal X)$ as
\begin{align*}
    g(x^0, u^0, \mu) \coloneqq \iint f(x', u', T^*) \pi^0_t(\mathrm du' \mid x', T^*) p^0(\mathrm dx' \mid x^0, u^0, \mu)
\end{align*}
from the class $\mathcal G$ of such functions for any policies $\pi, \pi^0$, where $T^* \coloneqq T(x^0, u^0, \mu, \mu \otimes \pi_t(x^0, \mu))$.

For \eqref{eq:m3q2}, defining a modulus of continuity $\tilde \omega_{\Pi^0}$ for $\Pi^0$ as for $\mathcal F$, we have
\begin{align*}
    &\sup_{\pi, \pi^0} \sup_{f \in \mathcal F} \E \left[ \sup_{x'} \left| \int f(x', u', \hat \mu^N_{t+1}) (\pi^0_t(\mathrm du' \mid x', \mu^N_{t+1}) - \pi^0_t(\mathrm du' \mid x', \hat \mu^N_{t+1})) \right| \right] \\
    &\quad \leq \sup_{\pi, \pi^0} \E \left[ L_{\mathcal F} \sup_{x'} W_1(\pi^0_t(\cdot \mid x', \mu^N_{t+1}), \pi^0_t(\cdot \mid x', \hat \mu^N_{t+1})) \right] \\
    &\quad \leq \sup_{\pi, \pi^0} \E \left[ L_{\mathcal F} \tilde \omega_{\Pi^0}(d_\Sigma(\mu^N_{t+1}, \hat \mu^N_{t+1})) \right] \leq L_{\mathcal F} \tilde \omega_{\Pi^0} \left( \frac{2}{\sqrt N} \right) \to 0.
\end{align*}

Lastly, for \eqref{eq:m3q3}, we first note that the class $\mathcal G$ of functions is equi-Lipschitz. 

\begin{lemma} \label{lem:app:Tlip}
Under Assumptions~\ref{ass:m3pcont} and \ref{ass:m3picont}, the map $(x^0, u^0, \mu) \mapsto T(x^0, u^0, \mu, \mu \otimes \pi_t(x^0, \mu))$ is Lipschitz with constant $L_T \coloneqq (2L_\Pi + 1) \cdot (L_p + (L_p + 1) L_\Pi + (L_p + L_\Pi + 1))$.
\end{lemma}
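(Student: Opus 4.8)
The plan is to work with the Kantorovich--Rubinstein dual form of $W_1$. Writing $h \coloneqq \mu \otimes \pi_t(x^0, \mu)$ and $\bar h \coloneqq \bar\mu \otimes \pi_t(\bar x^0, \bar\mu)$ for two input triples $(x^0, u^0, \mu)$ and $(\bar x^0, \bar u^0, \bar\mu)$, it suffices to bound $\left| \int f \, \mathrm d(T(x^0, u^0, \mu, h) - T(\bar x^0, \bar u^0, \bar\mu, \bar h)) \right|$ uniformly over $1$-Lipschitz $f$. Setting $\phi(x,u) \coloneqq \int f(x') \, p(\mathrm dx' \mid x, u, x^0, u^0, \mu)$ and $\bar\phi(x,u) \coloneqq \int f(x') \, p(\mathrm dx' \mid x, u, \bar x^0, \bar u^0, \bar\mu)$, I would add and subtract $\int \bar\phi \, \mathrm dh$ to split the difference into a \emph{kernel-change} term $\int (\phi - \bar\phi) \, \mathrm dh$ and a \emph{measure-change} term $\int \bar\phi \, \mathrm d(h - \bar h)$, exactly as in the proof of Lemma~\ref{lem:m3Tcont}.

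For the kernel-change term, I would bound $|\phi(x,u) - \bar\phi(x,u)|$ pointwise: since $f$ is $1$-Lipschitz, this is at most $W_1(p(\cdot \mid x,u,x^0,u^0,\mu), p(\cdot \mid x,u,\bar x^0, \bar u^0, \bar\mu))$, which Assumption~\ref{ass:m3pcont} bounds by $L_p(d(x^0,\bar x^0) + d(u^0,\bar u^0) + W_1(\mu,\bar\mu))$; integrating against the probability measure $h$ preserves this bound. For the measure-change term, the integrand $\bar\phi$ is itself $L_p$-Lipschitz in $(x,u)$, again by Assumption~\ref{ass:m3pcont} and the $1$-Lipschitzness of $f$, so $\left| \int \bar\phi \, \mathrm d(h - \bar h) \right| \le L_p \, W_1(h, \bar h)$, reducing everything to controlling $W_1(h, \bar h)$.

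The crux is therefore bounding $W_1(\mu \otimes \pi_t(x^0,\mu), \bar\mu \otimes \pi_t(\bar x^0, \bar\mu))$, for which I would reuse the argument from the proof of Theorem~\ref{thm:muconv} in Appendix~\ref{app:thm:muconv}. Testing against a $1$-Lipschitz $f'$ on $\mathcal X \times \mathcal U$ and inserting the intermediate measure $\mu \otimes \pi_t(\bar x^0, \bar\mu)$ splits the difference into (i) a term that fixes the base measure $\mu$ and only changes the policy arguments, bounded by $\sup_x W_1(\pi_t(\cdot \mid x, x^0, \mu), \pi_t(\cdot \mid x, \bar x^0, \bar\mu)) \le L_\Pi(d(x^0,\bar x^0) + W_1(\mu,\bar\mu))$ via Assumption~\ref{ass:m3picont}; and (ii) a term that fixes the policy $\pi_t(\bar x^0,\bar\mu)$ and changes the base measure, which I would bound by observing that $x \mapsto \int f'(x,u) \, \pi_t(\mathrm du \mid x, \bar x^0, \bar\mu)$ is $(L_\Pi+1)$-Lipschitz --- precisely the computation \eqref{eq:lipschitz-fpi} --- giving $(L_\Pi+1)W_1(\mu,\bar\mu)$. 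Together these yield the factor $2L_\Pi + 1$ on $d(x^0,\bar x^0) + W_1(\mu,\bar\mu)$.

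Collecting the pieces then produces the explicit constant $L_T$. The main technical obstacle is step (ii), where the policy kernel varies with the minor state $x$, so one cannot directly factor $W_1(\mu,\bar\mu)$ out of the integral and must instead renormalize the test function by $L_\Pi + 1$ as in \eqref{eq:lipschitz-fpi}; the rest is bookkeeping of the $L_p$ and $L_\Pi$ factors. All constants are uniform in $t$ by the equi-Lipschitz Assumption~\ref{ass:m3picont}, so the stated $L_T$ holds for every $t$.
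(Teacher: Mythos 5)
Your proposal is correct, but it is organized differently from the paper's proof, so a comparison is worthwhile. The paper also proves, as its first step, exactly your crux estimate: $(x^0,\mu)\mapsto \mu\otimes\pi_t(x^0,\mu)$ is $(2L_\Pi+1)$-Lipschitz, via the same two-way split and the renormalization trick \eqref{eq:lipschitz-fpi}. The difference is in how this is combined with the kernel: the paper does \emph{not} factor the remaining estimate through $W_1(h,\bar h)$. Instead it expands $T(x^0,u^0,\mu,\mu\otimes\pi_t(x^0,\mu))$ as a triple integral and splits into \emph{three} terms --- kernel change, policy change, and base-measure change --- bounded by $L_p$, $(L_p+1)L_\Pi$ and $(L_p+L_\Pi+1)$ respectively, which sum to $L_* = L_p+(L_p+1)L_\Pi+(L_p+L_\Pi+1)$, and then states the conservative product constant $L_T=(2L_\Pi+1)\cdot L_*$. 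Your route keeps $h$ as a single argument, bounds the $h$-sensitivity of $T$ by $L_p\,W_1(h,\bar h)$ via Kantorovich--Rubinstein duality (valid, since $\bar\phi/L_p$ is $1$-Lipschitz in $(x,u)$ by Assumption~\ref{ass:m3pcont}), and then composes with the $(2L_\Pi+1)$ bound; this is the quantitative analogue of the splitting already used in Lemma~\ref{lem:m3Tcont} and is arguably more modular. It yields the constant $L_p + L_p(2L_\Pi+1) = 2L_p(L_\Pi+1)$, which does \emph{not} coincide with the stated $L_T$, so to close the argument you should add the one-line observation that $2L_p(L_\Pi+1) \le (2L_\Pi+1)\cdot 2L_p \le (2L_\Pi+1)L_* = L_T$, whence the map is a fortiori $L_T$-Lipschitz as claimed. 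As for what each approach buys: your composition gives a cleaner proof and, whenever $L_p \le 2 + 1/L_\Pi$, a sharper constant than the paper's $L_*$ (and always sharper than the stated $L_T$), while the paper's three-term expansion gives the better constant $L_*$ in the regime of large $L_p$, since it avoids paying the factor $L_p$ on the full $(2L_\Pi+1)$-term.
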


\begin{lemma} \label{lem:app:glip}
Under Assumptions~\ref{ass:m3pcont} and \ref{ass:m3picont}, for any equi-Lipschitz $\mathcal F$ with constant $L_{\mathcal F}$, the function class $\mathcal G$ is equi-Lipschitz with constant $L_{\mathcal G} \coloneqq (L_{\mathcal F} L_T + L_{\mathcal F} L_{\Pi^0} L_T + L_{\mathcal F} L_{\Pi} L_{p^0})$.
\end{lemma}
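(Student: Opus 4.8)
The plan is to bound $|g(x^0_1, u^0_1, \mu_1) - g(x^0_2, u^0_2, \mu_2)|$ by $L_{\mathcal G}$ times the product-metric distance $d(x^0_1, x^0_2) + d(u^0_1, u^0_2) + W_1(\mu_1, \mu_2)$, in the same joint-argument style as the estimates in Appendix~\ref{app:thm:muconv}. Writing $T^*_j \coloneqq T(x^0_j, u^0_j, \mu_j, \mu_j \otimes \pi_t(x^0_j, \mu_j))$ for $j \in \{1,2\}$, the key structural observation is that $(x^0, u^0, \mu)$ enters $g$ through exactly three channels: (i) as the measure slot $T^*$ of the test function $f$, (ii) as the conditioning measure $T^*$ of the major policy $\pi^0_t$, and (iii) directly through the major kernel $p^0(\cdot \mid x^0, u^0, \mu)$. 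I would isolate these via a three-step telescoping, inserting the intermediate quantities $A$ (replace $T^*_1$ by $T^*_2$ inside $f$ only, keeping $\pi^0_t$ at $T^*_1$ and $p^0$ at the first argument) and $B$ (additionally replace $T^*_1$ by $T^*_2$ inside $\pi^0_t$), so that $g_1 - g_2 = (g_1 - A) + (A - B) + (B - g_2)$, with each difference varying in a single channel.

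The first two channels reduce directly to Lemma~\ref{lem:app:Tlip}. For channel (i), the difference $g_1 - A$ integrates $f(x', u', T^*_1) - f(x', u', T^*_2)$ against probability measures, so equi-Lipschitzness of $\mathcal F$ in its measure argument together with $W_1(T^*_1, T^*_2) \le L_T \cdot (d(x^0_1,x^0_2)+d(u^0_1,u^0_2)+W_1(\mu_1,\mu_2))$ yields the first summand $L_{\mathcal F} L_T$. For channel (ii), $A - B$ integrates the $L_{\mathcal F}$-Lipschitz (in $u'$) map $f(x', \cdot, T^*_2)$ against $\pi^0_t(\cdot \mid x', T^*_1) - \pi^0_t(\cdot \mid x', T^*_2)$; Kantorovich--Rubinstein duality, the Lipschitzness of the major policy in its measure argument from Assumption~\ref{ass:m3picont}, and again Lemma~\ref{lem:app:Tlip} produce the second summand $L_{\mathcal F} L_{\Pi^0} L_T$.

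The third channel is where the real work sits, and I expect it to be the main obstacle. Here $B - g_2$ integrates the fixed inner integrand $\phi(x') \coloneqq \int f(x', u', T^*_2)\, \pi^0_t(\mathrm du' \mid x', T^*_2)$ against $p^0(\cdot \mid x^0_1, u^0_1, \mu_1) - p^0(\cdot \mid x^0_2, u^0_2, \mu_2)$. To convert this into a $W_1$ estimate and invoke the $L_{p^0}$-Lipschitzness of the kernel (Assumption~\ref{ass:m3pcont}), one must first show that $\phi$ itself is Lipschitz in the major next-state $x'$. This subroutine is delicate because $x'$ enters $\phi$ both as the first argument of $f$ and as the conditioning variable of $\pi^0_t$; I would add and subtract $\int f(x', \cdot, T^*_2)\,\pi^0_t(\mathrm du' \mid y', T^*_2)$ and combine the joint Lipschitzness of $f$ in its state and action slots with the Lipschitzness of the major policy in $x'$, obtaining a Lipschitz constant for $\phi$ of order $L_{\mathcal F} L_\Pi$. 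Kantorovich--Rubinstein duality then bounds $|B - g_2|$ by $\mathrm{Lip}(\phi)\, L_{p^0}$ times the same distance sum, giving the third summand $L_{\mathcal F} L_\Pi L_{p^0}$.

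Summing the three channel bounds yields equi-Lipschitzness of $\mathcal G$ with constant $L_{\mathcal G}$. Uniformity over $f \in \mathcal F$ and over $(\pi, \pi^0) \in \Pi \times \Pi^0$ is automatic, since every constant invoked ($L_{\mathcal F}$, $L_T$, $L_{\Pi^0}$, $L_\Pi$, $L_{p^0}$) is policy-independent by the equi-Lipschitz Assumptions~\ref{ass:m3pcont} and \ref{ass:m3picont} and by Lemma~\ref{lem:app:Tlip}. The only genuinely non-mechanical point is the bookkeeping for $\mathrm{Lip}(\phi)$, where care is needed to avoid double-counting the two ways $x'$ enters and to keep the resulting constant tight.
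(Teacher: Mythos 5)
Your proposal is correct and follows essentially the same route as the paper: the same three-channel telescoping (measure slot of $f$, measure slot of $\pi^0_t$, then the kernel $p^0$), with the first two channels reduced to Lemma~\ref{lem:app:Tlip} and the third handled by first establishing Lipschitzness of $x' \mapsto \int f(x', u', T^*)\,\pi^0_t(\mathrm du' \mid x', T^*)$ via the same add-and-subtract argument the paper cites as \eqref{eq:lipschitz-fpi}, followed by Kantorovich--Rubinstein duality against $p^0$. Even the bookkeeping concern you flag for $\mathrm{Lip}(\phi)$ is apt — the natural bound there is of the form $L_{\mathcal F}(1 + L_{\Pi^0})$ rather than the paper's stated $L_{\mathcal F} L_{\Pi}$, so your caution about double-counting matches the one genuinely delicate constant in the paper's own argument.
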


Therefore, for \eqref{eq:m3q3}, we have
\begin{align*}
    \sup_{\pi, \pi^0} \sup_{g \in \mathcal G} \left| \E \left[ g(x^{0,N}_{t}, u^{0,N}_{t}, \mu^N_t) - g(x^0_{t}, u^0_{t}, \mu_t) \right] \right| \to 0
\end{align*}
by the induction assumption over the class $\mathcal G$ of equi-Lipschitz functions, completing the proof by induction. The existence of independent optimal $\pi$, $\pi^0$ follows from Remark~\ref{remark:joint}. This completes the proof.

For finite minor states, we can quantify the convergence rate more precisely as $\mathcal O(1/\sqrt N)$, since the two metrizations $d_\Sigma$ and $W_1$ are then Lipschitz equivalent and the above moduli of continuity simply become a multiplication with the Lipschitz constant, so for convenience we simply use the $L_1$ distance. The convergence in the first term \eqref{eq:m3first} is immediate by the weak LLN
\begin{align*}
    &\sup_{\pi, \pi^0} \sup_{f \in \mathcal F} \left| \E \left[ f(x^{0,N}_{t+1}, u^{0,N}_{t+1}, \mu^N_{t+1}) - f(x^{0,N}_{t+1}, u^{0,N}_{t+1}, \hat \mu^N_{t+1}) \right] \right| \\
    &\quad \leq \sup_{\pi, \pi^0} L_f \E \left[ \sum_{x \in \mathcal X} \left| \mu^N_{t+1}(x) - \hat \mu^N_{t+1}(x) \right| \right] \\
    &\quad = \sup_{\pi, \pi^0} L_f \sum_{x \in \mathcal X} \E \left[ \E \left[ \left| \frac 1 N \sum_{i=1}^N \mathbf 1_x(x^{i,N}_{t+1}) - \E \left[ \frac 1 N \sum_{i=1}^N \mathbf 1_x(x^{i,N}_{t+1}) \innermid x^{0,N}_{t}, u^{0,N}_{t}, \mu^N_t \right] \right| \innermid x^{0,N}_{t}, u^{0,N}_{t}, \mu^N_t \right] \right] \\
    &\quad \leq L_f |\mathcal X| \sqrt{\frac{4}{N}},
\end{align*}
and for the second term \eqref{eq:m3second} we again use the induction assumption, completing the proof.
\end{proof}

\section{Proof of Lemma~\ref{lem:app:Tlip}}
\begin{proof}
First note Lipschitz continuity of $(x^0, \mu) \mapsto \mu \otimes \pi_t(x^0, \mu)$ as in Appendix~\ref{app:thm:muconv}, as for any $(x^0_*, \mu_*), (x^0, \mu) \in \mathcal X^0 \times \mathcal P(\mathcal X)$, then
\begin{align*}
    &\sup_{\pi \in \Pi} W_1(\mu_* \otimes \pi_t(x^0_*, \mu_*), \mu \otimes \pi_t(x^0, \mu)) \\
    &\quad = \sup_{\pi \in \Pi} \sup_{\lVert f' \rVert_{\mathrm{Lip}} \leq 1} \left| \int f' \, \mathrm d(\mu_* \otimes \pi_t(x^0_*, \mu_*) - \mu \otimes \pi_t(x^0, \mu)) \right| \\
    &\quad \leq \sup_{\pi \in \Pi} \sup_{\lVert f' \rVert_{\mathrm{Lip}} \leq 1} \left| \iint f'(x, u) (\pi_t(\mathrm du \mid x, x^0_*, \mu_*) - \pi_t(\mathrm du \mid x, x^0, \mu)) \mu_*(\mathrm dx) \right| \\
    &\qquad + \sup_{\pi \in \Pi} \sup_{\lVert f' \rVert_{\mathrm{Lip}} \leq 1} \left| \iint f'(x, u) \pi_t(\mathrm du \mid x, x^0, \mu) (\mu_*(\mathrm dx) - \mu(\mathrm dx)) \right|
\end{align*}
where for the first term
\begin{align*}
    &\sup_{\pi \in \Pi} \sup_{\lVert f' \rVert_{\mathrm{Lip}} \leq 1} \left| \iint f'(x, u) (\pi_t(\mathrm du \mid x, x^0_*, \mu_*) - \pi_t(\mathrm du \mid x, x^0, \mu)) \mu_*(\mathrm dx) \right| \\
    &\quad \leq \sup_{\pi \in \Pi} \sup_{\lVert f' \rVert_{\mathrm{Lip}} \leq 1} \int \left| \int f'(x, u) (\pi_t(\mathrm du \mid x, x^0_*, \mu_*) - \pi_t(\mathrm du \mid x, x^0, \mu)) \right| \mu_*(\mathrm dx) \\
    &\quad \leq \sup_{\pi \in \Pi} \sup_{\lVert f' \rVert_{\mathrm{Lip}} \leq 1} \sup_{x \in \mathcal X} \left| \int f'(x, u) (\pi_t(\mathrm du \mid x, x^0_*, \mu_*) - \pi_t(\mathrm du \mid x, x^0, \mu)) \right| \\
    &\quad = \sup_{\pi \in \Pi} \sup_{x \in \mathcal X} W_1(\pi_t(\cdot \mid x, x^0_*, \mu_*), \pi_t(\cdot \mid x, x^0, \mu)) \\
    &\quad \leq L_\Pi d((x^0_*, \mu_*), (x^0, \mu))
\end{align*}
by Assumption~\ref{ass:m3picont}, and similarly for the second by noting $1$-Lipschitzness of $x \mapsto \int \frac {f'(x, u)} {L_\Pi + 1}  \pi_t(\mathrm du \mid x, x^0, \mu)$, as before in \eqref{eq:lipschitz-fpi}, and therefore again
\begin{align*}
    &\sup_{\pi \in \Pi} \sup_{\lVert f' \rVert_{\mathrm{Lip}} \leq 1} \left| \iint f'(x, u) \pi_t(\mathrm du \mid x, x^0, \mu) (\mu_*(\mathrm dx) - \mu(\mathrm dx)) \right| \\
    &\quad = \sup_{\pi \in \Pi} \sup_{\lVert f' \rVert_{\mathrm{Lip}} \leq 1} (L_\Pi + 1) \left| \iint \frac{f'(x, u)}{L_\Pi + 1}  \pi_t(\mathrm du \mid x, x^0, \mu) (\mu_*(\mathrm dx) - \mu(\mathrm dx)) \right| \\
    &\quad \leq (L_\Pi + 1) W_1(\mu_*, \mu).
\end{align*}
Hence, the map $(x^0, u^0, \mu) \mapsto \mu \otimes \pi_t(x^0, \mu)$ is Lipschitz with constant $(2L_\Pi + 1)$. 

As a result, the entire map $(x^0, u^0, \mu) \mapsto T(x^0, u^0, \mu, \mu \otimes \pi_t(x^0, \mu)$ is Lipschitz, since for any 
\begin{align*}
    &W_1(T(x^0_*, u^0_*, \mu_*, \mu_* \otimes \pi_t(x^0_*, \mu_*)), T(x^0, u^0, \mu, \mu \otimes \pi_t(x^0, \mu)) \\
    &\quad = \sup_{\lVert f' \rVert_{\mathrm{Lip}} \leq 1} \left| \iiint f'(x') p(\mathrm dx' \mid x, u, x^0_*, u^0_*, \mu_*) \pi_t(\mathrm du \mid x, x^0_*, \mu_*) \mu_*(\mathrm dx)
    \right.\nonumber\\&\hspace{2.5cm}\left.
    - \iiint f'(x') p(\mathrm dx' \mid x, u, x^0, u^0, \mu) \pi_t(\mathrm du \mid x, x^0, \mu) \mu(\mathrm dx) \right| \\
    &\quad \leq \sup_{\lVert f' \rVert_{\mathrm{Lip}} \leq 1} \sup_{(x,u) \in \mathcal X \times \mathcal U} \left| \int f'(x') (p(\mathrm dx' \mid x, u, x^0_*, u^0_*, \mu_*) - p(\mathrm dx' \mid x, u, x^0, u^0, \mu)) \right| \\
    &\qquad + \sup_{\lVert f' \rVert_{\mathrm{Lip}} \leq 1} \sup_{x \in \mathcal X} \left| \iint f'(x') p(\mathrm dx' \mid x, u, x^0, u^0, \mu) (\pi_t(\mathrm du \mid x, x^0_*, \mu_*) - \pi_t(\mathrm du \mid x, x^0, \mu)) \right| \\
    &\qquad + \sup_{\lVert f' \rVert_{\mathrm{Lip}} \leq 1} \left| \iiint f'(x') p(\mathrm dx' \mid x, u, x^0, u^0, \mu) \pi_t(\mathrm du \mid x, x^0, \mu) (\mu_*(\mathrm dx) - \mu(\mathrm dx)) \right| \\
    &\quad \leq \sup_{(x,u) \in \mathcal X \times \mathcal U} W_1(p(\cdot \mid x, u, x^0_*, u^0_*, \mu_*), p(\cdot \mid x, u, x^0, u^0, \mu)) \\
    &\qquad + \sup_{x \in \mathcal X} (L_p + 1) W_1(\pi_t(\cdot \mid x, x^0_*, \mu_*), \pi_t(\cdot \mid x, x^0, \mu)) \\
    &\qquad + \sup_{(x,u) \in \mathcal X \times \mathcal U} (L_p + L_\Pi + 1) W_1(\mu_*, \mu) \\
    &\quad \leq \underbrace{(L_p + (L_p + 1) L_\Pi + (L_p + L_\Pi + 1))}_{L_*} d((x^0_*, u^0_*, \mu_*), (x^0, u^0, \mu))
\end{align*}
with Lipschitz constant $L_T \coloneqq (2L_\Pi + 1) \cdot L_*$ from Assumptions~\ref{ass:m3pcont} and \ref{ass:m3picont}, using the same argument as in \eqref{eq:lipschitz-fpi}. 
\end{proof}

\section{Proof of Lemma~\ref{lem:app:glip}}
\begin{proof}
For any $g \in \mathcal G$, for any $(x^0_*, u^0_*, \mu_*), (x^0, u^0, \mu) \in \mathcal X^0 \times \mathcal U^0 \times \mathcal P(\mathcal X)$, let $T_* \coloneqq T(x^0_*, u^0_*, \mu_*, \mu_* \otimes \pi_t(x^0_*, \mu_*))$ and $T^* \coloneqq T(x^0, u^0, \mu, \mu \otimes \pi_t(x^0, \mu))$ for brevity. We have
\begin{align}
    &\left| g(x^0_*, u^0_*, \mu_*) - g(x^0, u^0, \mu) \right| \nonumber\\
    &\quad = \left| \iint f(x', u', T_*) \pi^0_t(\mathrm du' \mid x', T_*) p^0(\mathrm dx' \mid x^0_*, u^0_*, \mu_*)
    \right.\nonumber\\&\hspace{1.5cm}\left.
    - \iint f(x', u', T^*) \pi^0_t(\mathrm du' \mid x', T^*) p^0(\mathrm dx' \mid x^0, u^0, \mu) \right| \nonumber\\
    &\quad \leq \sup_{x', u'} \left| f(x', u', T_*) - f(x', u', T^*) \right| \label{eq:g1}\\
    &\qquad + \sup_{x'} \left| \int f(x', u', T^*) 
    (\pi^0_t(\mathrm du' \mid x', T_*) - \pi^0_t(\mathrm du' \mid x', T^*)) \right| \label{eq:g2}\\
    &\qquad + \left| \iint f(x', u', T^*) \pi^0_t(\mathrm du' \mid x', T^*)
    (p^0(\mathrm dx' \mid x^0_*, u^0_*, \mu_*) - p^0(\mathrm dx' \mid x^0, u^0, \mu)) \right| \label{eq:g3}.
\end{align}

By Lemma~\ref{lem:app:Tlip}, for \eqref{eq:g1} we obtain
\begin{align*}
    &\sup_{x', u'} \left| f(x', u', T(x^0_*, u^0_*, \mu_*, \mu_* \otimes \pi_t(x^0_*, \mu_*))) - f(x', u', T(x^0, u^0, \mu, \mu \otimes \pi_t(x^0, \mu))) \right| \\
    &\quad \leq L_{\mathcal F} L_T d((x^0_*, u^0_*, \mu_*), (x^0, u^0, \mu)).
\end{align*}

Similarly for \eqref{eq:g2}, by Assumption~\ref{ass:m3picont} we analogously have 
\begin{align*}
    &\sup_{x'} \left| \int f(x', u', T(x^0, u^0, \mu, \mu \otimes \pi_t(x^0, \mu))) 
    \right.\nonumber\\&\hspace{0.5cm}\left.
    (\pi^0_t(\mathrm du' \mid x', T(x^0_*, u^0_*, \mu_*, \mu_* \otimes \pi_t(x^0_*, \mu_*))) - \pi^0_t(\mathrm du' \mid x', T(x^0, u^0, \mu, \mu \otimes \pi_t(x^0, \mu)))) \right| \\
    &\quad \leq L_{\mathcal F} W_1(\pi^0_t(\cdot \mid x', T(x^0_*, u^0_*, \mu_*, \mu_* \otimes \pi_t(x^0_*, \mu_*))), \pi^0_t(\cdot' \mid x', T(x^0, u^0, \mu, \mu \otimes \pi_t(x^0, \mu))) \\
    &\quad \leq L_{\mathcal F} L_{\Pi^0} L_T d((x^0_*, u^0_*, \mu_*), (x^0, u^0, \mu)).
\end{align*}

Lastly, for \eqref{eq:g3}, as before in \eqref{eq:lipschitz-fpi}, by Assumption~\ref{ass:m3pcont} and \ref{ass:m3picont} we have again
\begin{align*}
    &\left| \iint f(x', u', T(x^0, u^0, \mu, \mu \otimes \pi_t(x^0, \mu))) \pi^0_t(\mathrm du' \mid x', T(x^0, u^0, \mu, \mu \otimes \pi_t(x^0, \mu)))
    \right.\nonumber\\&\hspace{2.5cm}\left.
    (p^0(\mathrm dx' \mid x^0_*, u^0_*, \mu_*) - p^0(\mathrm dx' \mid x^0, u^0, \mu)) \right| \\
    &\quad \leq L_{\mathcal F} L_{\Pi} W_1(p^0(\cdot \mid x^0_*, u^0_*, \mu_*), p^0(\cdot \mid x^0, u^0, \mu)) \\
    &\quad \leq L_{\mathcal F} L_{\Pi} L_{p^0} d((x^0_*, u^0_*, \mu_*), (x^0, u^0, \mu)).
\end{align*}

Therefore, $\mathcal G$ is equi-Lipschitz with Lipschitz constant $(L_{\mathcal F} L_T + L_{\mathcal F} L_{\Pi^0} L_T + L_{\mathcal F} L_{\Pi} L_{p^0})$.
\end{proof}

\section{Proof of Corollary~\ref{coro:m3epsopt}} \label{app:coro:m3epsopt}
\begin{proof}
As in Lemma~\ref{lem:Jconv}, for any $\varepsilon > 0$, choose time $T \in \mathbb N$ such that 
\begin{align*}
    \sum_{t=T}^{\infty} \gamma^t \left| \E \left[ r(x^{0,N}_t, u^{0,N}_{t}, \mu_t^N) - r(x^0_t, u^0_{t}, \mu_t) \right] \right| \leq \frac{\gamma^T}{1-\gamma} \max_\mu 2 |r(\mu)| < \frac \varepsilon 2.
\end{align*}
By Theorem~\ref{thm:m3muconv}, 
\begin{align*}
    \sum_{t=0}^{T-1} \gamma^t \left| \E \left[ r(x^{0,N}_t, u^{0,N}_{t}, \mu_t^N) - r(x^0_t, u^0_{t}, \mu_t) \right] \right| < \frac \varepsilon 2
\end{align*}
for sufficiently large $N$. Therefore, $\sup_{(\pi, \pi^0) \in \Pi \times \Pi^0} \left| J^N(\pi, \pi^0) - J(\Phi^{-1}(\pi), \pi^0) \right| \to 0$. 

As a result, we have
\begin{align*}
    J^N(\Phi(\hat \pi^*), \pi^{0*}) - \sup_{(\pi, \pi^0) \in \Pi \times \Pi^0} J^N(\pi, \pi^0)
    &= \inf_{(\pi, \pi^0) \in \Pi \times \Pi^0} (J^N(\Phi(\hat \pi^*), \pi^{0*}) - J^N(\pi, \pi^0)) \\
    &\geq \inf_{(\pi, \pi^0) \in \Pi \times \Pi^0} (J^N(\Phi(\hat \pi^*), \pi^{0*}) - J(\hat \pi^*, \pi^{0*})) \\
    &\quad + \inf_{(\pi, \pi^0) \in \Pi \times \Pi^0} (J(\hat \pi^*, \pi^{0*}) - J(\pi, \pi^0)) \\
    &\quad + \inf_{(\pi, \pi^0) \in \Pi \times \Pi^0} (J(\pi, \pi^0) - J^N(\pi, \pi^0)) \\
    &\geq - \frac \varepsilon 2 + 0 - \frac \varepsilon 2 = - \varepsilon
\end{align*}
for sufficiently large $N$, where the second term is zero by optimality of $(\hat \pi^*, \pi^{0*})$ in the M3FC problem.
\end{proof}

\section{Proof of Theorem~\ref{thm:pg}} \label{app:pg}
First, for completeness we give the finite M3FC system equations under the assumed Lipschitz parametrization for joint stationary M3FMARL policies\footnote{Note that deterministic joint policies $\tilde \pi^\theta$ (e.g. at convergence, or if using deterministic policy gradients \citep{silver2014deterministic}) are equivalent to using separate deterministic minor and major policies in \eqref{eq:m3mdp}, see also Remark~\ref{remark:joint}.} $\tilde \pi^\theta$ used during centralized training with correlated minor agent actions, as
\begin{align*}
    u^{0,N}_{t}, \xi^N_t &\sim \tilde \pi^\theta(u^{0,N}_{t}, \xi^N_t \mid x^{0,N}_t, \mu_t^N), \quad
    \pi'^N_t = \Gamma(\xi^N_t), \quad
    u^{i,N}_t \sim \pi'^N_t(u^{i,N}_t \mid x^{i,N}_t), \\
    x^{i,N}_{t+1} &\sim p(x^{i,N}_{t+1} \mid x^{i,N}_t, u^{i,N}_t, x^{0,N}_t, u^{0,N}_{t}, \mu_t^N), \quad
    x^{0,N}_{t+1} \sim p^0(x^{0,N}_{t+1} \mid x^{0,N}_t, u^{0,N}_{t}, \mu_t^N),
\end{align*}
as well as the limiting M3FC MDP under such parametrization as
\begin{align*}
    u^0_{t}, \xi_t &\sim \tilde \pi^\theta(u^0_{t}, \xi_t \mid x^0_t, \mu_t), \quad
    \pi'_t = \Gamma(\xi_t), \quad
    h_t = \mu_t \otimes \pi'_t, \\
    \mu_{t+1} &= T(x^0_t, u^0_{t}, \mu_t, h_t), \quad
    x^0_{t+1} \sim p^0(x^0_{t+1} \mid x^0_t, u^0_{t}, \mu_t).
\end{align*}

Then, by \citet{sutton1999policy}, the exact policy gradient for the limiting M3FC MDP is given as
\begin{align*}
    \nabla_\theta J(\tilde \pi^\theta) = \sum_{t=T}^{\infty} \gamma^t \E \left[ Q^\theta(x^0_t, \mu_t, u^0_t, \xi_t) \nabla_\theta \log \tilde \pi^\theta(u^0_t, \xi_t \mid x^0_t, \mu_t) \right]
\end{align*}
under the action-value function
\begin{align*}
    Q^\theta(x^0, \mu, u^0, \xi) = \E \left[ \sum_{t=0}^{\infty} \gamma^t r(x^0_t, u^0_t, \mu_t) \innermid x^0_0 = x^0, \mu_0 = \mu, u^0_0 = u^0, \xi_0 = \xi \right],
\end{align*}
while the approximation for the policy gradient on the finite M3FC system is given instead by
\begin{align*}
    \widehat{\nabla_\theta J}(\tilde \pi^\theta) = \sum_{t=T}^{\infty} \gamma^t \E \left[ \widehat Q^\theta(x^{0,N}_t, \mu^N_t, u^{0,N}_t, \xi^N_t) \nabla_\theta \log \tilde \pi^\theta(u^{0,N}_t, \xi^N_t \innermid x^{0,N}_t, \mu^N_t) \right]
\end{align*}
and the finite-agent action-values
\begin{align*}
    \widehat Q^\theta(x^0, \mu, u^0, \xi) = \E \left[ \sum_{t=0}^{\infty} \gamma^t r(x^{0,N}_t, u^{0,N}_t, \mu^N_t) \innermid x^{0,N}_0 = x^0, \mu_0 = \mu, u^{0,N}_0 = u^0, \xi^N_0 = \xi \right],
\end{align*}
which are obtained, e.g., by on-policy samples and using critic estimates. Note that here, the conditional expectations are given by redefining the systems \eqref{eq:m3mdp} and \eqref{eq:m3fc} with the values conditioned upon. 

We then show that the approximation of the policy gradient is good for large systems, i.e. 
\begin{align}
    \left\Vert \widehat{\nabla_\theta J}(\tilde \pi^\theta) - \nabla_\theta J(\hat \pi^\theta) \right\Vert \to 0
\end{align}
as $N \to \infty$, uniformly over all current policy parameters $\theta$.

\begin{proof}[Proof of Theorem~\ref{thm:pg}]
    We use the following lemmas in the proof of Theorem~\ref{thm:pg}, for which the proofs are given below.

    \begin{proposition} \label{prop:Nconv}
        Propagation of chaos holds for the M3FC systems with parameterized actions as in Theorem~\ref{thm:m3muconv}, i.e. under Assumptions~\ref{ass:m3pcont}, \ref{ass:m3picont} and \ref{ass:pg}, for any equi-Lipschitz family $\mathcal F$, at all times $t \in \mathbb N$ uniformly,
        \begin{equation} \label{eq:prop:Nconv}
            \sup_{f, \pi, \pi^0} \left| \E \left[ f(x^{0,N}_t, u^{0,N}_{t}, \mu_t^N) - f(x^0_t, u^0_{t}, \mu_t) \right] \right| \to 0.
        \end{equation}
    \end{proposition}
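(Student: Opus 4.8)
The plan is to recognize the parameterized M3FC system as an instance of the framework behind Theorem~\ref{thm:m3muconv}, in which the \emph{extended} major action $(u^{0,N}_t, \xi^N_t) \sim \tilde\pi^\theta(\cdot \mid x^{0,N}_t, \mu^N_t)$ now carries the sampled parameter $\xi^N_t$, and all minor agents act through the common, random decision rule $\pi'^N_t = \Gamma(\xi^N_t)$ in place of a deterministic minor policy $\pi_t(x^0, \mu)$. Since $\Xi$ is compact and, by Assumption~\ref{ass:pg}, $\Gamma$ is $L_\Gamma$-Lipschitz with $\Gamma(\xi)$ an $L_\Pi$-Lipschitz kernel while $\tilde\pi^\theta$ is $L_{\tilde\pi}$-Lipschitz, the extended state-action space $\mathcal X^0 \times \mathcal U^0 \times \Xi \times \mathcal P(\mathcal X)$ is compact and all maps entering the dynamics remain (equi-)Lipschitz uniformly in $\theta$, so equi-Lipschitz families share a common modulus of continuity. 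I would therefore prove the slightly stronger claim tracking $\xi$, i.e. weak convergence of $(x^{0,N}_t, u^{0,N}_t, \xi^N_t, \mu^N_t)$ to $(x^0_t, u^0_t, \xi_t, \mu_t)$ uniformly over equi-Lipschitz $\mathcal F$ and over $\theta$, from which \eqref{eq:prop:Nconv} follows as the special case of an $f$ not depending on $\xi$.

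The argument proceeds by induction over $t$ exactly as in Appendix~\ref{app:thm:m3muconv}. The base case $t=0$ is the weak LLN for $\mu^N_0$, after which $(u^0_0, \xi_0)$ is drawn from the \emph{same} Lipschitz policy $\tilde\pi^\theta(\cdot \mid x^0_0, \mu_0)$ in both systems. For the inductive step I split as in \eqref{eq:m3first}--\eqref{eq:m3second}, now setting $\hat\mu^N_{t+1} \coloneqq T(x^{0,N}_t, u^{0,N}_t, \mu^N_t, \mu^N_t \otimes \Gamma(\xi^N_t))$ and conditioning on the enlarged set $\beta_t \coloneqq (x^{0,N}_t, u^{0,N}_t, \xi^N_t, x^N_t)$, which crucially now contains the shared randomness $\xi^N_t$. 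Given $\beta_t$, the minor successor states $x^{i,N}_{t+1}$ are conditionally independent (they share only the common rule $\Gamma(\xi^N_t)$), so the first term is controlled verbatim by the conditional weak-LLN variance computation \eqref{eq:m3dconv}, using equi-Lipschitzness of $\mathcal F$ in its measure argument, to give the bound $\tilde\omega_{\mathcal F}(2/\sqrt N) \to 0$.

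For the second term I reuse the two-part decomposition \eqref{eq:m3q2}--\eqref{eq:m3q3}. The policy-mismatch part \eqref{eq:m3q2} compares the next extended action sampled from $\tilde\pi^\theta(\cdot \mid x^{0,N}_{t+1}, \mu^N_{t+1})$ against that sampled from $\tilde\pi^\theta(\cdot \mid x^{0,N}_{t+1}, \hat\mu^N_{t+1})$; it vanishes by the $L_{\tilde\pi}$-Lipschitzness of Assumption~\ref{ass:pg} together with $\E[d_\Sigma(\mu^N_{t+1}, \hat\mu^N_{t+1})] \leq 2/\sqrt N$ inherited from the first term. The reduction part \eqref{eq:m3q3} defines $g(x^0, u^0, \xi, \mu) \coloneqq \int\!\int f(x', u', \xi', T^*)\, \tilde\pi^\theta(\mathrm du', \mathrm d\xi' \mid x', T^*)\, p^0(\mathrm dx' \mid x^0, u^0, \mu)$ with $T^* \coloneqq T(x^0, u^0, \mu, \mu \otimes \Gamma(\xi))$, and closes the induction by applying the hypothesis to the class $\mathcal G$ of such functions. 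To certify that $\mathcal G$ is equi-Lipschitz I would prove the $\xi$-version of Lemma~\ref{lem:app:Tlip}, namely that $(x^0, u^0, \xi, \mu) \mapsto T(x^0, u^0, \mu, \mu \otimes \Gamma(\xi))$ is Lipschitz; the only new ingredient is Lipschitzness of $(\xi, \mu) \mapsto \mu \otimes \Gamma(\xi)$, obtained by adding and subtracting $\mu_* \otimes \Gamma(\xi)$ and bounding the resulting terms by $L_\Gamma\, d(\xi_*, \xi)$ and by the $L_\Pi$-Lipschitz estimate \eqref{eq:lipschitz-fpi}, respectively. The analog of Lemma~\ref{lem:app:glip} then yields equi-Lipschitzness of $\mathcal G$ with a constant assembled from $L_{\mathcal F}, L_T, L_{\tilde\pi}, L_{p^0}$ under Assumptions~\ref{ass:m3pcont} and \ref{ass:m3picont}.

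The main obstacle is conceptual rather than computational: one must treat the sampled $\xi^N_t$ as \emph{common noise} shared by all minor agents. This is what forces $\xi^N_t$ into the conditioning set $\beta_t$ so that conditional independence — hence the LLN variance bound — survives, and it is what forces the inductive claim to be strengthened to track $\xi$, since the reduction function $g$ genuinely depends on $\Gamma(\xi)$. The saving grace is that $(u^0_t, \xi_t)$ is drawn from the identical Lipschitz policy $\tilde\pi^\theta(\cdot \mid x^0_t, \mu_t)$ in both the finite and limiting systems, so that convergence of $(x^0_t, \mu_t)$ automatically propagates to the full tuple $(x^0_t, u^0_t, \xi_t, \mu_t)$, with uniformity over $\theta$ inherited from the $\theta$-uniform Lipschitz bounds of Assumption~\ref{ass:pg}.
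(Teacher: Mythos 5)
Your proposal is correct and takes essentially the same approach as the paper: the paper's proof of Proposition~\ref{prop:Nconv} is a one-line reference stating that the argument of Theorem~\ref{thm:m3muconv} carries over verbatim once one uses the Lipschitz constant of the map $(x^0_t, u^0_t, \mu_t, \xi_t) \mapsto T(x^0_t, u^0_t, \mu_t, \mu_t \otimes \Gamma(\xi_t))$ supplied by Assumption~\ref{ass:pg}. Your fleshed-out induction --- treating $\xi^N_t$ as common noise that must enter the conditioning set $\beta_t$, strengthening the inductive claim to track $\xi$, and proving the $\xi$-versions of Lemmas~\ref{lem:app:Tlip} and \ref{lem:app:glip} --- is exactly the detail that the paper's reference leaves implicit.
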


    \begin{proposition} \label{prop:Qconv}
        Under Assumptions~\ref{ass:m3pcont} and \ref{ass:m3picont}, the approximate action-values converge uniformly, $\widehat Q^\theta \to Q^\theta$ as $N \to \infty$.
    \end{proposition}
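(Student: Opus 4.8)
The plan is to treat $Q^\theta$ and $\widehat Q^\theta$ as the objective functionals of the limiting M3FC MDP \eqref{eq:m3fc} and of the finite M3FC system \eqref{eq:m3mdp}, respectively, but \emph{re-initialized} at the conditioned state-action $(x^0, \mu, u^0, \xi)$: the major agent starts in $x^0$ and plays $u^0$, the minor decision rule is fixed to $\pi'_0 = \Gamma(\xi)$, and the $N$ minor agents are sampled conditionally i.i.d.\ from $\mu$ so that $\mu^N_0 \to \mu$ by the weak LLN. Under this identification, the difference telescopes into a discounted sum of reward-expectation gaps, $Q^\theta(x^0, \mu, u^0, \xi) - \widehat Q^\theta(x^0, \mu, u^0, \xi) = \sum_{t=0}^\infty \gamma^t \left( \E[r(x^0_t, u^0_t, \mu_t)] - \E[r(x^{0,N}_t, u^{0,N}_t, \mu^N_t)] \right)$, so that $\widehat Q^\theta \to Q^\theta$ is exactly a re-initialized instance of the objective convergence already established in Lemma~\ref{lem:Jconv} and Corollary~\ref{coro:m3epsopt}, and I would reuse that argument essentially verbatim.

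Concretely, I would fix $\varepsilon > 0$ and split the horizon at a time $T$ chosen so that the discounted tail obeys $\sum_{t \geq T} \gamma^t \left| \E[r(x^0_t, u^0_t, \mu_t) - r(x^{0,N}_t, u^{0,N}_t, \mu^N_t)] \right| \leq \frac{\gamma^T}{1-\gamma} \cdot 2 \max |r| < \frac{\varepsilon}{2}$, a bound that is uniform in $(x^0, \mu, u^0, \xi)$ and $\theta$ by boundedness of $r$ (Assumption~\ref{ass:m3pcont}) and compactness. For the finite head $\sum_{t=0}^{T-1} \gamma^t \left| \E[r(x^0_t, u^0_t, \mu_t) - r(x^{0,N}_t, u^{0,N}_t, \mu^N_t)] \right|$, I would invoke propagation of chaos (Proposition~\ref{prop:Nconv}) applied to the single Lipschitz reward $r$, which forms a (trivially) equi-Lipschitz family by Assumption~\ref{ass:m3pcont}; this drives each of the finitely many terms below $\varepsilon/(2T)$ for $N$ large, hence the head below $\varepsilon/2$, and combining the two pieces gives the claim.

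The one point that genuinely requires care — and the main obstacle — is uniformity over the \emph{continuum} of initial conditions $(x^0, \mu, u^0, \xi)$ together with the parameter $\theta$, which is precisely what distinguishes this statement from Lemma~\ref{lem:Jconv}, where the initial law $\mu_0$ is held fixed. I would settle it by re-examining the induction underlying Theorem~\ref{thm:m3muconv} and Proposition~\ref{prop:Nconv}: its base case at $t=0$ relies only on the weak LLN for minor states drawn conditionally i.i.d., whose rate bound $\tilde\omega_{\mathcal F}(2/\sqrt N)$ (resp.\ $L_r |\mathcal X| \cdot 2/\sqrt N$ for finite $\mathcal X$) stems solely from $|f_m| \leq 1$ and is therefore independent of the starting measure $\mu$. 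Uniformity over $\theta$ is inherited from Assumption~\ref{ass:pg}, since $\Gamma$ returns $L_\Pi$-Lipschitz minor decision rules and $\tilde\pi^\theta$ is $L_{\tilde\pi}$-Lipschitz uniformly in $\theta$, so the equi-Lipschitz hypotheses of Theorem~\ref{thm:m3muconv} hold along the entire parameter family simultaneously. Finally, because the fixed first action $(u^0, \xi)$ merely replaces the $t=0$ sampling step while preserving the conditional independence of the minor transitions, the inductive bound propagates unchanged, yielding the uniform convergence $\widehat Q^\theta \to Q^\theta$ as $N \to \infty$.
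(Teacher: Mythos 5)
Your proposal is correct and follows essentially the same route as the paper's proof: both reduce $\widehat Q^\theta \to Q^\theta$ to a re-initialized propagation-of-chaos statement for the system conditioned on $(x^0,\mu,u^0,\xi)$, split the discounted sum into a tail (bounded via $\gamma^T/(1-\gamma)$ and boundedness of $r$) and a finite head handled by the time-$t$ induction with weak-LLN base case and Lipschitz parameterized dynamics via Assumptions~\ref{ass:m3pcont}, \ref{ass:m3picont} and \ref{ass:pg}. Your explicit observation that the LLN rate bounds are independent of the initial condition is in fact a slightly more careful justification of uniformity than the paper's appeal to pointwise convergence on a compact domain, and your recognition that Proposition~\ref{prop:Nconv} must be re-derived for the conditioned system (rather than invoked verbatim) matches exactly what the paper's proof does.
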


    As a result, we obtain
    \begin{align*}
        &\left\Vert \widehat{\nabla_\theta J}(\tilde \pi^\theta) - \nabla_\theta J(\hat \pi^\theta) \right\Vert \\
        &= \left\Vert \sum_{t=0}^{\infty} \gamma^t \E \left[ \widehat Q^\theta(x^{0,N}_t, \mu^N_t, u^{0,N}_t, \xi^N_t) \nabla_\theta \log \tilde \pi^\theta(u^{0,N}_t, \xi^N_t \mid x^{0,N}_t, \mu^N_t) - Q^\theta(x^0_t, \mu_t, u^0_t, \xi_t) \nabla_\theta \log \tilde \pi^\theta(u^0_t, \xi_t \mid x^0_t, \mu_t) \right] \right\Vert \\
        &\leq \left\Vert \sum_{t=0}^{\infty} \gamma^t \E \left[ \left( \widehat Q^\theta(x^{0,N}_t, \mu^N_t, u^{0,N}_t, \xi^N_t) - Q^\theta(x^{0,N}_t, \mu^N_t, u^{0,N}_t, \xi^N_t) \right) \nabla_\theta \log \tilde \pi^\theta(u^{0,N}_t, \xi^N_t \mid x^{0,N}_t, \mu^N_t) \right] \right\Vert \\
        &+ \left\Vert \sum_{t=T}^{\infty} \gamma^t \E \left[ Q^\theta(x^{0,N}_t, \mu^N_t, u^{0,N}_t, \xi^N_t) \nabla_\theta \log \tilde \pi^\theta(u^{0,N}_t, \xi^N_t \mid x^{0,N}_t, \mu^N_t) - Q^\theta(x^0_t, \mu_t, u^0_t, \xi_t) \nabla_\theta \log \tilde \pi^\theta(u^0_t, \xi_t \mid x^0_t, \mu_t) \right] \right\Vert \\
        &+ \left\Vert \sum_{t=0}^{T-1} \gamma^t \E \left[ Q^\theta(x^{0,N}_t, \mu^N_t, u^{0,N}_t, \xi^N_t) \nabla_\theta \log \tilde \pi^\theta(u^{0,N}_t, \xi^N_t \mid x^{0,N}_t, \mu^N_t) - Q^\theta(x^0_t, \mu_t, u^0_t, \xi_t) \nabla_\theta \log \tilde \pi^\theta(u^0_t, \xi_t \mid x^0_t, \mu_t) \right] \right\Vert
    \end{align*}
    for any $T$, such that the first term disappears by Assumption~\ref{ass:pg} uniformly bounding $\nabla_\theta \log \tilde \pi^\theta$ and Proposition~\ref{prop:Qconv}. Note that we bounded $\nabla_\theta \log \tilde \pi^\theta$ here, but we can also assume bounded gradients $\nabla_\theta \tilde \pi^\theta$ instead, e.g. \eqref{eq:baseint}.

    For the second term, we similarly uniformly bound $\nabla_\theta \log \tilde \pi^\theta$ by Assumption~\ref{ass:pg} and $Q$ by Assumption~\ref{ass:m3pcont}, then choose $T$ sufficiently large.

    Finally, for the last term, we note that we can write the difference as 
    \begin{align*}
        &\left\Vert \sum_{t=0}^{T-1} \gamma^t \E \left[ Q^\theta(x^{0,N}_t, \mu^N_t, u^{0,N}_t, \xi^N_t) \nabla_\theta \log \tilde \pi^\theta(u^{0,N}_t, \xi^N_t \mid x^{0,N}_t, \mu^N_t) - Q^\theta(x^0_t, \mu_t, u^0_t, \xi_t) \nabla_\theta \log \tilde \pi^\theta(u^0_t, \xi_t \mid x^0_t, \mu_t) \right] \right\Vert \\
        &= \left\Vert \sum_{t=0}^{T-1} \gamma^t \E \left[ \sum_{t'=0}^{\infty} \gamma^t \E \left[ r(x^{0\prime}_{t'}, u^{0\prime}_{t'}, \mu^{\prime}_{t'}) \innermid x^{0\prime}_0 = x^{0,N}_t, \mu'_0 = \mu^N_t, u^{0\prime}_0 = u^{0,N}_t, \xi^{\prime}_0 = \xi^N_t \right] \nabla_\theta \log \tilde \pi^\theta(u^{0,N}_t, \xi^N_t \mid x^{0,N}_t, \mu^N_t) 
        \right.\right.\nonumber\\&\hspace{2cm}\left.\left.
        - \sum_{t=0}^{\infty} \gamma^t \E \left[ r(x^{0\prime}_{t'}, u^{0\prime}_{t'}, \mu'_{t'}) \innermid x^{0\prime}_0 = x^0_t, \mu'_0 = \mu_t, u^{0\prime}_0 = u^0_t, \xi'_0 = \xi_t \right] \nabla_\theta \log \tilde \pi^\theta(u^0_t, \xi_t \mid x^0_t, \mu_t) \right] \right\Vert \\
        &\leq \left\Vert \sum_{t=0}^{T-1} \gamma^t \E \left[ \sum_{t'=T'}^{\infty} \gamma^t \E \left[ r(x^{0\prime}_{t'}, u^{0\prime}_{t'}, \mu^{\prime}_{t'}) \innermid x^{0\prime}_0 = x^{0,N}_t, \mu'_0 = \mu^N_t, u^{0\prime}_0 = u^{0,N}_t, \xi^{\prime}_0 = \xi^N_t \right] \nabla_\theta \log \tilde \pi^\theta(u^{0,N}_t, \xi^N_t \mid x^{0,N}_t, \mu^N_t) 
        \right.\right.\nonumber\\&\hspace{2cm}\left.\left.
        - \sum_{t=T'}^{\infty} \gamma^t \E \left[ r(x^{0\prime}_{t'}, u^{0\prime}_{t'}, \mu'_{t'}) \innermid x^{0\prime}_0 = x^0_t, \mu'_0 = \mu_t, u^{0\prime}_0 = u^0_t, \xi'_0 = \xi_t \right] \nabla_\theta \log \tilde \pi^\theta(u^0_t, \xi_t \mid x^0_t, \mu_t) \right] \right\Vert \\
        &+ \left\Vert \sum_{t=0}^{T-1} \gamma^t \E \left[ \sum_{t'=0}^{T'-1} \gamma^t \E \left[ r(x^{0\prime}_{t'}, u^{0\prime}_{t'}, \mu^{\prime}_{t'}) \innermid x^{0\prime}_0 = x^{0,N}_t, \mu'_0 = \mu^N_t, u^{0\prime}_0 = u^{0,N}_t, \xi^{\prime}_0 = \xi^N_t \right] \nabla_\theta \log \tilde \pi^\theta(u^{0,N}_t, \xi^N_t \mid x^{0,N}_t, \mu^N_t) 
        \right.\right.\nonumber\\&\hspace{2cm}\left.\left.
        - \sum_{t=0}^{T'-1} \gamma^t \E \left[ r(x^{0\prime}_{t'}, u^{0\prime}_{t'}, \mu'_{t'}) \innermid x^{0\prime}_0 = x^0_t, \mu'_0 = \mu_t, u^{0\prime}_0 = u^0_t, \xi'_0 = \xi_t \right] \nabla_\theta \log \tilde \pi^\theta(u^0_t, \xi_t \mid x^0_t, \mu_t) \right] \right\Vert
    \end{align*}
    where we write the conditional M3FC system and random variables in the inner expectation with a prime, bounding again the former terms by choosing sufficiently large $T'$ and using Assumptions~\ref{ass:m3pcont} and \ref{ass:pg}, while for the latter terms we use Proposition~\ref{prop:Nconv} on the functions
    \begin{align} \label{eq:baseint}
        f(x^0, \mu) = \iint \E \left[ r(x^{0\prime}_{t'}, u^{0\prime}_{t'}, \mu^{\prime}_{t'}) \innermid x^{0\prime}_0 = x^0, \mu_0 = \mu, u^{0\prime}_0 = u^0, \xi^{\prime}_0 = \xi \right] \nabla_\theta \tilde \pi^\theta(u^0, \xi \mid x^0, \mu) \mathrm d(u^0, \xi)
    \end{align}
    for all $t'$, which are uniformly Lipschitz by Assumptions~\ref{ass:m3pcont} and \ref{ass:pg}. This completes the proof.
\end{proof}

\section{Proof of Proposition~\ref{prop:Nconv}}
\begin{proof}
The proof is exactly analogous to the proof of Theorem~\ref{thm:m3muconv}, except that instead of using Lipschitz constants of $x^0_t, u^0_{t}, \mu_t, h_t \mapsto T(x^0_t, u^0_{t}, \mu_t, h_t)$, one uses Lipschitz constants of $x^0_t, u^0_{t}, \mu_t, \xi_t \mapsto T(x^0_t, u^0_{t}, \mu_t, \mu_t \otimes \Gamma(\xi_t))$ via the additional Assumption~\ref{ass:pg} on top of Assumptions~\ref{ass:m3pcont} and \ref{ass:m3picont}.
\end{proof}

\section{Proof of Proposition~\ref{prop:Qconv}} \label{app:endp}
\begin{proof}
To show $\widehat Q^\theta \to Q^\theta$ as $N \to \infty$ uniformly, it suffices to prove pointwise convergence due to compact support.

Therefore, fix any $x^0, \mu, u^0, \xi$. The convergence follows as in Corollary~\ref{coro:m3epsopt}, from showing at any time $t$ that
\begin{align*}
    &\sup_{f \in \mathcal F} \left| \E \left[ f(x^0_t, u^0_t, \mu_t) \innermid x^0_0 = x^0, \mu_0 = \mu, u^0_0 = u^0, \xi_0 = \xi \right] 
    \right.\\&\hspace{3cm}\left.
    - \E \left[ f(x^{0,N}_t, u^{0,N}_t, \mu^N_t) \innermid x^{0,N}_0 = x^{0,N}, \mu_0 = \mu, u^{0,N}_0 = u^0, \xi^N_0 = \xi \right] \right| \to 0
\end{align*}
over any equi-Lipschitz family of functions $\mathcal F$, and applying for $f=r$ (using the set $\mathcal F$ of $L_r$-Lipschitz functions) by Assumption~\ref{ass:m3pcont}.

The statement is shown by considering time $t=0$, and then by induction for any $t \geq 1$. At time $t=0$, the statement follows from the weak LLN as in Theorem~\ref{thm:m3muconv}. 
For any subsequent times, we similarly have
\begin{align*}
    &\sup_{f \in \mathcal F} \left| \E \left[ f(x^0_{t+1}, u^0_{t+1}, \mu_{t+1}) \innermid x^0_0 = x^0, \mu_0 = \mu, u^0_0 = u^0, \xi_0 = \xi \right] 
    \right.\\&\hspace{2cm}\left.
    - \E \left[ f(x^{0,N}_{t+1}, u^{0,N}_{t+1}, \mu^N_{t+1}) \innermid x^{0,N}_0 = x^{0,N}, \mu_0 = \mu, u^{0,N}_0 = u^0, \xi^N_0 = \xi \right] \right| \\
    &\quad \leq \sup_{f \in \mathcal F} \left| \E \left[ f(x^0_{t+1}, u^0_{t+1}, \mu_{t+1}) \innermid x^0_0 = x^0, \mu_0 = \mu, u^0_0 = u^0, \xi_0 = \xi \right] 
    \right.\\&\hspace{2cm}\left.
    - \E \left[ f(x^{0,N}_{t+1}, u^{0,N}_{t+1}, T(x^{0,N}_t, u^{0,N}_t, \mu^N_t, \mu^N_t \otimes \Gamma(\xi^N_t))) \innermid x^{0,N}_0 = x^{0,N}, \mu_0 = \mu, u^{0,N}_0 = u^0, \xi^N_0 = \xi \right] \right| \\
    &\qquad + \sup_{f \in \mathcal F} \left| \E \left[ f(x^{0,N}_{t+1}, u^{0,N}_{t+1}, T(x^{0,N}_t, u^{0,N}_t, \mu^N_t, \mu^N_t \otimes \Gamma(\xi^N_t))) \innermid x^{0,N}_0 = x^{0,N}, \mu_0 = \mu, u^{0,N}_0 = u^0, \xi^N_0 = \xi \right]
    \right.\\&\hspace{2cm}\left.
    - \E \left[ f(x^{0,N}_{t+1}, u^{0,N}_{t+1}, \mu^N_{t+1}) \innermid x^{0,N}_0 = x^{0,N}, \mu_0 = \mu, u^{0,N}_0 = u^0, \xi^N_0 = \xi \right] \right|.
\end{align*}
As in Theorem~\ref{thm:m3muconv}, the latter term is bounded by induction assumption, using uniform Lipschitzness of the dynamics, $x^0_t, u^0_{t}, \mu_t, \xi_t \mapsto T(x^0_t, u^0_{t}, \mu_t, \mu_t \otimes \Gamma(\xi_t))$ via Assumptions~\ref{ass:m3picont} and \ref{ass:pg}, while the former term is bounded as usual by the weak LLN. This completes the proof.
\end{proof}

\section{Extended MFC Optimalities} \label{app:moreopt}
Intuitively, in large MF systems governed by dynamics of the form \eqref{eq:m3mdp}, almost all information of the joint state $(x^{0,N}_t, x^{1,N}_t, \ldots, x^{N,N}_t)$ is contained in $(x^{0,N}_t, \mu^N_t)$, while heterogeneous policies should by LLN be replaceable by a shared one. To fully complete the theory of MFC, it is therefore interesting to establish the optimality of the considered MF policies over arbitrary other policies acting on the joint state $(x^{0,N}_t, x^{1,N}_t, \ldots, x^{N,N}_t)$. 

It seems plausible that it would be possible to extend optimality (Corollary~\ref{coro:m3epsopt}) over larger classes of policies in the finite system. In particular, at least for finite state-action spaces, (i) any joint-state policy $\pi(\mathrm du \mid x^{0,N}_t, x^{1,N}_t, \ldots, x^{N,N}_t)$ might in the limit be replaced by an averaged policy $\bar \pi(\mathrm du \mid x^0, \mu) \coloneqq \sum_{x^N \in \mathcal X^N \colon \frac 1 N \sum_i \delta_{x^{i,N}} = \mu} \pi(\mathrm du \mid x^0, x^N)$ under some exchangeability of agents; (ii) any optimal policy $\pi$ outputting joint actions for all agents might be replaced by an independent but identical policy for each agent, as in the limit all information is contained in the joint state-action distribution, any of which may be approximated increasingly closely by LLN; and (iii) heterogeneous policies for each minor agent $\pi^1, \ldots, \pi^N$ might similarly be replaced by some averaged policy $\bar \pi(\pi^1, \ldots, \pi^N)$, averaging the action distributions in any specific state over the proportion of agent likelihoods in that state. 

Showing such results would allow us to conclude that the policy classes $\Pi$ are natural and sufficient in MF systems, including MFC and also the competitive MFGs, as more general or heterogeneous policies will not perform much better. A result related to (iii) has been shown for static cases \citep{sanjari2020optimal, cui2021discrete} and more recently in MFC and its two-team generalizations \citep{guan2024zero}.

\section{Experimental Details} \label{app:exp}
In this section, we give lengthy experimental details that were omitted in the main text.

\begin{table}
    \centering
    \caption{Shared hyperparameter configurations for all algorithms.}
    \vspace{0.3cm}
    \label{tab:hyperparams}
    \begin{tabular}{@{}ccc@{}}
    \toprule
    Symbol     & Name          & Value     \\ \midrule
    $\gamma$ &   Discount factor &  $0.99$\\
    $\lambda$ &   GAE lambda &  $1$\\
    $\beta$ &   KL coefficient & $0.03$ \\
    $\epsilon$ &  Clip parameter & $0.2$ \\
    $l_{r}$ &   Learning rate & $0.00005$ \\
    $B_{\mathrm{len}}$ &  Training batch size &  $24000$ \\
    $b_{\mathrm{len}}$ &  Mini-batch size &  $4000$ \\
    $N_{\mathrm{SGD}}$ &  Gradient steps per training batch & $8$ \\ \bottomrule
    \end{tabular}
\end{table}

\subsection{Problem Details}
In this section, we give details to the problems considered in this work. We omit the superscript $N$ for readability.

\paragraph{2G.}
In the 2G problem, we formally let $\mathcal X = [-2, 2]^2$, $\mathcal U = [-1, 1]^2$, $\mathcal X^0 =\{0, 1, \ldots 49\}$ according to \eqref{eq:msmmdp}. We allow noisy movement of minor agents following the Gaussian law
\begin{align*}
    p(x^i_{t+1} \mid x^i_{t}, u^i_{t}) = \mathcal N \left( x^i_{t+1} \innermid x^i_{t} + v_{\mathrm{max}} \frac{u^i_{t}}{\max(1, \lVert u^i_{t} \rVert_2)}, \mathrm{diag}(\sigma^2, \sigma^2) \right)
\end{align*}
for some maximum speed $v_{\mathrm{max}} = 0.2$, noise covariance $\sigma^2 = 0.03$ and projecting back actions $u$ with norm larger than $1$, with the additional modification that agent positions are clipped back into $\mathcal X$ whenever the agents move out of bounds.

We then consider a time-variant mixture of two Gaussians
\begin{align*}
    \mu^*_t \coloneqq \frac{1 + \cos(2 \pi t / 50)}{2} \mathcal N \left( \mathbf e_1, \mathrm{diag}(\sigma_*^2, \sigma_*^2) \right) + \frac{1 - \cos(2 \pi t / 50)}{2} \mathcal N \left( -\mathbf e_1, \mathrm{diag}(\sigma_*^2, \sigma_*^2) \right)
\end{align*}
for unit vector $\mathbf e_1$ and covariance $\sigma_*^2 = 0.05$, i.e. we have a period of $50$ time steps, and let the major state follow the clock dynamics $p^0(x^0 + 1 \mod 50 \mid x^0, \mu) = 1$. 

The goal of minor agents is to minimize the Wasserstein metric $\hat W_1$ under the squared Euclidean distance,
\begin{align*}
    \hat W_1(\mu, \mu') \coloneqq \inf_{\gamma \in \Gamma(\mu, \mu')} \left\{ \int \lVert x - y \rVert_2^2 \gamma(\mathrm dx, \mathrm dy) \right\}
\end{align*}
defined over all couplings $\Gamma(\mu, \mu')$ with first and second marginals $\mu$, $\mu'$ (which is strictly speaking not a metric but an optimal transportation cost, since the squared Euclidean distance fails the triangle inequality), between their empirical distribution and the desired mixture of Gaussians
\begin{align*}
    r(x^0_t, \mu_t) = - \hat W_1(\mu_t, \mu^*_t)
\end{align*}
which is computed numerically by the empirical distance, sampling $300$ samples from $\mu^*_t$.

The initialization of minor agents is uniform, i.e. $\mu_0 = \mathrm{Unif}(\mathcal X)$, and $x^0_0 = 0$. For sake of simulation, we define the episode length $T=100$ after which a new episode starts.

\paragraph{Formation.}
The Formation problem is an extension of the 2G problem, where instead $\mathcal X^0 = \mathcal X \times \mathcal X$ and $\mathcal U^0 = \mathcal U$, the major agent follows the same dynamics as the minor agents, and movements are noise-free, i.e. $\sigma^2 = 0$. The major agent state $x^0_t = (\hat x^0_t, x^*_t)$ here contains both the major agent position $\hat x^0_t$ and its target position $x^*_t$. The desired minor agent distribution is centered around the major agent
\begin{align*}
    \mu^*_t \coloneqq \mathcal N \left( \hat x^0_t, \mathrm{diag}(\sigma_*^2, \sigma_*^2) \right)
\end{align*}
with covariance $\sigma_*^2 = 0.3$, and is also observed by agents as in 2G via binning. Additionally, the major agent should follow a random target $x^*_t$ following discretized Ornstein-Uhlenbeck dynamics
\begin{align*}
    x^*_{t+1} \sim \mathcal N \left( 0.95 x^*_t, \mathrm{diag}(\sigma_{\mathrm{targ}}^2, \sigma_{\mathrm{targ}}^2) \right)
\end{align*}
with $\sigma_{\mathrm{targ}}^2 = 0.02$. Thus, similar to 2G, the reward function becomes
\begin{align*}
    r(x^0_t, u^0_t, \mu_t) = - \lVert \hat x^0_t - x^*_t \rVert_2 - \hat W_1(\mu_t, \mu^*_t).
\end{align*}

The initialization of agents is uniform, while the target starts around zero, i.e. $\mu_0 = \mathrm{Unif}(\mathcal X)$ and $\mu^0_0 = \mathrm{Unif}(\mathcal X) \otimes \mathcal N \left( 0, \mathrm{diag}(\sigma_{\mathrm{targ}}^2, \sigma_{\mathrm{targ}}^2) \right)$. For sake of simulation, we define the episode length $T=100$ after which a new episode starts.

\paragraph{Beach Bar Process.}
In the discrete beach bar process, we consider a discrete torus $\mathcal X = \{0, 1, \ldots, 4\}^2$, $\mathcal X^0 = \mathcal X \times \mathcal X$ and actions $\mathcal U = \mathcal U^0 = \{(0,0), (-1,0), (0,-1), (1,0), (0,1)\}$ indicating movement in any of the four cardinal directions. The major agent state $x^0_t = (\hat x^0_t, x^*_t)$ here contains both the major agent position $\hat x^0_t$ and its target position $x^*_t$. In other words, the dynamics follow
\begin{align*}
    \hat x^0_{t+1} = \hat x^0_{t} + u^0_{t} \mod (5,5), \quad
    x^i_{t+1} = x^i_{t} + u^i_{t} \mod (5,5).
\end{align*}
The target position follows a random walk on the torus
\begin{align*}
    x^*_{t+1} \sim x^*_t + \epsilon_t \mathrm{Unif}((-1,0), (0,-1), (1,0), (0,1)) \mod (5,5)
\end{align*}
with walking probability $\epsilon_t \sim \mathrm{Bernoulli}(0.2)$, uniformly in any direction.

The costs are then given by the average toroidal distance $d$ (the $L_1$ ``wrap-around'' distance on the torus) between the major agent and its target, the average distance between major and minor agents, and the crowdedness of agents
\begin{align*}
    r(x^0_t, u^0_t, \mu_t) = - 0.5 d(x^0_t, x^*_t) - 2.5 \int d(x, x^0_t) \mu_t(\mathrm dx) - 6.25 \int \mu_t(x) \mu_t(\mathrm dx).
\end{align*}
The initialization of agents is uniform, while the target starts at zero, i.e. $\mu_0 = \mathrm{Unif}(\mathcal X)$ and $\mu^0_0 = \mathrm{Unif}(\mathcal X) \otimes \delta_{(0,0)}$. For sake of simulation, we define the episode length $T=200$ after which a new episode starts.

For the neural network policy, we use a one-hot encoding of major states as input, i.e. the concatenation of two $5$-dimensional one-hot vectors for the major agent position $\hat x^0_t$ and its target position $x^*_t$ respectively.

\paragraph{Foraging.}
In the Foraging problem, we formally define $\mathcal X = [-2, 2]^2 \times [0, 1]$, $\mathcal U = [-1, 1]^2 = \mathcal U^0$ and $\mathcal X^0 = ([-2, 2] \times [-2, -1]) \times \bigcup_{n=0}^5 \left( [-2, 2]^2 \times [0, 1.5] \right)^n$. The minor agent states $x^i_t = (\hat x^i_t, \tilde x^i_t)$ here contain their positions $\hat x^i_t \in [-2, 2]^2$ and encumbrance (or inversely, free cargo space) $\hat x^i_t \in [0, 1]$. Meanwhile, the major agent state $x^0_t = (\hat x^0_t, x^{\mathrm{env}}_t)$ here contains both the major agent position $\hat x^0_t$ restricted to $[-2, 2] \times [-2, -1]$, and the current environment state $x^{\mathrm{env}}_t$. Here, the minor and major agents move as in Formation, though with different maximum velocities for minor agents $v_{\mathrm{max}} = 0.3$ and major agent $v^0_{\mathrm{max}} = 0.1$ respectively. 

An additional environmental state consists of up to $5$ spatially localized foraging areas, which is not observed by the agents. In each time step, $N_t = \mathrm{Pois}(0.2)$ new foraging areas appear, up to a maximum total number of $5$. The location $x^m_t$ of each foraging area $m=1,\ldots,5$ is sampled uniformly randomly from $\mathrm{Unif}(\mathcal X)$, while their total initial size $L^m_t$ is sampled from $\mathrm{Unif}([0.5, 1.5])$, making up the environment state $x^{\mathrm{env}}_t = (x^m_t, L^m_t)_m$. At every time step, the foraging areas $m$ are depleted by nearby agents closer than range $0.5$,
\begin{align*}
    L^m_{t+1} &= L^m_t - \Delta L^m(\mu_t), \\
    \Delta L^m(\mu_t) &\coloneqq \min(L^m_{t+1} - L^m_t, \min(0.1, \int (0.5 - \lVert x - x^m_t \rVert_2)^+ \, \mu_t(\mathrm dx))
\end{align*}
where $(\cdot)^+ \coloneqq \max(0, \cdot)$, until they are fully depleted and disappear ($L^m_{t+1} \leq 0$).

Foraging minor agents simulate encumbrance, gaining it from nearby foraging areas and depositing to a nearby major agent, by splitting the foraged amount among all nearby minor agents according to their foraged contribution, and wasting any amount going beyond maximum encumbrance $1$,
\begin{align*}
    \tilde x^i_{t+1} =
    \begin{cases}
        \min(1, \tilde x^i_t + \Delta L^m(\mu_t) \cdot \frac{(0.5 - \lVert x - x^m_t \rVert_2)^+}{\int (0.5 - \lVert x - x^m_t \rVert_2)^+ \, \mu_t(\mathrm dx)}) \quad \text{if} \quad \lVert x^i_t - x^0_t \rVert_2 \geq 0.5, \\
        0 \quad \text{else.}
    \end{cases}
\end{align*}
The reward at each time step is then given by the according total foraged and then deposited amount by the minor agents, where any clipped amount is wasted.

The initialization of agents is uniform, while the environment starts empty, i.e. $\mu_0 = \mathrm{Unif}(\mathcal X)$ and $\mu^0_0 = \mathrm{Unif}(\mathcal X) \otimes \delta_{\emptyset}$. For sake of simulation, we define the episode length $T=200$ after which a new episode starts.

\paragraph{Potential.}
Lastly, in Potential we consider minor agents on a continuous one-dimensional torus $\mathcal X = [-2, 2]$ (where the points $-2$ and $2$ are identified), actions $\mathcal U = [-1, 1]$ and major state $\mathcal X^0 = \mathcal X \times \mathcal X$. The minor agents move as in Foraging (wrapping around the torus instead of clipping), while the major agent follows the gradient of the potential landscape generated by minor agents, with the goal of staying close to its current target. The major agent state $x^0_t = (\hat x^0_t, x^*_t)$ here contains both the major agent position $\hat x^0_t$ and its target position $x^*_t$. For simplicity, here we use a linear repulsive force decreasing from $\frac 1 N$ to $0$ over a range of $1$,
\begin{align*}
    \hat x^0_{t+1} = \hat x^0_t + \frac{1}{20} \sum_{x_{\mathrm{off}} \in \{-4, 0, 4\}} \int (1 - \lVert \hat x^0_t - x + x_{\mathrm{off}} \rVert_2)^+ \frac{\hat x^0_t - x + x_{\mathrm{off}}}{\lVert \hat x^0_t - x + x_{\mathrm{off}} \Vert_2} \mu_t(\mathrm dx) \mod [-2, 2]
\end{align*}
where we let terms $0/0=0$ and use the offset $x_{\mathrm{off}}$ to account for the wrap-around on the torus. 

The target follows the discretized Ornstein-Uhlenbeck process
\begin{align*}
    x^*_{t+1} \sim \mathcal N \left( 0.99 x^*_t, \mathrm{diag}(\sigma_{\mathrm{targ}}^2, \sigma_{\mathrm{targ}}^2) \right)
\end{align*}
with covariance $\sigma_{\mathrm{targ}}^2 = 0.005$, and gives rise to the reward function via the toroidal distance between target and major agent
\begin{align*}
    r(x^0_t, \mu_t) = - d(\hat x^0_t, x^*_t).
\end{align*}
The initialization of agents is uniform, while the target starts around zero, i.e. $\mu_0 = \mathrm{Unif}(\mathcal X)$ and $\mu^0_0 = \mathrm{Unif}(\mathcal X) \otimes \mathcal N \left( 0, \mathrm{diag}(\sigma_{\mathrm{targ}}^2, \sigma_{\mathrm{targ}}^2) \right)$. For sake of simulation, we define the episode length $T=100$ after which a new episode starts. In contrast to $M = 7^2 = 49$ in 2G, Formation and Foraging, here we use $M=7$ bins for the one-dimensional problem.

\begin{figure}
    \centering
    \includegraphics[width=0.99\linewidth]{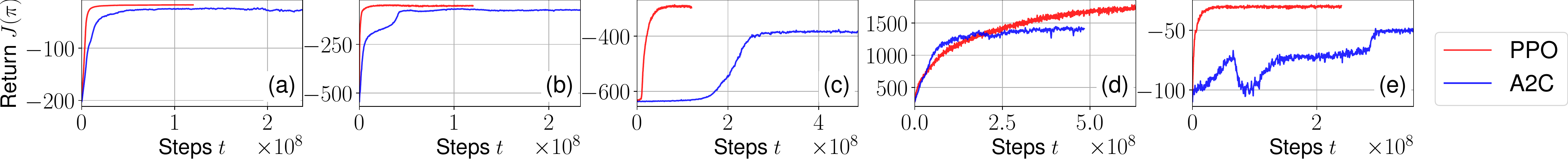}
    \caption{Training curves (mean episode return vs. time steps) of M3FPPO in red, compared to A2C in blue. (a) 2G; (b) Formation; (c) Beach; (d) Foraging; (e) Potential.}
    \label{fig:A2C}
\end{figure}

\begin{figure}
    \centering
    \includegraphics[width=0.47\linewidth]{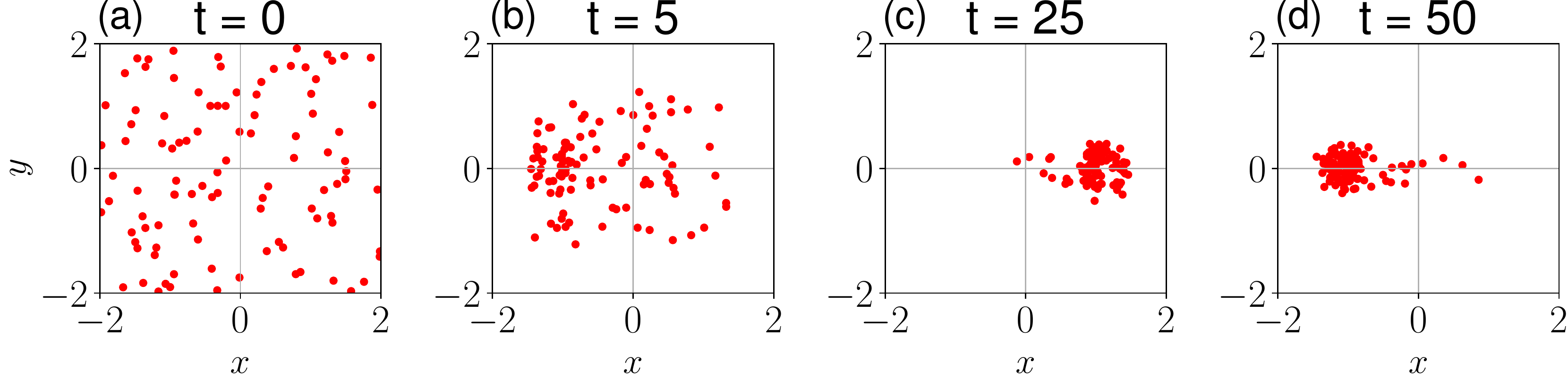}
    \includegraphics[width=0.47\linewidth]{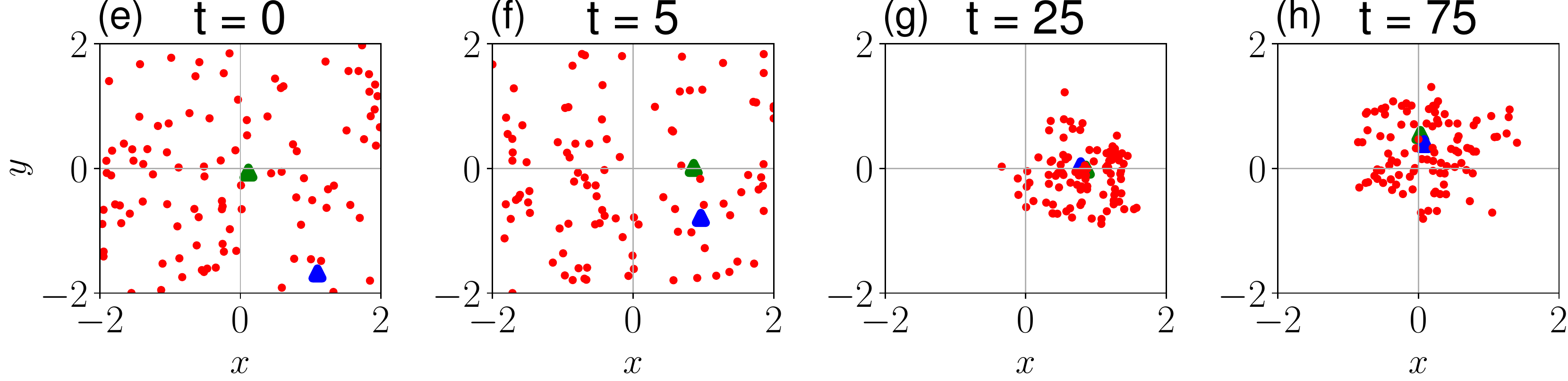}
    \includegraphics[width=0.47\linewidth]{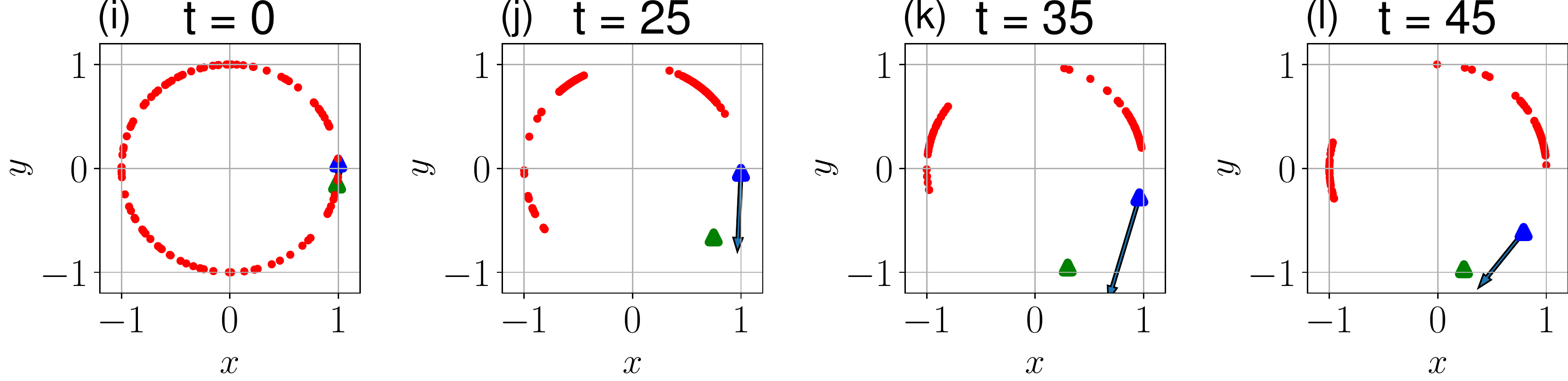}
    \caption{Qualitative visualization of learned M3FC behavior in the 2G (a-d), Formation (e-h) and Potential (i-l) problems. Red: minor agent; blue triangle: major agent; green triangle: major agent target. (i-l): As in (e-h), with arrow for potential gradient (not to scale).}
    \label{fig:qualitative1}
\end{figure}

\begin{figure}
    \centering
    \includegraphics[width=0.99\linewidth]{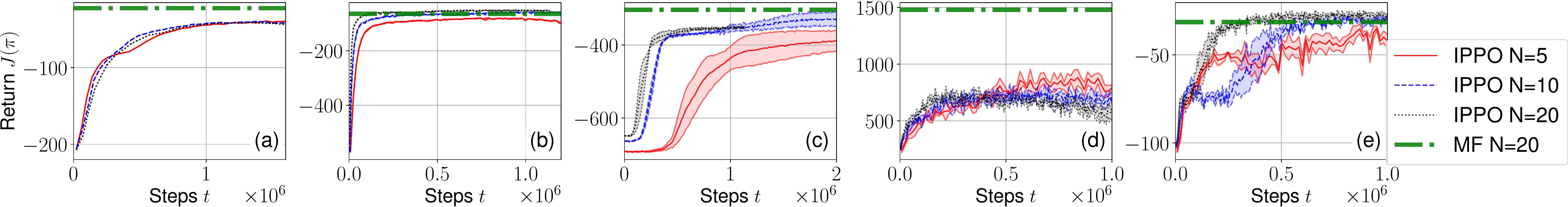}
    \caption{Training curves (mean episode return vs. time steps) of IPPO, trained on the systems with $N \in \{ 5, 10, 20 \}$. (a) 2G; (b) Formation; (c) Beach; (d) Foraging; (e) Potential.}
    \label{fig:ippo}
\end{figure}

\begin{figure}
    \centering
    \includegraphics[width=0.99\linewidth]{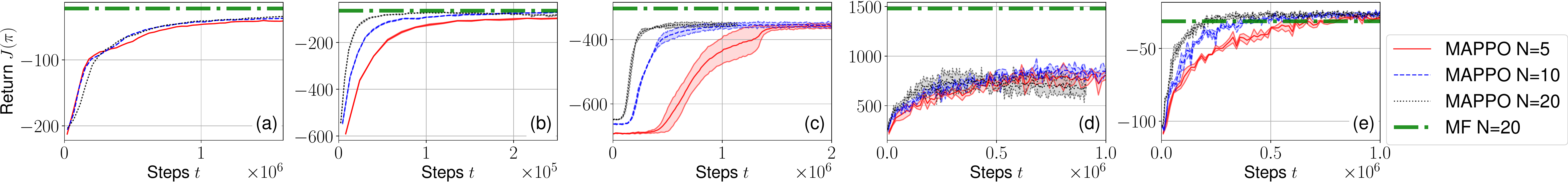}
    \caption{Training curves (mean episode return vs. time steps) of MAPPO, trained on the systems with $N \in \{ 5, 10, 20 \}$. (a) 2G; (b) Formation; (c) Beach; (d) Foraging; (e) Potential.}
    \label{fig:mappo}
\end{figure}

\subsection{Comparison to M3FA2C} \label{app:a2c}
In Figure~\ref{fig:A2C} we can see that vanilla M3FA2C typically performs worse than M3FPPO, getting stuck in worse local optima. Here, we used the same hyperparameters as in PPO. This validates our choice of PPO for M3FMARL.

\subsection{Qualitative results} \label{app:qual}
In Figure~\ref{fig:qualitative1}, M3FPPO successfully learns to form mixtures of Gaussians in 2G, and a Gaussian around a moving major agent that tracks its target in Formation. 
As expected in 2G, the two Gaussians at their sinusoidal peaks $t=25$ and $t=50$ are not perfectly tracked, in order to minimize the cost in following time steps, when the other Gaussian reappears. Finally, in Potential the minor agents succeed in pushing the major agent towards its target, while spreading on both sides of the major agent to be able to track any random movement of the target.

\subsection{Training M3FPPO, IPPO and MAPPO on smaller systems} \label{exp:extra2}
In Figure~\ref{fig:Ntrain} we verified the training of M3FPPO on small finite system. Comparing to Figures~\ref{fig:curvem3fc} and \ref{fig:Ncompare}, for M3FPPO we see little difference between training on a small finite-agent system versus training on a large system and applying the policy on the smaller system. For the chosen hyperparameters, the performance in the Potential problem depends on the initialization. However, M3FPPO compares especially favorably to IPPO in Beach and Foraging, even when directly training on the finite system. This shows that we can either (i) directly apply M3FPPO as a MARL algorithm to small systems, or (ii) train on a fixed system, and transfer the learned behavior to systems of almost arbitrary other sizes.

Analogously, in Figures~\ref{fig:ippo} and \ref{fig:mappo} we show the training results for around a day of IPPO and MAPPO for numbers of agents $N=5$, $N=10$ and $N=20$. As seen in the plot, the results for each number of agents is comparable to the analysis shown in the main text. In particular, transferring M3FPPO or comparing with Figure~\ref{fig:Ntrain}, we observe that M3FPPO continues to outperform or match the performance of IPPO and MAPPO, even in the setting with fewer agents.

%%%%%%%%%%%%%%%%%%%%%%%%%%%%%%%%%%%%%%%%%%%%%%%%%%%%%%%%%%%%%%%%%%%%%%%%%%%%%%%
%%%%%%%%%%%%%%%%%%%%%%%%%%%%%%%%%%%%%%%%%%%%%%%%%%%%%%%%%%%%%%%%%%%%%%%%%%%%%%%

\end{document}